\documentclass[11pt]{article}

\usepackage{fullpage,times,url,natbib}

\usepackage{amsthm,amsfonts,amsmath,amssymb,epsfig,color,float,graphicx,verbatim}
\usepackage{algorithm,algorithmic}

\usepackage{enumitem}

\usepackage{hyperref}
\hypersetup{
	colorlinks   = true, 
	urlcolor     = blue, 
	linkcolor    = blue, 
	citecolor   = blue 
}

\newtheorem{theorem}{Theorem}
\newtheorem{proposition}{Proposition}
\newtheorem{lemma}{Lemma}
\newtheorem{corollary}{Corollary}
\newtheorem{definition}{Definition}
\newtheorem{remark}{Remark}
\newtheorem{assumption}{Assumption}

\newcommand{\reals}{\mathbb{R}}
\newcommand{\E}{\mathbb{E}}

\newcommand{\bx}{\mathbf{x}}
\newcommand{\bw}{\mathbf{w}}

\newcommand{\bu}{\mathbf{u}}
\newcommand{\bv}{\mathbf{v}}
\newcommand{\bz}{\mathbf{z}}

\newcommand{\by}{\mathbf{y}}

\newcommand{\bs}{\mathbf{s}}
\newcommand{\br}{\mathbf{r}}

\newcommand{\balpha}{\boldsymbol{\alpha}}
\newcommand{\bbeta}{\boldsymbol{\beta}}

\newcommand{\Ocal}{\mathcal{O}}

\newcommand{\Dcal}{\mathcal{D}}

\newcommand{\Vcal}{\mathcal{V}}

\newcommand{\Ucal}{\mathcal{U}}

\newcommand{\norm}[1]{\|#1\|}

\newtheorem{example}{Example}

\newcommand{\secref}[1]{Sec.~\ref{#1}}

\newcommand{\figref}[1]{Fig.~\ref{#1}}
\renewcommand{\eqref}[1]{Eq.~(\ref{#1})}
\newcommand{\lemref}[1]{Lemma~\ref{#1}}
\newcommand{\corollaryref}[1]{Corollary~\ref{#1}}
\newcommand{\thmref}[1]{Thm.~\ref{#1}}
\newcommand{\propref}[1]{Proposition~\ref{#1}}

\title{The Implicit Bias of Benign Overfitting}
\author{Ohad Shamir \\ Weizmann Institute of Science \\ \texttt{ohad.shamir@weizmann.ac.il}}
\date{}

\begin{document}

\maketitle

\begin{abstract}%
	The phenomenon of benign overfitting, where a predictor perfectly fits noisy training data while attaining near-optimal expected loss, has received much attention in recent years, but still remains not fully understood beyond well-specified linear regression setups. In this paper, we provide several new results on when one can or cannot expect benign overfitting to occur, for both regression and classification tasks. We consider a prototypical and rather generic data model for benign overfitting of linear predictors, where an arbitrary input distribution of some fixed dimension $k$ is concatenated with a high-dimensional distribution. For linear regression which is not necessarily well-specified, we show that the minimum-norm interpolating predictor (that standard training methods converge to) is biased towards an inconsistent solution in general, hence benign overfitting will generally \emph{not} occur. Moreover, we show how this can be extended beyond standard linear regression, by an argument proving how the existence of benign overfitting on some regression problems precludes its existence on other regression problems. We then turn to classification problems, and show that the situation there is much more favorable. Specifically, we prove that the max-margin predictor (to which standard training methods are known to converge in direction) is asymptotically biased towards minimizing a weighted \emph{squared hinge loss}. This allows us to reduce the question of benign overfitting in classification to the simpler question of whether this loss is a good surrogate for the misclassification error, and use it to show benign overfitting in some new settings. 
\end{abstract}

\section{Introduction}

The ability of learning algorithms to succeed despite overfitting is a curious phenomenon in statistical learning, which has received much interest in the past few years. A particular version of it, commonly denoted as ``benign overfitting'' \citep{bartlett2020benign}, refers to situations which combine the following: (1) The trained predictor interpolates the data, in the sense that it achieves perfect prediction accuracy on the training data; (2) No predictor in the relevant hypothesis class can achieve perfect accuracy w.r.t. the underlying data distribution; yet (3) The trained predictor has near-optimal accuracy w.r.t. the underlying data distribution. The combination of $(1)$ and $(2)$ implies that the predictor overfits (in the sense that the training error is significantly smaller than the test error), and $(3)$ implies that this overfitting is ``benign'' (in the simple sense that the predictor still performs well). This phenomenon is intriguing, because some version of it appears to occur frequently in large-scale learning problems, yet it cannot be easily explained using standard learning theoretic tools such as uniform convergence (which requires the performance on the training data and the underlying distribution to be similar). This has led to a flurry of papers in the past few years, attempting to understand why and when benign overfitting occurs, and whether uniform convergence can or cannot explain its occurence (see discussion of related work below). 

So far, most of the theoretical work on benign overfitting has focused on linear (or kernel) regression problems using the square loss, with some works extending this to classification problems. The relatively most well-understood situation is plain linear regression in a well-specified setting, where we are training a linear predictor $\bx\mapsto \bx^\top \bw$, $\bw\in \reals^d$ with respect to the square loss, and when the outputs $y$ satisfy $y=\bx^\top \bw^*+\xi$ for some $\bw^*\in \reals^d$, where $\xi$ is zero-mean noise. In this setting, we know that benign overfitting occurs (roughly speaking) whenever the dimension of the input $\bx$ is sufficiently large, and the distribution has many directions of small (but non-zero) variance that the trained predictor can utilize to perfectly fit the training data, without significantly affecting the distribution of the predictions on new samples. A helpful feature of this setting is that there is a closed-form expression for the predictor returned by standard gradient-based  methods, when trained to convergence on the average square loss -- namely, the minimum-norm interpolating predictor. For classification, the situation is more complicated, because the predictor that standard gradient-based methods converge to with appropriate losses (namely, the max-margin predictor) does not have a closed-form expression in general. Thus, most recent works on classification focused on more specific setups (as discussed in the related work section below).

In this paper, we provide several new results on benign overfitting, which can be used to analyze when benign overfitting may -- or may not -- occur in settings beyond those studied so far in the literature. We focus on a prototypical data model for benign overfitting, where for some fixed integer $k$, the input distribution in $\reals^d$ is composed of an arbitrary distribution on the first $k$ coordinates, and a \emph{high-dimensional} distribution on the last $d-k$ coordinates (in the sense that i.i.d. samples from that distribution are approximately orthogonal). As we discuss later on, such a setting is essentially necessary for benign overfitting to occur, even for well-specified linear regression. In this setting, we study benign overfitting by considering the asymptotic consistency of the predictor learned by standard gradient-based methods, as both $d$ and the sample size $m$ diverge to infinity, and $d$ increases sufficiently faster than $m$. Our results can be informally summarized as follows:
\begin{itemize}[leftmargin=*]
	\item Beginning with linear regression, we show that once the distribution is not well-specified, the minimum-norm interpolating predictor (a.k.a. min-norm predictor) returned by standard training methods will generally \emph{not} be consistent, and hence benign overfitting will generally not occur. A bit more concretely, in our data model a consistent predictor $\hat{\bw}$ should be such that asymptotically, 
	\begin{equation}\label{intro:regress1}
	\hat{\bw}_{|k} \approx \left(\E\left[\bx\bx^\top\right]^{-1}\right)_{k}\E[y\bx]~,
	\end{equation}
	where $\bv_{|k}$ of a vector $\bv$ denote its first $k$ coordinates, and  $\left(\E\left[\bx\bx^\top\right]^{-1}\right)_{k}$ refers to the first $k$ rows of the inverse covariance matrix. In other words, we expect the first $k$ coordinates to converge to the first $k$ coordinates of the least-squares solution w.r.t. the underlying distribution. In contrast, we prove that the min-norm predictor asymptotically satisfies
	\begin{equation}\label{intro:regress2}
	\hat{\bw}_{|k} \approx \E\left[\frac{\bx_{|k}\bx_{|k}^\top}{\norm{\bx_{|d-k}}^2}\right]^{-1}\E\left[\frac{y\bx_{|k}}{\norm{\bx_{|d-k}}^2}\right]~,
	\end{equation}
	where $\bx_{|d-k}$ are the last $d-k$ coordinates of $\bx$. Clearly, the expressions in  \eqref{intro:regress1} and \eqref{intro:regress2} are generally not the same, except in some special cases (an important one, as we shall see, being a well-specified setting). Thus, we argue that one should not expect benign overfitting to generally occur in linear regression with the square loss. 
	\item We show that benign overfitting will generally not occur in some natural extensions of linear regression with the square loss, even in a well-specified setting. These include (1) regression with generalized linear models (or equivalently, with a single neuron predictor); and (2) regression with losses other than the square loss. To do so, we present a new observation that may be of independent interest. Roughly speaking, we argue that any interpolating predictor can be seen as returning the optimum of the average loss, but simultaneously, it is also the optimum of many other types of average loss objectives, which reflect rather different learning problems with different optimal predictors. The trained predictor does not ``know'' which of these learning problems it is actually solving, so benign overfitting (with the trained predictor achieving low expected loss) can only occur in some of them. As a result, benign overfitting is implicitly ``biased'' towards certain learning problems, and its occurence in one problem precludes its occurence in another. We note that this is somewhat reminiscent of the paper \citet{muthukumar2021classification}, which pointed out that interpolating predictors can be insensitive to the type of loss function used for training, but we take this in a rather different direction.
	\item Having discussed regression problems, we turn to binary classification problems (where our goal is to minimize misclassification error), and show that the situation there is rather different. Concretely, we consider linear predictors under the same data model as before, and the max-margin predictor $\hat{\bw}$ (to which gradient-based methods are known to asymptotically converge in direction). Perhaps surprisingly, we prove that $\hat{\bw}$ has a rather clean asymptotic characterization: Its first $k$ coordinates asymptotically minimize the expectation of a (weighted) \emph{squared hinge loss} on those coordinates,
	\[
	\hat{\bw}_{|k}~\approx~\arg\min_{\bv\in \reals^k}\E\left[\frac{[1-y\bx_{|k}^\top \bv]_+^2}{\norm{\bx_{|d-k}}^2}\right]~,
	\]
	(where $[z]_+:=\max\{0,z\}$), and the last $d-k$ coordinates of $\hat{\bw}$ are asymptotically immaterial. Thus, we get that the existence of benign overfitting in our model is reduced to a simpler question: Whether the data distribution is such that minimizing the expectation of this weighted squared hinge loss is a good surrogate for minimizing misclassification error. Although not true in the worst case, it is reasonable to assume that it will be true in many cases, because this loss still encourages the sign of $\bx_{|k}^\top\hat{\bw}_{k}$ to accord with the output $y$. This is in contrast to regression, in which our results suggest that benign overfitting is more brittle. Also, we note that unlike many previous works on benign overfitting in classification, our result does not require that the distribution is such that the min-norm and max-margin predictors coincide. Based on this result, we study more specifically the case of linearly separable distributions with label noise, and provide a few positive results: For example, for just about any choice of distribution on the first $k$ coordinates, we will have benign overfitting at least for some positive amount of label noise. Moreover, under some stronger assumptions on the input distribution (e.g., a mixture of symmetric distributions), we will have benign overfitting for any non-trivial level of label noise.
\end{itemize}

Overall, we hope that the results provided in this paper will allow us to understand the phenomenon of benign overfitting beyond the settings studied so far. 

The paper is structured as follows: After discussing related work below, we study linear regression with the square loss in \secref{sec:regression}, and other regression settings in \secref{sec:beyond}. We then turn to classification problems in \secref{sec:classification}, and conclude with a discussion in  \secref{sec:discussion}. The formal proofs of all our results appear in Appendix \ref{app:proofs}. 

\subsection{Related Work}

Papers such as \citep{Zhang2017} popularized the notion that modern learning systems (such as deep learning) tend to perfectly fit the training data, while still performing well on test data. The literature on the theory of this phenomenon is by now very large, and we will only discuss here the papers most relevant to our work (see for example \cite{belkin2021fit} for a more comprehensive survey). 

A line of works (e.g., \cite{belkin2018overfitting,belkin2018understand,belkin2018understand,belkin2019does,mei2019generalization,liang2020just,belkin2019reconciling}) showed that this phenomenon is not reserved to deep learning, and occurs already in linear and kernel learning. More recently, papers such as \citet{bartlett2020benign,hastie2019surprises,belkin2020two} studied conditions for benign overfitting in linear regression with the square loss in a well-specified setting. In particular, \citet{bartlett2020benign} considered general distributions, and showed how the occurrence of benign overfitting can be characterized in terms of the eigenvalues of the input covariance matrix, and how having many low-variance directions is in some sense necessary for benign overfitting to occur. \citet{koehler2021uniform,zhou2021optimistic} showed how these results can be recovered and extended from a uniform convergence perspective, again for well-specified linear regression. Other works which study benign overfitting and its relationship to classical learning theory include \citet{nagarajan2019uniform,negrea2020defense,yang2021exact,bartlett2021failures,bachmann2021uniform,koehler2021uniform,zhou2020uniform,muthukumar2021classification}. 

Understanding benign overfitting in classification problems has been more challenging, since the max-margin predictor to which gradient-based methods are known to converge to (in direction) does not have a closed-form solution. Many of the existing works focus on settings where the max-margin predictor and the (closed-form) least squares predictor coincide (as originally argued in \citet{muthukumar2021classification}). \citet{wang2021benign} and \citet{cao2021risk} use this to study a setting where the two classes are a symmetric mixture of Gaussian (or subgaussian) distributions, without label noise.  \citet{chatterji2021finite} studies a setting where the two classes are a mixture of two product distributions, and with label noise, by studying the trajectory of gradient descent on the training data. \citet{montanari2019generalization} considers classification problem where the inputs are Gaussian, and the labels are generated according to a logistic link function, and derives a formula for the asymptotic prediction error of the max-margin classifier, in a setting where the ratio of the dimension and the sample size converges to some fixed positive limit. Recently, \citet{frei2022benign} managed to prove the existence of benign overfitting in two-layer neural networks with smoothed leaky ReLU activations, assuming the data comes from a mixture of two well-separated distributions. Other works studying benign overfitting and classification include \citet{liang2021interpolating,mcrae2021harmless,poggio2019generalization,thrampoulidis2020theoretical,hu2021understanding,wang2021benign2}.

\section{Preliminaries}\label{sec:preliminaries}

\textbf{Notation.} We use bold-faced letter to denote vectors, and assume that they are in column form unless specified otherwise. Given a vector $\bw\in \reals^d$, we let $w_i$ denote its $i$-th coordinate,  $\bw_{|k}\in \reals^k$ to denote its first $k$ coordinates, and $\bw_{|d-k}\in \reals^{d-k}$ to denote its last $d-k$ coordinates. We also use this notation when the vector already has a subscript for a different purpose, e.g. $\bx_{i|k}$ refers to the first $k$ coordinates of $\bx_i$. Given two vectors $\bu,\bv$ of the same size, $\bu\succeq \bv$ means that $u_i\geq v_i$ for all $i$. We use $[\cdot]_+$ to denote the ReLU function $z\mapsto \max\{0,z\}$. Given a vector $\bv$, $[\bv]_+$ refers to applying the ReLU function entry-wise. $\mathbf{1}(\cdot)$ to denote the indicator function. $[m]$ is shorthand for $\{1,\ldots,m\}$. $I$ denotes the identity matrix (whose size should be clear from context). Given a matrix $A$, $A_{i,j}$ refers to its entry at row $i$ column $j$, $\lambda_{\min}(A)$ refers to its minimal eigenvalue, $\text{Tr}(A)$ to its trace, $\norm{A}$ refers to its spectral norm, and $\norm{A}_F$ refers to the Frobenius norm. It is well known that $\norm{A}\leq \norm{A}_F$.

\textbf{Convergence in probability and the law of large numbers.} For the asymptotic results in our paper, we will often consider a sequence of real-valued random variables $\{X_d\}$ indexed by $d$, and say that they converge in probability to some fixed number $a$ (or $X_d \stackrel{P}{\rightarrow} a$) if $\Pr(|X_d-a|\geq \epsilon)\stackrel{d\rightarrow\infty}{\longrightarrow} 0$ for all $\epsilon\geq 0$. Note that this slightly extends the usual notion of convergence in probability, in that we do not require the random variables to share the same probability space. In addition, we will often utilize the following version of the (weak) law of large numbers for such sequences: 
\begin{lemma}
	If $\sup_d \E[X_d^2]<\infty$, then for any sequence of positive integers $\{m_d\}_{d=1}^{\infty}$ diverging to $\infty$, if we let $\{X_{d,i}\}_{i=1}^{m_d}$ be $m_d$ i.i.d. copies of $X_d$, then $\frac{1}{m_d}\sum_{i=1}^{m_d}X_{d,i}-\E[X_{d}]\stackrel{P}{\rightarrow}0$.
\end{lemma}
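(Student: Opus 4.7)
The plan is to apply Chebyshev's inequality directly to the average. First I would set $C := \sup_d \E[X_d^2] < \infty$, which is finite by hypothesis. Since variance is bounded by second moment, $\mathrm{Var}(X_d) \leq \E[X_d^2] \leq C$ uniformly in $d$. Because the $X_{d,i}$ for $i=1,\ldots,m_d$ are i.i.d. copies of $X_d$, the sample mean $\bar{X}_d := \frac{1}{m_d}\sum_{i=1}^{m_d} X_{d,i}$ has expectation $\E[X_d]$ and variance $\mathrm{Var}(X_d)/m_d \leq C/m_d$.

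Then for any fixed $\epsilon > 0$, Chebyshev's inequality yields
$$\Pr\!\left(\left|\bar{X}_d - \E[X_d]\right|\geq \epsilon\right) \;\leq\; \frac{\mathrm{Var}(\bar{X}_d)}{\epsilon^2} \;\leq\; \frac{C}{m_d\,\epsilon^2}.$$
Since $m_d \to \infty$ as $d \to \infty$ by hypothesis, the right-hand side tends to $0$, which is exactly the definition of $\bar{X}_d - \E[X_d] \stackrel{P}{\rightarrow} 0$ in the slightly extended sense introduced above.

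There is no real obstacle here: the argument is a direct two-line Chebyshev application. The only conceptual point worth noting is that the extension of convergence in probability to sequences of variables living on different probability spaces does not cause any issue, because for each $d$ the probability $\Pr(|\bar{X}_d - \E[X_d]|\geq \epsilon)$ is computed inside the single probability space supporting $X_{d,1},\ldots,X_{d,m_d}$, and it is only the deterministic numerical bound $C/(m_d\epsilon^2)$ that must vanish as $d\to\infty$. Uniformity of the second-moment bound is what makes this work: without $\sup_d \E[X_d^2] < \infty$, the numerator $\mathrm{Var}(X_d)$ could grow with $d$ and overwhelm the $1/m_d$ factor.
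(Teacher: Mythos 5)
Your proof is correct and follows essentially the same route as the paper: a direct application of Chebyshev's inequality to the sample mean, using the uniform bound $\sup_d \E[X_d^2]<\infty$ to control the variance. Your bound $C/(m_d\epsilon^2)$ is in fact the correct one (the paper's displayed $\sigma^2/(m_d^2\epsilon^2)$ has an extra factor of $m_d$ in the denominator, a harmless typo since either quantity vanishes), and your remark about the different probability spaces correctly addresses the only subtlety in the extended notion of convergence in probability.
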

The proof immediately follows from Chebyshev's inequality, which implies that since $\E[X_d^2]<\sigma^2$ for some finite $\sigma^2$ and any $d$, then $\Pr\left(\left|\frac{1}{m_d}\sum_{i=1}^{m_d}X_{d,i}-\E[X_d]\right|\geq \epsilon\right)\leq \frac{\sigma^2}{m_d^2\epsilon^2}\stackrel{d\rightarrow\infty}{\longrightarrow} 0$ for all $\epsilon>0$. This law of large numbers  trivially extends to vector-valued and matrix-valued random variables of some fixed dimension $k$.

\textbf{The min-norm predictor.} In linear regression problems with the square loss, we attempt to find a predictor $\bw$ minimizing the expected squared loss $\E_{(\bx,y)}[(\bx^\top\bw-y)^2]$. It is well-known that if we attempt this by  running standard gradient-based methods on the empirical risk objective $\frac{1}{m}\sum_{i=1}^{m}(\bx_i^\top\bw-y_i)^2$ (for some training sample $\{(\bx_i,y_i)\}_{i=1}^{m}$), and the dimension is at least $m$, these methods will generally converge to the (unique) point
\[
\hat{\bw} ~:=~\arg\min_{\bw\in \reals^d}\norm{\bw}:\frac{1}{m}\sum_{i=1}^{m}(\bx_i^\top\bw-y_i)^2=0
\]
(e.g., \cite{zhang2021understanding}). This predictor is also referred to as the minimum-norm interpolating predictor, or simply the min-norm predictor. For our purposes, we will need a somewhat more general result. Concretely, instead of the square loss specifically, suppose we have some loss function $\ell(\bx^\top \bw;y)$, which for any $y$ is minimized at some unique prediction value $\bx^\top \bw$. As before, suppose that we attempt to minimize the expected loss by running a standard gradient-based method over the empirical risk objective, which is now $\frac{1}{m}\sum_{i=1}^{m}\ell(\bx_i^\top\bw;y_i)$, assuming we converge to a globally minimal solution $\hat{\bw}$. The following proposition shows that under a mild assumption, $\hat{\bw}$ is still the minimum-norm interpolating predictor:
\begin{proposition}\label{prop:minnorm}
	Fix a function 
	$
	L(\bw)=\frac{1}{m}\sum_{i=1}^{m} \ell(\bx_i ^\top \bw;y_i)
	$, 
	where each $\ell(\cdot;y_i)$ is a non-negative continuous function which equals $0$ at some unique point denoted as $\ell^{-1}_{y_i}(0)$. Suppose we run an arbitrary iterative training method, that converges to a point $\hat{\bw}$ such that $L(\hat{\bw})=0$ and $\hat{\bw}\in \text{span}\{\bx_1,\ldots,\bx_m\}$. Then $\hat{\bw}$ is the unique point in $\arg\min_{\bw} \norm{\bw}:L(\bw)=0$. 
\end{proposition}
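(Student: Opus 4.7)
The plan is to reduce the nonlinear constraint $L(\bw)=0$ to a system of affine equations and then invoke the standard fact that among all solutions of a consistent linear system, the minimum-norm solution is the unique one lying in the row space of the constraint matrix.

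First I would exploit the uniqueness assumption on each $\ell(\cdot;y_i)$. Since each summand is non-negative and vanishes only at the single value $\ell^{-1}_{y_i}(0)$, the condition $L(\bw)=0$ forces $\ell(\bx_i^\top\bw;y_i)=0$ for every $i$, which in turn forces $\bx_i^\top \bw = \ell^{-1}_{y_i}(0)$ for every $i$. Collecting the $\bx_i^\top$'s as the rows of a matrix $X\in\reals^{m\times d}$ and the values $\ell^{-1}_{y_i}(0)$ into a vector $\bc\in\reals^m$, the level set $\{\bw:L(\bw)=0\}$ coincides exactly with the affine set $\{\bw:X\bw=\bc\}$. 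In particular this set is non-empty (it contains $\hat{\bw}$), and it is convex.

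Next I would use the standard orthogonal decomposition $\reals^d=\mathrm{row}(X)\oplus \ker(X)$. For any $\bw$ with $X\bw=\bc$, write $\bw=\bw_R+\bw_N$ with $\bw_R\in\mathrm{row}(X)=\mathrm{span}\{\bx_1,\ldots,\bx_m\}$ and $\bw_N\in\ker(X)$. Since $X\bw_N=0$, the row-space component satisfies $X\bw_R=\bc$, and any two solutions share the same row-space component, because their difference lies in both the row space and the null space and is therefore zero. By Pythagoras,
\[
\norm{\bw}^2=\norm{\bw_R}^2+\norm{\bw_N}^2\geq \norm{\bw_R}^2,
\]
with equality only when $\bw_N=0$. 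Hence the unique minimum-norm interpolating solution is precisely the common row-space component.

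Finally, since $\hat{\bw}\in \mathrm{span}\{\bx_1,\ldots,\bx_m\}=\mathrm{row}(X)$ by hypothesis and satisfies $X\hat{\bw}=\bc$, it \emph{is} that row-space component, and therefore coincides with the unique minimizer of $\arg\min_{\bw}\norm{\bw}$ subject to $L(\bw)=0$. I do not anticipate a real obstacle here: the only conceptual step is recognizing that the uniqueness of the zero of each $\ell(\cdot;y_i)$ is what linearizes the constraint and lets the classical minimum-norm/row-space argument apply; the continuity hypothesis is not really needed beyond ensuring the zero set is well-defined as a level set.
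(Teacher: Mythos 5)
Your proof is correct and is exactly the intended argument: the uniqueness of each zero $\ell^{-1}_{y_i}(0)$ collapses $L(\bw)=0$ to the linear system $X\bw=\bc$ (the same reduction the paper uses in Lemma~\ref{lem:minnorm}), after which the row-space/null-space decomposition and Pythagoras identify the unique minimum-norm solution as the common row-space component, which $\hat{\bw}$ is by hypothesis. No gaps.
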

Since gradient-based methods rely on iterative updates along the gradient of $L(\cdot)$ (or gradients of single loss functions $\ell(\bx_i^\top\bw;y_i)$), they generally remain in $\text{span}\{\bx_1,\ldots,\bx_m\}$ assuming we initialize at $\mathbf{0}$, and thus the theorem implies that such methods will converge to minimum-norm solutions that minimize the empirical risk. 

\textbf{Benign overfitting for regression.} For linear prediction problems, benign overfitting is inherently a high-dimensional phenomenon (since when the dimension is fixed, uniform convergence generally occurs). Thus, the most appropriate way to study benign overfitting is to consider a \emph{sequence} of input distributions over $\reals^d$ (indexed by $d$), and study the performance of the learned predictors as both $d$ and the training set size diverge to infinity. Concretely, for the setting studied in the previous paragraph, a standard way to formally define benign overfitting is as follows:
\begin{definition}[Benign Overfitting for the min-norm predictor]\label{def:benign}
	Given a non-negative function $\ell(p;y)$ on $\reals^2$, a sequence of distributions $\{\Dcal_d\}_{d=k+1}^{\infty}$ on $\reals^d\times \reals$ satisfies \emph{benign overfitting}, if there exists a monotonically increasing sequence of positive integers $\{m_d\}_{d=k+1}^{\infty}$ such that the following holds:
	\begin{itemize}
		\item For any sufficiently large $d$, if we sample $m_d$ samples $\{(\bx_i,y_i)\}_{i=1}^{m_d}$ i.i.d. from $\Dcal_d$, then with probability approaching $1$, $\frac{1}{m_d}\sum_{i=1}^{m_d}\ell(\bx_i^\top\bw;y_i))=0$ for some $\bw\in \reals^d$.
		\item Picking 
		$
		\hat{\bw}_d = \arg\min_{\bw} \norm{\bw}:\frac{1}{m}\sum_{i=1}^{m_d}\ell(\bx_i^\top \bw;y_i))=0
		$
		to be the min-norm predictor, 
		and defining $R_{d}(\bw):= \E_{(\bx,y)\sim \Dcal_d}\left[\ell(\bx^\top\bw;y))\right]$,
		it holds that
		$
		\inf_{d} \inf_{\bw\in \reals^d} R_{d}(\bw)>0$ as well as
		$R_{d}(\hat{\bw}_d)-\inf_{\bw\in \reals^d}R_{d}\stackrel{P}{\longrightarrow}0$.
	\end{itemize}
\end{definition}
In other words, the min-norm predictor $\hat{\bw}_d$ is asymptotically optimal, in the sense that its expected loss converges to the best possible expected loss among linear predictors, as the sample size and $d$ diverge to infinity at an appropriate rate. Moreover, this holds despite overfitting, in the sense that the training error (which is $0$) does not converge to the (strictly positive) expected loss. 

\textbf{The max-margin predictor, and benign overfitting for  classification.}
In linear binary classification problems, we consider distributions where the examples $(\bx,y)$ are such that $y\in \{-1,+1\}$, and the predictor (specified by a vector $\bw$) is $\bx\mapsto \text{sign}(\bx^\top \bw)$. In this case, we generally care only about the direction of the predictor $\bw$, and its expected misclassification error, namely $\Pr_{(\bx,y)}(y\bx^\top \bw\leq 0)$.
Whereas in regression, standard methods converge to the minimum-norm interpolating predictor, the characterization in classification is a bit different. Concretely, using standard convex classification losses with exponential tails (such as the logistic or cross-entropy loss), it is by now well-known that gradient-based methods ran on the average loss w.r.t. a given dataset $\{\bx_i,y_i\}_{i=1}^{m}$ converge in direction to the max-margin predictor
\[
\hat{\bw}~=~\arg\min_{\bw\in \reals^d} \norm{\bw}~~:~~ \min_{i\in [m]} y_i \bx_i^\top \bw\geq 1
\]
\citep{soudry2018implicit,ji2020directional}, which by definition achieves zero misclassification error on the dataset. In this setup, we can define benign overfitting as follows: Similar to the case of regression, we need to consider a sequence of distributions $\{\Dcal_d\}_{d=k+1}^{\infty}$ (this time on $\reals^d\times \{-1,+1\}$) and sample sizes $\{m_d\}_{d=k+1}^{\infty}$, which induce a sequence of max-margin predictors $\{\hat{\bw}_d\}_{d=k+1}^{\infty}$ (as defined above) when trained on samples $\{\bx_i,y_i\}_{i=1}^{m_d}$. Letting $R_d(\bw)=\Pr_{(\bx,y)\sim \Dcal_d}(y\bx^\top \bw\leq 0)$, we say that the sequence  $\{\Dcal_d\}_{d=k+1}^{\infty}$ satisfies benign overfitting, if $\hat{\bw}_d$ exists with probability approaching $1$ (as $d\rightarrow \infty$), and 
\begin{equation}\label{eq:classificationbenign}
	\inf_d \inf_{\bw\in \reals^d} R_d(\bw)>0~~~\text{as well as}~~~
	R_d(\hat{\bw}_d)-\inf_{\bw\in \reals^d} R_d(\bw)\stackrel{P}{\longrightarrow}0~.
\end{equation}
Note that this definition is similar to the one we had for regression (Definition \ref{def:benign}), except that $R_d(\cdot)$ is defined with respect to misclassification error, and $\hat{\bw}_d$ is now defined as the max-margin predictor.

\section{Linear Regression with the Square Loss}\label{sec:regression}

We begin by considering the setting of linear regression with the square loss, where our goal is to minimize $\E[(\bx^\top\bw-y)^2]$ with respect to an underlying distribution over $(\bx,y)$. As discussed in the previous section, taking an i.i.d. sample of $m$ training examples, and running standard training methods to convergence, generally results in the min-norm predictor interpolating the data, $\hat{\bw}~=~\arg\min \norm{\bw}~:~ \forall i\in[m]~, \bx_i^\top \bw=y_i$. The question we ask here is whether this predictor is asymptotically consistent and satisfies benign overfitting. 

To motivate our approach, let us first consider the well-specified setting, where $\E[y|\bx]=\bx^\top\bw^*$ for some fixed $\bw^*$, and $\bx$ is zero-mean. This setting was studied in detail in \cite{bartlett2020benign,hastie2019surprises,belkin2020two}. In particular, an important corollary of the results of \citet{bartlett2020benign} is that for benign overfitting to occur, it is \emph{necessary} that for some $k$, the smallest $d-k$ eigenvalues of the covariance matrix $\Sigma$, namely $\lambda_{k+1},\ldots,\lambda_d$, satisfy $m\cdot\frac{\sum_{i>k}\lambda_i^2}{(\sum_{i>k} \lambda_i)^2}\rightarrow 0$. Letting $\Sigma=\Sigma_k\oplus\Sigma_{d-k}$ be the decomposition of $\Sigma$ with respect to the first $k$ and last $d-k$ eigenvalues, this is equivalent to requiring $m\cdot \frac{\norm{\Sigma_{d-k}}_F^2}{\text{Tr}^2(\Sigma_{d-k})}\rightarrow 0$. The following simple lemma implies that this in turn is equivalent to requiring the distribution of $\bx_{|d-k}$ to be high-dimensional, in the sense that inner products of independent copies of $\bx_{|d-k}$ will be very small compared to the ``typical'' size of $\bx_{|d-k}$:
\begin{lemma}\label{lem:orthfrob}
	Let $\bz,\bz'$ be two i.i.d. random vectors in $\reals^n$ such that $\E[\bz\bz^\top]=\Sigma$. Then	$\E[(\bz^\top\bz')^2]~=~\norm{\Sigma}_F^2$. 
	Thus, if $m\cdot \frac{\norm{\Sigma}_F^2}{\text{Tr}^2(\Sigma)}\rightarrow 0$, then $m\cdot\frac{\E[(\bz^\top\bz')^2]}{\E^2[\norm{\bz}^2]}\rightarrow 0$ as well.
\end{lemma}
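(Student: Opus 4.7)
The plan is to prove both assertions by direct computation with the coordinate expansions, exploiting independence of $\bz$ and $\bz'$ together with the fact that $\Sigma_{ij} = \E[z_i z_j]$.

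For the first assertion, I would write $\bz^\top \bz' = \sum_{i=1}^n z_i z'_i$ and expand
\[
(\bz^\top \bz')^2 \;=\; \sum_{i,j=1}^{n} z_i z_j z'_i z'_j.
\]
Taking expectations and using the independence of $\bz$ and $\bz'$ (together with the fact that they have the same distribution), each term satisfies $\E[z_i z_j z'_i z'_j] = \E[z_i z_j]\,\E[z'_i z'_j] = \Sigma_{ij}\cdot \Sigma_{ij} = \Sigma_{ij}^2$. Summing over $i,j$ yields $\E[(\bz^\top\bz')^2] = \sum_{i,j} \Sigma_{ij}^2 = \norm{\Sigma}_F^2$, which is the first claim.

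For the second assertion, I would observe that $\norm{\bz}^2 = \bz^\top \bz = \text{Tr}(\bz\bz^\top)$, so by linearity of expectation and trace,
\[
\E[\norm{\bz}^2] \;=\; \text{Tr}\bigl(\E[\bz\bz^\top]\bigr) \;=\; \text{Tr}(\Sigma).
\]
Hence $\E^2[\norm{\bz}^2] = \text{Tr}^2(\Sigma)$, and substituting this together with the first part into $m \cdot \E[(\bz^\top\bz')^2]/\E^2[\norm{\bz}^2]$ gives exactly $m \cdot \norm{\Sigma}_F^2/\text{Tr}^2(\Sigma)$, so the stated implication is immediate.

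There is really no difficult step here; the only mild subtlety is being careful that the identical distribution of $\bz$ and $\bz'$ (not merely their independence) is what gives $\E[z'_i z'_j] = \Sigma_{ij}$ in the cross term, and that $\Sigma$ need not be diagonal, so the off-diagonal entries must be retained in the double sum.
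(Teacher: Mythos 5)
Your proof is correct and is essentially the same argument as the paper's: expand $(\bz^\top\bz')^2$ as a double sum, factor each term by independence into $\Sigma_{ij}^2$ to get $\norm{\Sigma}_F^2$, and use $\E[\norm{\bz}^2]=\text{Tr}(\Sigma)$ for the second claim. Nothing to add.
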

\begin{proof}
	The proof follows from the observation that $\text{Tr}(\Sigma)=\E[\norm{\bz}^2]$, and that $\E[(\bz^\top\bz')^2]$ equals
	\[
		 \E\left[\left(\sum_{i=1}^{n}z_{i}z'_{i}\right)^2\right]~=~\sum_{i,j=1}^{n}\E[z_{i} z_{j}z'_{i}z'_{j}]
		~=~\sum_{i,j=1}^{n}\E[z_{i} z_{j}]\cdot \E[z'_{i}z'_{j}]	~=~\sum_{i,j=1}^{n}\Sigma_{i,j}^2~=~\norm{\Sigma}_F^2~.
	\]
\end{proof}
Thus, we see that to get benign overfitting even in a well-specified setting, we generally need to consider distributions which have a very high-dimensional component ``spread'' in many directions. Intuitively, this component will be (approximately) mutually orthogonal across different samples, thus allowing the linear predictor to fit the training data, but without significantly affecting the prediction on new samples. A prototypical example to keep in mind is any spherically symmetric distribution in $\reals^{d-k}$, in which case it is not difficult to show that
\[
m\cdot\frac{\E[(\bz^\top\bz')^2]}{\E^2[\norm{\bz}^2]}~=~m\cdot \frac{\norm{\E[\bz\bz^\top]}_F^2}{\E^2[\norm{\bz}^2]}~=~\frac{m}{d-k}~,
\]
which goes to zero as $d\rightarrow \infty$ sufficiently faster than $m$. Moreover, assuming $\norm{\bz}$ does not fluctuate too wildly, the empirical quantity $m\cdot\left(\frac{\bz^\top \bz'}{\norm{\bz}^2}\right)^2$ will strongly concentrate around $0$ as $d\rightarrow \infty$.

Motivated by this, we focus on distributions where the inputs vectors $\bx\in \reals^d$ are such that the first $k$ coordinates $\bx_{|k}$ (for some fixed $k$ independent\footnote{Our proof techniques readily extend to the case of $k$ growing with $d$ at a sufficiently slow rate, but we choose to consider $k$ fixed for simplicity.} of $d$) have some arbitrary distribution, whereas the last $d-k$ coordinates $\bx_{|d-k}$ form a high-dimensional distribution, in the sense that given an i.i.d. sample $\{(\bx_i,y_i)\}_{i=1}^{m}$, it holds with high probability that $\frac{\sup_{i\neq j}|\bx_{i|d-k}^\top \bx_{j|d-k}|}{\min_i \norm{\bx_{i|d-k}}^2}$ converges to $0$ (as the dimension $d$ increases sufficiently fast compared to $m$). We note that assuming such a split to the first $k$ and last $d-k$ coordinates is mostly to simplify the presentation, and is without much loss of generality: Indeed, by a suitable rotation of the coordinate system, all our results extend to distributions which can be split into a high-dimensional distribution in some $d-k$-dimensional subspace, and an arbitrary distribution in the complementary $k$-dimensional subspace. Moreover, variants of this assumption, or related assumptions, are very common in the literature on benign overfitting (e.g., the ``junk features'' model of \citet{zhou2020uniform}, or the ``weak features'' model of \cite{muthukumar2021classification}), although these tend to assume some particular (e.g., Gaussian) distribution on the first $k$ coordinates, whereas we allow that distribution to be rather generic. Finally, we note that we generally do not assume how $y$ depends on $\bx$ (unless specified otherwise). 

\subsection{A Deterministic Perturbation Bound}

Focusing on data samples from such distributions, let us now turn to analyze what form the min-norm predictor takes, as a function of some training data $\{(\bx_i,y_i)\}_{i=1}^{m}$. Intuitively, the analysis rests on viewing the high-dimensional components of the data, $\{\bx_{i|d-k}\}_{i=1}^{m}$, as a perturbation of perfectly orthogonal vectors. To quantify this, we introduce the perturbation matrix $E\in \reals^{m\times m}$, defined as
\begin{equation}\label{eq:Edef}
E_{i,j}=\mathbf{\bx}_{i|d-k}^\top\mathbf{\bx}_{j|d-k}\cdot \mathbf{1}(i\neq j) ~~~\forall i,j\in [m]~.
\end{equation}
Note that if $\{\bx_{i|d-k}\}_{i=1}^{m}$ are perfectly orthogonal, then $E$ is the zero matrix. Also, we will use $\hat{\E}[f(\bx,y)]$ as shorthand for $\frac{1}{m}\sum_{i=1}^{m}f(\bx_i,y_i)$ for any function $f$. The key technical result we will need is the following deterministic perturbation bound, which bounds the distance of $\hat{\bw}_{|k}$ (the first $k$ coordinates) from a certain expression, as well as bounding the norm of the last $d-k$ coordinates $\hat{\bw}_{|d-k}$. 
\begin{theorem}\label{thm:regressiondet}
	Fix some $k\in [d-1]$ and $\{(\bx_i,y_i)\}_{i=1}^{m}\subseteq \reals^d\times \reals$, such that $\{\bx_i\}_{i=1}^{m}$ are linearly independent, 
	$\frac{\norm{E}}{\min_{i\in [m]}\norm{\bx_{i|d-k}}^2}\leq \frac{1}{2}$, and  $\frac{1}{m}\leq  \frac{1}{2}\lambda_{\min}\left(\hat{\E}\left[\frac{\bx_{|k}\bx_{|k}^\top}{\norm{\bx_{|d-k}}^2}\right]\right)$. 
	Then the min-norm predictor $\hat{\bw}$ exists and satisfies
	\begin{align*}
	&\left\|\hat{\bw}_{|k}-\left(\hat{\E}\left[\frac{\bx_{|k}\bx_{|k}^\top}{\norm{\bx_{|d-k}}^2}\right]\right)^{-1} \hat{\E}\left[\frac{y\bx_{|k}}{\norm{\bx_{|d-k}}^2}\right]\right\|\\
	&~~~~~~\leq~\frac{2\left\|\hat{\E}\left[\frac{y\bx_{|k}}{\norm{\bx_{|d-k}}^2}\right]\right\|}{\lambda_{\min}\left(\hat{\E}\left[\frac{\bx_{|k}\bx_{|k}^\top}{\norm{\bx_{|d-k}}^2}\right]\right)^2}\cdot \frac{1}{m}~+~\frac{2\sqrt{\hat{\E}[\norm{\bx_{|k}}^2]\cdot \hat{\E}[y^2]}\cdot }{\min_{i\in [m]}\norm{\bx_{i|d-k}}^4}\cdot m\cdot\norm{E}~,
	\end{align*}
	as well as
	\[
	\norm{\hat{\bw}_{|d-k}}~\leq~ \frac{\sqrt{\hat{\E}[\norm{\bx_{|d-k}}^2]\cdot \hat{\E}[y^2]}}{\min_{i\in [m]}\norm{\bx_{i|d-k}}^2}\cdot\left(1+\frac{2\norm{E}}{\min_{i\in [m]}\norm{\bx_{i|d-k}}^2}\right)\cdot m~. 
	\]
\end{theorem}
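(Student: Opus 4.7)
The plan is to start from the closed-form expression for the min-norm interpolator: since the $\{\bx_i\}_{i=1}^{m}$ are linearly independent, the matrix $X:=[\bx_1,\ldots,\bx_m]^\top$ has full row rank, so $G:=XX^\top$ is invertible and $\hat{\bw}=X^\top G^{-1}\by$ exists. Splitting $X=[X_k\mid X_{d-k}]$ along the first $k$ and last $d-k$ coordinates, I would write
\[
G~=~X_k X_k^\top + X_{d-k}X_{d-k}^\top~=~X_k X_k^\top + D + E,
\]
where $D:=\mathrm{diag}(\norm{\bx_{i|d-k}}^2)_{i=1}^{m}$ and $E$ is the off-diagonal matrix from \eqref{eq:Edef}. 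Letting $G_0:=X_k X_k^\top + D$ be the ``unperturbed'' version of $G$, the hypothesis $\norm{E}\leq \tfrac{1}{2}\min_i\norm{\bx_{i|d-k}}^2$ implies $D+E\succeq \tfrac{1}{2}\min_i \norm{\bx_{i|d-k}}^2\cdot I$, so both $G_0$ and $G$ are positive definite with $\norm{G_0^{-1}}\leq 1/\min_i\norm{\bx_{i|d-k}}^2$ and $\norm{G^{-1}}\leq 2/\min_i\norm{\bx_{i|d-k}}^2$.

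For the first block I would use the resolvent identity $G^{-1}=G_0^{-1}-G^{-1}E G_0^{-1}$ to split
\[
\hat{\bw}_{|k}~=~X_k^\top G_0^{-1}\by ~-~ X_k^\top G^{-1}E G_0^{-1}\by.
\]
For the ``main'' term, apply Sherman--Morrison--Woodbury to $G_0=D+X_k X_k^\top$, giving $G_0^{-1}=D^{-1}-D^{-1}X_k(I+X_k^\top D^{-1}X_k)^{-1}X_k^\top D^{-1}$. Denoting $A:=\hat{\E}[\bx_{|k}\bx_{|k}^\top/\norm{\bx_{|d-k}}^2]$ and $\ba_y:=\hat{\E}[y\,\bx_{|k}/\norm{\bx_{|d-k}}^2]$, a direct check gives $X_k^\top D^{-1}X_k=mA$ and $X_k^\top D^{-1}\by=m\ba_y$, which simplifies the main term to the clean identity $X_k^\top G_0^{-1}\by=(I+mA)^{-1}\,m\ba_y$. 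The hypothesis $\tfrac{1}{m}\leq \tfrac{1}{2}\lambda_{\min}(A)$ guarantees that $A$ is invertible and that $m\lambda_{\min}(A)\geq 2$; then the one-line manipulation $(I+mA)A^{-1}\ba_y=A^{-1}\ba_y+m\ba_y$ rearranges to
\[
(I+mA)^{-1}m\ba_y ~-~ A^{-1}\ba_y ~=~ -(I+mA)^{-1}A^{-1}\ba_y,
\]
whose norm is at most $\norm{\ba_y}/(m\lambda_{\min}(A)^2)$, producing the first of the two stated error terms.

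The residual $X_k^\top G^{-1}E G_0^{-1}\by$ would then be bounded by $\norm{X_k}_F\cdot\norm{G^{-1}}\cdot\norm{E}\cdot\norm{G_0^{-1}}\cdot\norm{\by}$, using $\norm{X_k}_F^2=m\,\hat{\E}[\norm{\bx_{|k}}^2]$, $\norm{\by}^2=m\,\hat{\E}[y^2]$, and the spectral bounds above; this delivers the second error term. The bound on $\norm{\hat{\bw}_{|d-k}}=\norm{X_{d-k}^\top G^{-1}\by}$ would follow by the same splitting: the zeroth-order piece $\norm{X_{d-k}^\top G_0^{-1}\by}\leq \norm{X_{d-k}}_F\cdot\norm{G_0^{-1}}\cdot\norm{\by}$ (using $\norm{X_{d-k}}_F^2=m\,\hat{\E}[\norm{\bx_{|d-k}}^2]$) contributes the term without $\norm{E}$, while the perturbation piece $\norm{X_{d-k}^\top G^{-1}E G_0^{-1}\by}\leq \norm{X_{d-k}}_F\cdot\norm{G^{-1}}\cdot\norm{E}\cdot\norm{G_0^{-1}}\cdot\norm{\by}$ contributes the $\norm{E}$-dependent correction, recovering the claimed factor $(1+2\norm{E}/\min_i\norm{\bx_{i|d-k}}^2)$.

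The main obstacle would be choreography rather than any single deep step: one must pick the decomposition $G=G_0+E$ so that Sherman--Morrison--Woodbury reduces $X_k^\top G_0^{-1}\by$ to a matrix expression in exactly the two empirical averages $A$ and $\ba_y$ appearing in the statement, and then track carefully which operators to bound in spectral norm (to exploit the lower eigenvalue of $D$) versus Frobenius norm (to pull out factors of $\sqrt{m\,\hat{\E}[\cdot]}$). The algebraic identity relating $(I+mA)^{-1}m\ba_y$ to $A^{-1}\ba_y$ is the only non-routine step, and it is what allows the $1/m$ rate in the leading error to emerge instead of being absorbed into a naive perturbation bound.
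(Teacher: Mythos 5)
Your proposal is correct and follows essentially the same route as the paper's proof: the same closed form $\hat{\bw}=X^\top G^{-1}\by$, the same decomposition $G=(D+X_kX_k^\top)+E$, the same Woodbury reduction of the main term to $(\tfrac{1}{m}I+A)^{-1}\ba_y$, and the same Frobenius-versus-spectral norm bookkeeping for both blocks. The only (harmless) differences are that you bound $G^{-1}-G_0^{-1}$ via the resolvent identity rather than the paper's packaged perturbation lemma, and you handle the $(I+mA)^{-1}m\ba_y$ versus $A^{-1}\ba_y$ comparison by an explicit algebraic identity rather than a second application of that lemma, which in fact yields a slightly sharper constant.
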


The key take-away from this theorem is as follows: Assuming various ratios and empirical moments of the dataset are bounded, then 
\[
\left\|\hat{\bw}_{|k}- \left(\hat{\E}\left[\frac{\bx_{|k}\bx_{|k}^\top}{\norm{\bx_{|d-k}}^2}\right]\right)^{-1} \hat{\E}\left[\frac{y\bx_{|k}}{\norm{\bx_{|d-k}}^2}\right]\right\|~\leq~ \Ocal\left(\frac{1}{m}+m\cdot\norm{E}\right)~,
\]
where $E\rightarrow 0$ as the inner products between pairs of vectors in $\{\bx_{i|d-k}\}_{i=1}^{m}$ go to zero. Assuming this convergence to zero is sufficiently fast compared to $m$, that $m\rightarrow \infty$, and that the law of large numbers hold, we get that
\[
\hat{\bw}_{|k}~\rightarrow~\left(\E\left[\frac{\bx_{|k}\bx_{|k}^\top}{\norm{\bx_{|d-k}}^2}\right]\right)^{-1} \E\left[\frac{y\bx_{|k}}{\norm{\bx_{|d-k}}^2}\right]~.
\]
As to $\hat{\bw}_{|d-k}$, we effectively bound its norm by $\Ocal(m)$, which scales with $m$ but not with the dimension. If the input distribution on the last $d-k$ coordinates is sufficiently high-dimensional, this implies that given a new sample $\bx$, the contribution of $\bx_{|d-k}^\top\hat{\bw}_{|d-k}$ to the predicted value (namely $\bx^\top\hat{\bw}=\bx_{|k}^\top \hat{\bw}_{|k}+\bx_{|d-k}^\top\hat{\bw}_{|d-k}$) is asymptotically negligible. Thus, the asymptotic expression of $\hat{\bw}_{|k}$ eventually determines the behavior of the learned predictor. 

Before continuing, let us provide an informal and partial proof sketch, explaining where the approximate expression for $\hat{\bw}_{|k}$ in \thmref{thm:regressiondet} comes from. To that end, let $X$ be a matrix whose $i$-th row is $\bx_i$, and $\by=(y_1,\ldots,y_m)$. By standard results, $\hat{\bw}=\arg\min_{\bw}\norm{\bw}:X\bw=\by$ has the closed-form expression $X^\top(XX^\top)^{-1}\by$. Letting $X_{|k}$ be the first $k$ columns of $X$, it follows that $\hat{\bw}_{|k}=X_{|k}^\top((XX^\top)^{-1}\by)=X_{|k}^\top(X_{|k}X_{|k}^\top+X_{|d-k}X_{|d-k}^\top)\by$. Suppose for simplicity that $\{\bx_{i|d-k}\}_{i=1}^{m}$ are precisely orthogonal (so that $E=0$ in the theorem above, and $X_{|d-k}X_{|d-k}^\top$ equals a diagonal matrix $D$). As a result, we get $\hat{\bw}_{|k}=X_{|k}^\top\left(X_{|k} X_{|k}^\top+D\right)^{-1}\by$. By the Woodbury matrix identity and some algebraic manipulations, this equals \\
$\left(I+X_{|k}^\top D^{-1} X_{|k}\right)^{-1}X_{|k}^\top D^{-1}\by$, or equivalently, 
\[
	\left(\frac{1}{m}I+\frac{1}{m}X_{|k}^\top D^{-1} X_{|k}\right)^{-1}\left(\frac{1}{m} X_{|k}^\top D^{-1}\by\right)
	~=~\left(\frac{1}{m}I +\hat{\E}\left[\frac{\bx_{|k}\bx_{|k}^\top}{\norm{\bx_{|d-k}}^2}\right]\right)^{-1}\cdot \hat{\E}\left[\frac{y\bx_{|k}}{\norm{\bx_{|d-k}}^2}\right]~,
\]
which approaches $\left(\hat{\E}\left[\frac{\bx_{|k}\bx_{|k}^\top}{\norm{\bx_{|d-k}}^2}\right]\right)^{-1}\cdot \hat{\E}\left[\frac{y\bx_{|k}}{\norm{\bx_{|d-k}}^2}\right]$ as $m$ increases. 

\subsection{Asymptotic Characterization of the min-norm predictor}

Let us now turn to show how \thmref{thm:regressiondet} can lead to a formal asymptotic characterization of the min-norm predictor $\hat{\bw}$, in a statistical setting where the training data is sampled from some underlying distribution. To do so, we will need to impose assumptions on the distribution, which ensure that the perturbation matrix $E$ from \thmref{thm:regressiondet} indeed converges to $0$, and that the various quantities in the bounds are well-behaved. One such set of sufficient conditions is the following:

\begin{assumption}\label{assump:regression}
	 Suppose $\{\Dcal_d\}_{d=k+1}^{\infty}$ is a sequence of distributions on $\reals^d\times \reals$, and $\{m_d\}_{d=k+1}^{\infty}$ a monotonically increasing sequence of positive integers diverging to $\infty$, such that the following hold:
	\begin{enumerate}
	\item Letting $\E_d$ be shorthand for $\E_{(\bx,y)\sim \Dcal_d}$, it holds that 
	\[
	\sup_d \max\left\{\E_d[\norm{\bx}^4]~,~\E_d[y^4]~,~\E_d\left[\frac{\norm{y\bx_{|k}}^2}{\norm{\bx_{|d-k}}^4}\right]
	~,~\E_d\left[\frac{\norm{\bx_{|k}}^4}{\norm{\bx_{|d-k}}^4}\right]\right\}<\infty~,
	\]
	and  $\inf_d \lambda_{\min}\left(\E_d\left[\frac{\bx_{|k}\bx_{|k}^\top}{\norm{\bx_{|d-k}}^2}\right]\right)>0$.
	\item If we sample $m_d$ i.i.d. samples $\{\bx_i,y_i\}_{i=1}^{m_d}$ from $\Dcal_d$, then with probability approaching $1$, $\{\bx_i\}_{i=1}^{m_d}$ are linearly independent, and  $\min_{i\in[m_d]}\norm{\bx_{i|d-k}}$ is at least some $c>0$ independent of $d$. 
	\item $m_d\cdot \norm{E}\stackrel{P}{\rightarrow} 0$, where $E$ is as defined in \eqref{eq:Edef}.
	\item $m_d^2\cdot\norm{\E_{d}[\bx_{|d-k}\bx_{|d-k}^\top]}\rightarrow 0$~.
\end{enumerate}	 
\end{assumption}

Since the assumptions are rather technical, let us provide one simple example to keep in mind, which satisfies the above:

\begin{example}\label{example:regression}
Suppose that $\lim_{d\rightarrow \infty}\frac{m_d^3\log(d)}{d}=0$, and that $\Dcal_d$ is defined as follows: $(\bx_{|k},y)$ has some fixed distribution (independent of $d$), with bounded moments up to order $4$ and such that $\E[\bx_{|k}\bx_{|k}^\top]$ is positive definite; And $\bx_{|d-k}$ is an independent zero-mean Gaussian  with covariance matrix $\frac{1}{d-k}\cdot I$.\\
In this case, $\norm{\bx_{|d-k}}$ strongly concentrates around $1$ as $d\rightarrow\infty$ increases, and $\{\bx_{i|d-k}\}_{i=1}^{m_d}$ are linearly independent with probability $1$, hence conditions $1$ and $2$ in the assumption clearly holds. As to conditions $3$ and $4$,  letting $Z\in\reals^{m\times (d-k)}$ be the matrix whose rows are $\bx_{i|d_k}$, and $D$ the diagonal of $ZZ^\top$, we have $\norm{E}=\norm{ZZ^\top-D}\leq \norm{ZZ^\top-I}+\norm{I-D}$, which is at most $\Ocal(\sqrt{m_d\log(d)/d})$ with probability converging to $1$ as $d\rightarrow \infty$ (see for example \citep{zhu2012short}). Also, clearly $ \norm{\E[\bx_{|d-k}\bx_{|d-k}^\top]}=\frac{1}{d-k}$. Combined with the assumption $\lim_{d\rightarrow \infty}\frac{m_d^3\log(d)}{d}=0$, conditions 3,4 follow.
\end{example}

We remark that in the example, we require $\lim_{d\rightarrow \infty}\frac{m_d^3\log(d)}{d}=0$, which is a stronger assumption on the scaling of $d$ vs. $m_d$ compared to previous work on linear regression (which usually consider $m_d/d\rightarrow 0$ under similar distributional assumptions). On the flip side, the proof technique allows us to analyze more general settings which go beyond well-specified linear regression. 

In any case, we emphasize that Assumption \ref{assump:regression} applies far more broadly than Example \ref{example:regression}: For instance, it generally applies to any spherically-symmetric distribution of $\bx_{|d-k}$ (possibly dependent on $\bx_{|k},y$) so that $\norm{\bx_{|d-k}}$ is bounded (or at least concentrated) in some fixed interval bounded away from $0$. Also, Assumption \ref{assump:regression} itself is not the most general possible, in the sense that it focuses on situations where $\norm{\bx},y$ and $\frac{1}{\norm{\bx_{|d-k}}}$ are scaled so that they are essentially bounded independent of $d$. Moreover, using \thmref{thm:regressiondet} above one can analyze even more general situations: For example, that the data norm scales with $d$, while only bounding various ratios between relevant quantities. 

Under Assumption \ref{assump:regression}, let us now proceed to formally state our asymptotic characterization of the min-norm predictor:

\begin{theorem}\label{thm:regression}
	Suppose $\{\Dcal_d\}_{d=k+1}^{\infty}$ and $\{m_d\}_{d=k+1}^{\infty}$ satisfy Assumption \ref{assump:regression}. For any $d$, let  $\hat{\bw}_d=\arg\min\norm{\bw}:\forall i\in [m_d],~ \bx_i^\top \bw=y_i$ be the min-norm predictor w.r.t. a training set of size $m_d$ sampled i.i.d. from $\Dcal_d$. Then as $d\rightarrow \infty$, $\hat{\bw}_d$ exists with probability approaching $1$, and satisfies
	\[
\left\|\hat{\bw}_{d|k}-\left(\E_d\left[\frac{\bx_{|k}\bx_{|k}^\top}{\norm{\bx_{|d-k}}^2}\right]\right)^{-1} \E_d\left[\frac{y\bx_{|k}}{\norm{\bx_{|d-k}}^2}\right]\right\|\stackrel{P}{\longrightarrow} 0~~~\text{and}~~~
\E_{d}\left[(\bx^\top \hat{\bw}_d-\bx_{|k}^\top \hat{\bw}_{d|k})^2\right]\stackrel{P}{\longrightarrow} 0~.
\]
\end{theorem}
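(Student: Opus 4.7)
The plan is to derive \thmref{thm:regression} by combining the deterministic perturbation bound in \thmref{thm:regressiondet} with the law of large numbers (Lemma 1 of the preliminaries) applied to the relevant empirical moments. The proof will proceed on a high-probability event on which (i) the hypotheses of \thmref{thm:regressiondet} are satisfied, (ii) the various empirical moments appearing on the right-hand side of its bound stay uniformly bounded, and (iii) the empirical expectations $\hat{\E}[\bx_{|k}\bx_{|k}^\top/\norm{\bx_{|d-k}}^2]$ and $\hat{\E}[y\bx_{|k}/\norm{\bx_{|d-k}}^2]$ are close to their population counterparts.

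First, I would verify the hypotheses of \thmref{thm:regressiondet} hold with probability approaching one. Linear independence of $\{\bx_i\}_{i=1}^{m_d}$ and the lower bound $\min_i \norm{\bx_{i|d-k}} \geq c$ come directly from condition 2 of Assumption \ref{assump:regression}. The requirement $\norm{E}/\min_i \norm{\bx_{i|d-k}}^2 \leq 1/2$ follows from condition 3 (since $m_d\norm{E}\stackrel{P}{\rightarrow}0$ implies $\norm{E}\stackrel{P}{\rightarrow}0$, and the denominator is bounded below by $c^2$). The condition $1/m_d \leq \tfrac{1}{2}\lambda_{\min}(\hat{\E}[\bx_{|k}\bx_{|k}^\top/\norm{\bx_{|d-k}}^2])$ is obtained by applying Lemma 1 using the second-moment bound on $\norm{\bx_{|k}}^2/\norm{\bx_{|d-k}}^2$ supplied by condition 1, and then using Weyl's inequality together with the uniform lower bound $\inf_d\lambda_{\min}(\E_d[\bx_{|k}\bx_{|k}^\top/\norm{\bx_{|d-k}}^2])>0$ to propagate this to the minimum eigenvalue.

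Second, with \thmref{thm:regressiondet} in hand I would bound its right-hand side. The first summand is $\Ocal(1/m_d)$ and thus vanishes; in the second summand the factor $m_d\cdot\norm{E}$ vanishes by condition 3, while the accompanying empirical moments $\hat{\E}[\norm{\bx_{|k}}^2]$ and $\hat{\E}[y^2]$ are controlled in probability via Lemma 1 (whose hypothesis of uniformly bounded second moments is provided by condition 1's fourth-moment bounds on $\norm{\bx}$ and $y$), and $\min_i\norm{\bx_{i|d-k}}^4\geq c^4$ by condition 2. This yields $\|\hat{\bw}_{d|k} - (\hat{\E}[\bx_{|k}\bx_{|k}^\top/\norm{\bx_{|d-k}}^2])^{-1}\hat{\E}[y\bx_{|k}/\norm{\bx_{|d-k}}^2]\|\stackrel{P}{\rightarrow}0$. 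To reach the target expression in \thmref{thm:regression}, I would invoke Lemma 1 once more to obtain in-probability convergence of the empirical matrix and empirical vector to their population analogues (again using the uniform fourth-moment bounds in condition 1), and then pass this through matrix inversion using the uniform spectral lower bound already established.

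For the second conclusion, I observe that $\E_d[(\bx^\top\hat{\bw}_d - \bx_{|k}^\top\hat{\bw}_{d|k})^2] = \hat{\bw}_{d|d-k}^\top\, \E_d[\bx_{|d-k}\bx_{|d-k}^\top]\, \hat{\bw}_{d|d-k} \leq \norm{\E_d[\bx_{|d-k}\bx_{|d-k}^\top]}\cdot \norm{\hat{\bw}_{d|d-k}}^2$. Plugging in the bound $\norm{\hat{\bw}_{d|d-k}}=\Ocal(m_d)$ from \thmref{thm:regressiondet} (whose scalar factors are again in-probability bounded by condition 1 plus Lemma 1), the right-hand side is at most a bounded constant times $m_d^2 \cdot \norm{\E_d[\bx_{|d-k}\bx_{|d-k}^\top]}$, which tends to zero by condition 4. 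The main obstacle is purely bookkeeping: the bound in \thmref{thm:regressiondet} is a nonlinear function (involving inverses, ratios and minima) of random quantities, so each piece must be shown to behave nicely on a common high-probability event. The most delicate step is controlling the inverse $(\hat{\E}[\bx_{|k}\bx_{|k}^\top/\norm{\bx_{|d-k}}^2])^{-1}$: a direct application of Lemma 1 only yields operator-norm convergence of the matrix itself, and one must use the uniform spectral lower bound from condition 1 to pass this through inversion without the constants blowing up.
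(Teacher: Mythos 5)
Your proposal is correct and follows essentially the same route as the paper's proof: verify the hypotheses of \thmref{thm:regressiondet} hold with probability approaching one via conditions 2--3 and the law of large numbers, show the perturbation bound vanishes, pass from empirical to population moments through the matrix inverse using the uniform spectral lower bound, and control the second conclusion via $\norm{\hat{\bw}_{d|d-k}}=\Ocal(m_d)$ together with condition 4. The only difference is that you spell out a few steps (e.g., Weyl's inequality for the empirical eigenvalue condition) that the paper leaves implicit.
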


\subsection{Implications for benign overfitting in regression}

Having established this asymptotic characterization of $\hat{\bw}$, we now turn to discuss its implications to benign overfitting in linear regression. Our bottom-line message is that in general, $\hat{\bw}_d$ is asymptotically \emph{not} an optimal predictor, and hence benign overfitting will not occur. 

To see this, consider any sequence of distributions $\{\Dcal_d\}$ as in \thmref{thm:regression}. For any $\Dcal_d$, the expected loss has the form
\[
R_d(\bw)~:=\E_{(\bx,y)\sim \Dcal_d}[(\bx^\top \bw-y)^2]~=~ \bw^\top \E_d[\bx\bx^\top]\bw-2\E_d[y\bx^\top]\bw+\E_d[y^2]~.
\]
Assuming $\E_d[\bx\bx^\top]$ is positive definite, it follows that the unique minimizer equals $\bw=\E[\bx\bx^\top]^{-1}\E[y\bx]$, and in particular
\[
\bw_{|k}~=~ \left(\E_d[\bx\bx^\top]^{-1}\right)_{k}\E_d[y\bx]~,
\]
where $\left(\E_d[\bx\bx^\top]^{-1}\right)_{k}$ refers to the first $k$ rows of the inverse covariance matrix. Thus, for benign overfitting to occur, we need that the min-norm predictor $\hat{\bw}_d$ will be such that $\hat{\bw}_{d|k}$ equals this expression, at least asymptotically. However, \thmref{thm:regression} implies that $\hat{\bw}_{d|k}$ is asymptotically biased towards a different expression, namely $\left(\E_d\left[\frac{\bx_{|k}\bx_{|k}^\top}{\norm{\bx_{|d-k}}^2}\right]\right)^{-1} \E_d\left[\frac{y\bx_{|k}}{\norm{\bx_{|d-k}}^2}\right]$. Thus, unless the two expressions somehow exactly coincide, there is no reason to believe that benign overfitting to occur, even though the covariance structure of the inputs $\bx$ can be a textbook case of amenability to benign overfitting (in terms of the conditions of \thmref{thm:regression} or previous papers on benign overfitting in regression). The following example illustrates this:

\begin{example}
In the setting of \thmref{thm:regression}, suppose $k=1$, $x_{1}$ (the first coordinate of $\bx$) is uniform on the interval $[-a,a]$ for some arbitrary $a>0$, $y=\exp(x_1)$, and for all $j\in\{2,\ldots,d\}$, $x_j = \sqrt{\frac{y}{d-1}}\cdot r_j$, where each $r_j$ is an independent standard Gaussian random variable. Then $\E[x_1 x_j]=\E[y x_j]=0$ for all $j>1$, and therefore
\begin{align*}
R_d(\bw)&=~\E_{(\bx,y)\sim \Dcal_d}[(\bx^\top \bw-y)^2]~=~ \bw^\top \E[\bx\bx^\top]\bw-2\E[y\bx]^\top \bw+\E[y^2]\\
&=~ \E[x_1^2]\cdot w_1^2+\frac{\E[y]}{d-1}\cdot\sum_{j=2}^{d}w_j^2-2\E[yx_1]\cdot w_1+\E[y^2]~.
\end{align*}
By differentiating the above w.r.t. $\bw$, it is easily verified that $R_d(\cdot)$ achieves a minimal value only when $w_j=0$ for all $j>1$, and
$
w_1= \frac{\E[yx_{1}]}{\E[x_{1}^2]}~=~ \frac{\E[\exp(x_1)x_{1}]}{\E[x_1^2]},
$
which is a strictly positive number dependent only on $a$. However, by \thmref{thm:regression} and standard concentration results for the Gaussian distribution, the first coordinate of $\hat{\bw}_d$ converges in probability to the different value $\E\left[\frac{yx_1}{\exp(x_1)}\right]/~\E\left[\frac{x_1^2}{\exp(x_1)}\right]~=~\frac{\E[x_1]}{\E[x_1^2\exp(-x_1)]}~=~0$. It follows that $R_d(\hat{\bw}_d)-\inf_{\bw} R_d(\bw)$ is lower bounded by a positive number independent of $d$, and therefore we do not have benign overfitting.
\end{example}

The reader familiar with previous literature might wonder how this can possibly accord with previous results (such as \citet{bartlett2020benign}), which show that benign overfitting \emph{does} occur for linear regression with the square loss, under the kind of input distributions we study here. The reason is that these results assume a well-specified setting, where $\E[y|\bx]=\bx^\top \bw^*$ for some fixed $\bw^*$ (see for example Assumption 4 in Definition 1 of \citet{bartlett2020benign}). In the example above, this does not hold, since $\E[y|\bx] = \exp(x_1)$ is not a linear function of $\bx$. Had we been in a well-specified setting with $\E[y|\bx]=\bx_{|k}^\top \bw^*_{|k}$ for some $\bw^*$, benign overfitting \emph{would} generally occur, because then we have that
\[
\left(\E\left[\frac{\bx_{|k}\bx_{|k}^\top}{\norm{\bx_{|d-k}}^2}\right]\right)^{-1}\cdot \E\left[\frac{y\bx_{|k}}{\norm{\bx_{|d-k}}^2}\right]~=~
\left(\E\left[\frac{\bx_{|k}\bx_{|k}^\top}{\norm{\bx_{|d-k}}^2}\right]\right)^{-1}\cdot \E\left[\frac{\bx_{|k}\bx_{|k}^\top}{\norm{\bx_{|d-k}}^2}\right]\bw^*_{|k}~=~
\bw^*_{|k}~,
\]
which now coincides with the optimal solution on the first $k$ coordinates. However, a well-specified distribution is the exception rather than the rule in practice. 

\section{Linear Regression Beyond the Square Loss}\label{sec:beyond}

In the previous section, we studied linear regression with the square loss, with our main conclusion being that benign overfitting should not be expected in general, beyond well-specified distributions. In this section, we study what happens if we do focus on well-specified distributions, but consider more general regression problems (beyond linear regression with the square loss). We will see that here again, benign overfitting can generally fail to hold. 

Concretely, suppose that instead of the square loss, we have some non-negative loss function $\ell(\bx^\top\bw;y)$, which is relevant for regression in the sense that for any $y$, it equals $0$ at some unique value $\ell^{-1}_y(0)$. As discussed in \secref{sec:preliminaries} and \propref{prop:minnorm}, we still expect standard gradient-based training methods to converge to a minimum-norm interpolating predictor, assuming they manage to drive the average loss on the training set to $0$: Namely, given a dataset $\{\bx_i,y_i\}_{i=1}^{m}$, the unique point in $\arg\min_{\bw}\norm{\bw}:\frac{1}{m}\sum_{i=1}^{m}\ell(\bx_i^\top\bw;y_i)=0$. The question now is whether this predictor enjoys benign overfitting.

To answer this question, one option is to try and repeat the analysis from the previous section, depending on the choice of $\ell(\cdot)$. However, we will take a different approach, which allows us to study this using the results we already developed for linear regression with the square loss. Our crucial observation can be phrased as the following, very simple lemma:
\begin{lemma}\label{lem:minnorm}
	Given a dataset $\{\bx_i,y_i\}_{i=1}^{m}$ and non-negative loss $\ell(\cdot)$ as above,
	\[
	\arg\min_{\bw} \norm{\bw}:\frac{1}{m}\sum_{i=1}^{m}\ell(\bx_i^\top\bw;y_i)=0~~~~\text{equals}~~~
	\arg\min_{\bw} \norm{\bw}:\frac{1}{m}\sum_{i=1}^{m}(\bx_i^\top \bw-\ell^{-1}_{y_i}(0))^2=0~.
	\]
\end{lemma}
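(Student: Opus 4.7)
The plan is to show that the two constrained optimization problems have exactly the same feasible set, from which the equality of the argmin follows immediately (and the argmin is unique in both cases since we are minimizing the strictly convex norm over a nonempty affine subspace, assuming the constraint set is nonempty).

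First I would observe that for any non-negative function $\ell(\cdot;y_i)$, the average $\frac{1}{m}\sum_{i=1}^m \ell(\bx_i^\top \bw;y_i)$ vanishes if and only if every individual term vanishes, i.e.\ $\ell(\bx_i^\top \bw;y_i)=0$ for all $i\in[m]$. By the hypothesis that $\ell(\cdot;y_i)$ attains the value $0$ at the unique point $\ell^{-1}_{y_i}(0)$, this is in turn equivalent to the linear system $\bx_i^\top \bw = \ell^{-1}_{y_i}(0)$ for all $i\in[m]$.

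Next I would apply the identical argument to the squared-loss objective: since $(\bx_i^\top\bw-\ell^{-1}_{y_i}(0))^2$ is non-negative and vanishes uniquely at $\bx_i^\top\bw=\ell^{-1}_{y_i}(0)$, the constraint $\frac{1}{m}\sum_{i=1}^m (\bx_i^\top\bw-\ell^{-1}_{y_i}(0))^2=0$ is also equivalent to the same linear system $\bx_i^\top \bw = \ell^{-1}_{y_i}(0)$ for all $i\in[m]$. Therefore the two constraint sets coincide exactly, and hence the argmin of $\norm{\bw}$ over them (which is the unique projection of the origin onto this affine subspace, when it is nonempty) agrees.

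There is really no main obstacle here: the whole content is the reduction of the zero-set of a non-negative average to the intersection of individual zero-sets, together with the uniqueness of the zero of each $\ell(\cdot;y_i)$. The only thing worth flagging is the trivial caveat that the lemma is vacuous when no interpolator exists (both sides are then undefined), and that when an interpolator does exist both argmins are well-defined and equal because the squared-norm objective is strictly convex on the common affine feasible set.
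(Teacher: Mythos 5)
Your argument is exactly the paper's: the zero set of the non-negative average is the intersection of the individual zero sets, each of which reduces to the linear constraint $\bx_i^\top\bw=\ell^{-1}_{y_i}(0)$ by uniqueness of the root of $\ell(\cdot;y_i)$, so the two feasible sets (and hence the argmins) coincide. The extra remarks on uniqueness of the minimizer and the vacuous case are fine but not needed.
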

The proof of the lemma trivially follows from the observation that if $\frac{1}{m}\sum_{i=1}^{m}\ell(\bx_i^\top\bw;y_i)=0$, then $\ell(\bx_i^\top\bw;y_i)=0$ for all $i$, hence by assumption, $(\bx_i^\top \bw-\ell^{-1}_{y_i}(0))^2=0$ for all $i$. 

The lemma implies that the same method that converges to the minimum-norm minimizer of the average loss w.r.t. $\ell(\cdot)$, also simultaneously converges to the minimum-norm minimizer of the average loss w.r.t. the square loss with target values $\ell^{-1}(y_i)$. However, assuming that the training set $\{(\bx_i,y_i)\}_{i=1}^{m}$ is sampled i.i.d. from some underlying distribution, it is evident that they represent empirical risk minimization of two distinct statistical learning problems: One being minimizing $\E[\ell(\bx^\top\bw;y)]$, and the other minimizing $\E[(\bx^\top\bw-\ell^{-1}_{y}(0))^2]$. In general, these are different learning problems, with distinct optima with respect to the underlying data distribution, yet the returned $\hat{\bw}$ is exactly the same one. Thus, if we have benign overfitting in one problem (with the trained predictor $\hat{\bw}$ having near-minimal expected loss), we generally \emph{cannot} expect to have benign overfitting in the other problem. Thus, \emph{the very fact that we can show benign overfitting in settings such as well-specified linear regression with the square loss, precludes the possibility of having benign overfitting in other learning problems}.

In what follows, we exemplify this observation on two types of well-specified regression problems. The first setting we study is a generalized linear model. Concretely, we consider predictors of the form $\bx\mapsto \sigma(\bx^\top \bw)$, where $\bw$ is the parameter vector and $\sigma(\cdot)$ is some strictly monotonic non-linear function, and assume that $\E[y|\bx]=\sigma(\bx^\top \bw^*)$ for some $\bw^*$. In the context of neural networks, this can also be viewed as training a single neuron using some nonlinear activation function $\sigma(\cdot)$. In this setting, standard gradient-based methods trained on the average square loss (i.e., $\min_{\bw}\frac{1}{m}\sum_{i=1}^{m}(\sigma(\bx_i^\top\bw)-y_i)^2$) will indeed generally converge to the min-norm predictor, namely $\arg\min_{\bw}\norm{\bw} ~:~ \frac{1}{m_d}\sum_{i=1}^{m_d}(\sigma(\bx_i^\top\bw)-y_i)^2=0$ (see for example the proof of \citet[Thm. 3.2]{yehudai2020learning}, combined with \propref{prop:minnorm}). The following corollary of our previous results implies that for just about any choice of input distribution on the first $k$ coordinates, and just about any choice of a strictly monotonic non-linear $\sigma(\cdot)$, we generally \emph{cannot} expect benign overfitting to occur, even if the model is well-specified: 
\begin{corollary}\label{cor:glm}
	Suppose that $\sigma:\reals\rightarrow\reals$ is a function whose inverse $\sigma^{-1}(\cdot)$ exists and is Lipschitz continuous. Consider any sequence of distributions $\{\Dcal_d\}_{d=k+1}^{\infty}$ and integers $\{m_d\}_{d=k+1}^{\infty}$ satisfying Assymption \ref{assump:regression}, such that for any $d$ and $(\bx,y)\sim\Dcal_d$, $y = \sigma(\bx_{|k}^\top \bw^*)+\xi$ for some fixed $\bw^*\in \reals^k$ and random variable $\xi$. 
	Then the min-norm predictor $\hat{\bw}_d$ satisfies
	\[
	\left\|\hat{\bw}_{d|k}~-~\left(\E_d\left[\frac{\bx_{|k}\bx_{|k}^\top}{\norm{\bx_{|d-k}}^2}\right]\right)^{-1}\cdot \E_d\left[\frac{\sigma^{-1}\left(\sigma(\bx_{|k}^\top\bw^*)+\xi\right)\bx_{|k}}{\norm{\bx_{|d-k}}^2}\right]\right\|~\stackrel{P}{\longrightarrow}~0
	\]
	and $\E_{(\bx,y)\sim \Dcal_d}\left[(\bx^\top \hat{\bw}_d-\bx_{|k}^\top \hat{\bw}_{d|k})^2\right]~\stackrel{P}{\longrightarrow}~0$
\end{corollary}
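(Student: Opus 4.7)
The plan is to reduce Corollary~\ref{cor:glm} directly to \thmref{thm:regression} via \lemref{lem:minnorm}. Concretely, the GLM training loss is $\ell(\bx^\top\bw;y)=(\sigma(\bx^\top\bw)-y)^2$, which as a function of its first argument is non-negative, continuous, and, since $\sigma$ is invertible, vanishes at the unique point $\sigma^{-1}(y)$. Thus \lemref{lem:minnorm} applies and tells us that
\[
\hat{\bw}_d ~=~ \arg\min_{\bw}\norm{\bw}:\frac{1}{m_d}\sum_{i=1}^{m_d}\bigl(\bx_i^\top\bw-\sigma^{-1}(y_i)\bigr)^2=0,
\]
which is exactly the min-norm square-loss interpolator for the relabeled dataset $\{(\bx_i,y'_i)\}$ with $y'_i:=\sigma^{-1}(y_i)=\sigma^{-1}\bigl(\sigma(\bx_{i|k}^\top\bw^*)+\xi_i\bigr)$.

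Next, I would introduce the auxiliary distribution sequence $\{\tilde{\Dcal}_d\}$ defined as the law of $(\bx,y')$ when $(\bx,y)\sim\Dcal_d$ and $y'=\sigma^{-1}(y)$, and verify that $\{\tilde{\Dcal}_d\}$ together with $\{m_d\}$ satisfy Assumption~\ref{assump:regression}. The geometric conditions on $\bx$ (linear independence, the lower bound on $\min_i\norm{\bx_{i|d-k}}$, the $m_d\norm{E}\stackrel{P}{\to}0$ condition, and the decay of $\norm{\E[\bx_{|d-k}\bx_{|d-k}^\top]}$) depend only on $\bx$ and thus carry over verbatim. Similarly $\E_d[\norm{\bx_{|k}}^4/\norm{\bx_{|d-k}}^4]$ and $\lambda_{\min}(\E_d[\bx_{|k}\bx_{|k}^\top/\norm{\bx_{|d-k}}^2])$ are unaffected. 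The only conditions that need checking are those involving the target: $\sup_d\E_d[(y')^4]<\infty$ and $\sup_d\E_d[\norm{y'\bx_{|k}}^2/\norm{\bx_{|d-k}}^4]<\infty$. Here I would use Lipschitz continuity of $\sigma^{-1}$: denoting its Lipschitz constant by $L$, we have $|y'|\leq|\sigma^{-1}(0)|+L|y|$, so $(y')^4\leq C_1+C_2 y^4$ for constants $C_1,C_2$ depending only on $L$ and $\sigma^{-1}(0)$; and $(y')^2\norm{\bx_{|k}}^2/\norm{\bx_{|d-k}}^4\leq C_1\norm{\bx_{|k}}^2/\norm{\bx_{|d-k}}^4+C_2 y^2\norm{\bx_{|k}}^2/\norm{\bx_{|d-k}}^4$, which is controlled by the quantities already assumed uniformly bounded in Assumption~\ref{assump:regression} (combined with Cauchy--Schwarz applied to $\E[\norm{\bx_{|k}}^4/\norm{\bx_{|d-k}}^4]$ and the fact that $\norm{\bx_{|d-k}}\geq c$ with high probability, which handles the $\norm{\bx_{|k}}^2/\norm{\bx_{|d-k}}^4$ term).

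Once Assumption~\ref{assump:regression} is verified for $\{\tilde{\Dcal}_d\}$, I would apply \thmref{thm:regression} to the relabeled problem. Its first conclusion yields
\[
\Bigl\|\hat{\bw}_{d|k}-\left(\E_d\!\left[\tfrac{\bx_{|k}\bx_{|k}^\top}{\norm{\bx_{|d-k}}^2}\right]\right)^{-1}\E_d\!\left[\tfrac{y'\bx_{|k}}{\norm{\bx_{|d-k}}^2}\right]\Bigr\|\stackrel{P}{\to}0,
\]
and substituting back $y'=\sigma^{-1}(\sigma(\bx_{|k}^\top\bw^*)+\xi)$ gives the first displayed claim of the corollary. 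The second claim, that $\E_d[(\bx^\top\hat{\bw}_d-\bx_{|k}^\top\hat{\bw}_{d|k})^2]\stackrel{P}{\to}0$, is precisely the second conclusion of \thmref{thm:regression} applied to the same relabeled problem, and is immediate.

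The only non-routine step is verifying the moment bounds on $y'$ under the relabeling. This is where one pays the price for assuming Lipschitz continuity of $\sigma^{-1}$ in the hypothesis: without it (e.g., for $\sigma=\tanh$ on an unbounded $\bx$-distribution, where $\sigma^{-1}$ blows up near the boundary), the transformed targets can fail to have the required moments, breaking the reduction. Under the stated Lipschitz hypothesis, however, the moment bounds are immediate and the rest of the proof is a mechanical application of \lemref{lem:minnorm} and \thmref{thm:regression}.
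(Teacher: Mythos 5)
Your proposal is correct and follows essentially the same route as the paper: observe that since $\sigma$ is invertible the min-norm GLM interpolator coincides with the min-norm square-loss interpolator for the relabeled targets $\sigma^{-1}(y_i)$, check that the moment conditions of Assumption \ref{assump:regression} survive the relabeling because $|\sigma^{-1}(y)|\leq c_\sigma(1+|y|)$ by Lipschitz continuity, and then apply \thmref{thm:regression} with $y$ replaced by $\sigma^{-1}(y)$. Your write-up is somewhat more explicit than the paper's about which conditions depend only on $\bx$ and which require the Lipschitz bound, but the argument is the same.
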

The corollary follows immediately from the observation that since $\sigma$ is invertible, $\hat{\bw}_d$ is also the minimum-norm minimizer of $\frac{1}{m_d}\sum_{i=1}^{m_d}(\bx_i^\top\bw-\sigma^{-1}(y_i))^2=0$, and that the moment conditions in Assumption \ref{assump:regression} are still satisfied if we replace $y$ by $\sigma^{-1}(y)$ (since $|\sigma^{-1}(y)|\leq c_{\sigma}(1+|y|)$ for some $c_{\sigma}>0$ dependent only on $\sigma$). Applying \thmref{thm:regression} on $\hat{\bw}_d$ with $y$ replaced by $\sigma^{-1}(y)$, we get that
\[
\left\|\hat{\bw}_{d|k}~-~
\left(\E_d\left[\frac{\bx_{|k}\bx_{|k}^\top}{\norm{\bx_{|d-k}}^2}\right]\right)^{-1} \E_d\left[\frac{\sigma^{-1}(y)\bx_{|k}}{\norm{\bx_{|d-k}}^2}\right]
\right\|~\stackrel{P}{\longrightarrow}~0~,
\]
and plugging in $y=\sigma(\bx_{|k}^\top)+\xi$ results in the theorem.  

When $\xi$ is independent zero-mean noise, and $\sigma(\cdot)$ (and hence $\sigma^{-1}(\cdot)$) is a linear function, then the asymptotic expression for $\hat{\bw}_{d|k}$ in the theorem above reduces to $\bw^*$, which is indeed the optimal vector we would hope to converge to. However, when $\sigma(\cdot)$ is nonlinear, the expression is not $\bw^*$ in general, and hence we do not get asymptotic consistency. To give just one simple example, suppose that  $\norm{\bx_{|d-k}}=1$ with probability $1$, $\sigma(0)=0$, $\bw^*=\mathbf{0}$ and $\E[\bx_{|k}]\neq\mathbf{0}$. In this case the asymptotic expression for $\hat{\bw}_{d|k}$ in the theorem reduces to, $\left(\E[\bx_{|k}\bx_{|k}^\top]\right)^{-1}\E[\bx_{|k}]\cdot\E[\sigma^{-1}(\xi)]$. For this to equal $\bw^*$ (namely $\mathbf{0}$), we need that $\E[\sigma^{-1}(\xi)]=0$. However, since $\sigma(\cdot)$ (and hence $\sigma^{-1}(\cdot)$) is non-linear, the equation above will not hold for ''most`` zero-mean distributions of $\xi$. In other words, even if we fix the input distribution, then just by playing around with the distribution of the noise term $\xi$, we can easily encounter situations where benign overfitting does not hold. Concretely, the following lemma (whose proof is in the appendix) shows that \emph{no} nonlinear $\sigma(\cdot)$ can possibly satisfy $ \E[\sigma^{-1}(\xi)]~=~0$ for all zero-mean distributions:
\begin{lemma}\label{lem:linear}
	Suppose that $\sigma^{-1}(\cdot)$ is a function on $\reals$ such that $\E[\sigma^{-1}(\xi)]=0$ for all zero-mean random variables $\xi$ with support of size at most $2$. Then $\sigma^{-1}(\cdot)$ (and hence $\sigma(\cdot)$) must be a homogeneous linear function (that is, $\exists c\in \reals~\text{s.t.}~\forall z\in \reals,~ \sigma^{-1}(z)=cz$).
\end{lemma}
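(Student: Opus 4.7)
The plan is to exploit the hypothesis on a carefully chosen one-parameter family of two-point distributions, and deduce homogeneity and linearity simultaneously from the resulting functional equation. Write $f=\sigma^{-1}$ for brevity. The zero-mean random variables with support of size at most $2$ are precisely (i) the constant $0$, and (ii) distributions supported on two points $\{a,b\}$ with $a>0>b$, where the probabilities are forced by the zero-mean constraint. I would first plug in $\xi\equiv 0$ to obtain $f(0)=0$, handling the degenerate case.

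Next, for arbitrary $a>0$ and $t>0$, I would consider the unique zero-mean distribution supported on $\{a,-t\}$, namely
\[
\Pr(\xi=a)=\frac{t}{a+t},\qquad \Pr(\xi=-t)=\frac{a}{a+t}.
\]
The hypothesis $\E[f(\xi)]=0$ then reads $\tfrac{t}{a+t}f(a)+\tfrac{a}{a+t}f(-t)=0$, which after clearing denominators becomes the clean separation-of-variables identity
\[
\frac{f(a)}{a}~=~-\frac{f(-t)}{t}\qquad\text{for all }a,t>0.
\]

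Since the left-hand side is independent of $t$ and the right-hand side is independent of $a$, both sides equal a common constant $c\in\reals$. This yields $f(a)=ca$ for every $a>0$ and $f(-t)=-ct=c\cdot(-t)$ for every $t>0$, which, combined with $f(0)=0$, gives $f(x)=cx$ for all $x\in\reals$. Since $\sigma$ is the inverse of $f$, it too must be homogeneous linear.

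This argument is essentially elementary, so there is no real technical obstacle; the only conceptual step is recognizing that the two parameters $a$ and $t$ in the two-point distributions vary independently, which is exactly what forces the separation-of-variables structure and thereby the linearity. The fact that support size $2$ already suffices (no need to use more general zero-mean distributions) is the key observation that makes the lemma as strong as it is.
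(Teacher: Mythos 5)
Your proof is correct and follows essentially the same route as the paper's: both exploit the one-parameter family of two-point zero-mean distributions (yours parametrized as $\{a,-t\}$, the paper's as $\{-c,cz\}$) and read off linearity from the resulting identity. Your separation-of-variables phrasing merely packages in one step what the paper does in three (linearity on each half-line, then matching the two slopes via the uniform distribution on $\{-1,1\}$).
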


Next, we go back to linear regression, but now assume that we use some convex loss which is not necessarily the square loss (say, the absolute loss). Here again, standard gradient methods trained on the average loss will generally converge to a min-norm interpolating predictor (thanks to convexity and \propref{prop:minnorm}). However, we cannot expect benign overfitting to occur in general for this predictor:
\begin{corollary}\label{cor:nonsquare}
	Consider any sequence of distributions $\{\Dcal_d\}_{d=k+1}^{\infty}$ and integers $\{m_d\}_{d=k+1}^{\infty}$ satisfying Assumption \ref{assump:regression}. Suppose we use the loss function $\ell(\bx^\top\bw;y)=f(\bx^\top\bw-y)$ for some non-negative function $f$ which has a unique root at $0$. Then the min-norm predictor $\hat{\bw}$ satisfies $\E_{(\bx,y)\sim \Dcal_d}\left[(\bx^\top \hat{\bw}_d-\bx_{|k}^\top \hat{\bw}_{d|k})^2\right]\stackrel{P}{\longrightarrow}0$ and
	\[
	\left\|\hat{\bw}_{d|k}-\left(\E_d\left[\frac{\bx_{|k}\bx_{|k}^\top}{\norm{\bx_{|d-k}}^2}\right]\right)^{-1}\cdot \E_d\left[\frac{y\bx_{|k}}{\norm{\bx_{|d-k}}^2}\right]\right\|~\stackrel{P}{\longrightarrow}~0~.
	\]
\end{corollary}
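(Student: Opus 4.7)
The plan is to reduce this statement directly to \thmref{thm:regression} via \lemref{lem:minnorm}. The only substantive observation needed is what the level-set inverse $\ell_y^{-1}(0)$ evaluates to for the given loss: since $\ell(p;y) = f(p - y)$ and $f$ is non-negative with a unique root at $0$, we have $\ell(p;y) = 0$ if and only if $p = y$. Thus $\ell_{y_i}^{-1}(0) = y_i$ for every training label.

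With this identification, \lemref{lem:minnorm} states that the min-norm interpolating predictor for $\ell$ on the sample $\{(\bx_i, y_i)\}_{i=1}^{m_d}$ equals
\[
\arg\min_{\bw} \norm{\bw} \,:\, \frac{1}{m_d}\sum_{i=1}^{m_d} (\bx_i^\top \bw - y_i)^2 = 0,
\]
which is exactly the min-norm predictor studied in \thmref{thm:regression} for the standard square loss on the same data. In other words, the two learning procedures, although using different loss functions, produce the identical vector $\hat{\bw}_d$ whenever either one interpolates. Since Assumption \ref{assump:regression} is imposed in the statement of the corollary, \thmref{thm:regression} already guarantees that with probability approaching $1$ the min-norm square-loss interpolator exists, and supplies both of the claimed convergence statements verbatim. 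No further computation is required, and the conclusion of \corollaryref{cor:nonsquare} follows immediately.

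There is essentially no obstacle here beyond correctly recognizing that the choice $\ell(\bx^\top\bw;y) = f(\bx^\top\bw - y)$ forces $\ell_y^{-1}(0) = y$, so that the ``virtual regression targets'' produced by \lemref{lem:minnorm} are the original $y_i$ rather than some transformed values (as would happen, for example, in the generalized-linear-model setting of \corollaryref{cor:glm}). The payoff of the corollary is conceptual rather than technical: it shows that swapping the square loss for any other interpolation-equivalent regression loss cannot alter the asymptotic bias of the min-norm predictor, so the negative results for benign overfitting from \secref{sec:regression} carry over unchanged to a broad family of regression losses.
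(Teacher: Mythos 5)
Your proposal is correct and matches the paper's own argument: the paper likewise observes that $\hat{\bw}_d$ coincides with $\arg\min_{\bw}\norm{\bw}:\frac{1}{m_d}\sum_{i=1}^{m_d}(\bx_i^\top\bw-y_i)^2=0$ (precisely because $f$ has its unique root at $0$, so $\ell_{y_i}^{-1}(0)=y_i$) and then invokes \thmref{thm:regression} on the resulting square-loss problem. Your identification of the virtual targets via \lemref{lem:minnorm} is exactly the intended reduction, so nothing is missing.
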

The proof is immediate from observing that $\hat{\bw}_d$ is also $\arg\min \norm{\bw}:\frac{1}{m_d}\sum_{i=1}^{m_d}(\bx_i^\top \bw-y_i)^2=0$, and applying \thmref{thm:regression} on this related linear regression problem. Crucially, note that $\hat{\bw}_{d|k}$ has the same asymptotic characterization as if we have used the square loss, and there is no reason to believe that this is also an optimal solution (w.r.t. the first $k$ coordinates) of $\E[f(\bx^\top \bw-y)]$ when $f(\cdot)$ is not the square loss. Let us illustrate this with a simple example:
\begin{example}
	In the setting of \corollaryref{cor:nonsquare}, suppose $f(z)=|z|$ is the absolute loss, $k=1$, $x_{1}=1$ with probability $1$, and $y = x_{1}+\xi$ for some independent random variable $\xi$. Then the first coordinate of $\hat{\bw}_d$ converges in probability to $\E[yx_1]/\E[x_1^2]=1+\E[\xi]$. However, the expected absolute loss is
	$
	R_d(\bw)~=~ \E[|\bx^\top \bw-y|]~=~ \E\left[\left|w_1+\sum_{j=2}^{d}x_jw_j-(1+\xi)\right|\right]
	$, 
	which is easily verified to be minimized only when $w_1=1+\text{med}(\xi)$ (where $\text{med}(\xi)$ is the median of $\xi$). Thus, whenever $\text{med}(\xi)\neq \E[\xi]$ (which generally occurs when $\xi$ has a non-symmetric distribution),  $R_d(\hat{\bw}_d)-\inf_{\bw} R_d(\bw)$ does not converge to $0$, and we do not have benign overfitting.
\end{example}  

\begin{remark}[Implicit bias towards a weighted square loss problem]
	In \thmref{thm:regression}, for linear regression with the square loss, we saw that $\hat{\bw}_{d|k}$ asymptotically equals 
	\[
	\left(\E_d\left[\frac{\bx_{|k}\bx_{|k}^\top}{\norm{\bx_{|d-k}}^2}\right]\right)^{-1} \E_d\left[\frac{y\bx_{|k}}{\norm{\bx_{|d-k}}^2}\right]~.
	\]
	This can be equivalently seen as the minimum-norm optimum of the objective function
	\[
	\E_{d} \left[\left(\frac{\bx^\top}{\norm{\bx_{|d-k}}}\bw-\frac{y}{\norm{\bx_{|d-k}}}\right)^2\right]~.
	\]
	In other words, even though $\hat{\bw}$ minimizes $\frac{1}{m}\sum_{i=1}^{m}(\bx_i^\top \bw-y_i)^2$, and should asymptotically minimize $\E[(\bx^\top \bw-y)^2]$ for benign overfitting to occur, its first $k$ coordinates actually optimize a \emph{weighted} version of this problem, where both $\bx,y$ are scaled down by $\norm{\bx_{|d-k}}$. This can be explained via the approach developed in this section: It trivially holds that $\hat{\bw}=\arg\min_{\bw} \norm{\bw}:\frac{1}{m}\sum_{i=1}^{m}(\bx_i^\top\bw-y_i)^2=0$ also equals
	\[
	\arg\min_{\bw} \norm{\bw}:\frac{1}{m}\sum_{i=1}^{m}\left(\frac{\bx_i^\top}{\norm{\bx_{i|d-k}}} \bw-\frac{y_i}{\norm{\bx_{i|d-k}}}\right)^2=0~,
	\]
	and in terms of asymptotic behavior, it turns out that $\hat{\bw}$ is actually ``consistent'' with respect to the statistical problem associated with the latter, weighted loss function, and not the former unweighted one.
\end{remark}

\section{Linear Binary Classification}\label{sec:classification}

The results in the previous section suggest that many natural extensions of well-specified linear regression with the square loss will generally not satisfy benign overfitting. These were all regression problems, where to get low loss the prediction value must be close to some optimal value. 

In this section, we turn to consider binary linear classification setups, where we only care about the sign of $\bx^\top \bw$ rather than its exact value, and see that the situation there is much more favorable. As in the case of regression, we will focus on input distributions which can be decomposed to some arbitrary distribution on the first $k$ coordinates, and a high-dimensional distribution on the last $d-k$ coordinates (for example, a spherically symmetric distribution).

As discussed in \secref{sec:preliminaries}, whereas for regression we care about the minimum-norm interpolating predictor, for classification we care about the max-margin predictor, $\hat{\bw}=\arg\min_{\bw}:\min_{i\in [m]}y_i\bx_i^\top\bw\geq 1$. To study benign overfitting for such problems, we need an asymptotic characterization of $\hat{\bw}$, similar to what we have done for regression. However, this might seem difficult, since unlike the min-norm predictor, the max-margin predictor does not have a closed-form expression. In fact, many previous analyses of benign overfitting in classification resorted to additional assumptions which make the max-margin predictor coincide with the min-norm solution, $\arg\min_{\bw}\norm{\bw}:\frac{1}{m_d}\sum_{i=1}^{m_d}(\bx_i^\top\bw-y_i)^2=0$. We take a different route, which applies  even when the max-margin and min-norm solutions do not coincide: We show that at least for distributions falling within our framework, the first $k$ coordinates of $\hat{\bw}_d$ asymptotically minimize the expectation of a (weighted) \emph{squared hinge loss} on those coordinates, namely $\arg\min_{\bv\in \reals^k}\E\left[\frac{[1-y\bx_{|k}^\top \bv]_+^2}{\norm{\bx_{|d-k}}^2}\right]$. As in the case of regression, we will first show how to derive this using a deterministic perturbation bound (depending on the extent to which the high-dimensional components in the data are far from being perfectly orthogonal), followed by a probabilistic asymptotic characterization of the max-margin predictor, and finally discuss its implications. 

\subsection{A Deterministic Perturbation Bound}

Recall that our analysis for regression relied on the assumption that the input distribution on the last $d-k$ coordinates is high-dimensional, in the sense that $\bx_{i|d-k}^\top \bx_{j|d-k}\approx 0$ for $i\neq j$. With this scenario in mind, we present the following deterministic perturbation bound, which characterizes $\hat{\bw}_{|k}$ and the norm of $\hat{\bw}_{|d-k}$ when such inner products are small. As in the case of regression, we utilize a perturbation matrix $E\in \reals^{m\times m}$, which is now defined as
\begin{equation}\label{eq:Edef2}
E_{i,j}~=~y_iy_j\mathbf{\bx}_{i|d-k}^\top\mathbf{\bx}_{j|d-k}\cdot \mathbf{1}(i\neq j)~~\forall i,j\in [m]~,
\end{equation}
and use $\hat{\E}[f(\bx,y)]$ as shorthand for $\frac{1}{m}\sum_{i=1}^{m}f(\bx_i,y_i)$ for any function $f$.

\begin{theorem}\label{thm:classificationdet}
	Fix some $k\in [d-1]$ and $\{(\bx_i,y_i)\}_{i=1}^{m}\subseteq \reals^d\times \{-1,+1\}$. Suppose  $\epsilon_0:=\frac{2\norm{E}\cdot\max_{i\in[m]}\norm{\bx_{i|d-k}}^2}{\min_{i\in[m]}\norm{\bx_{i|d-k}}^4}\leq \frac{1}{2}$. Then the max-margin predictor
	$\hat{\bw}=\arg\min \norm{\bw}: \forall i\in [m], y_i\bx_i^\top \bw\geq 1$ (assuming it exists) satisfies the following:
	\[
	\hat{\bw}_{|k}~=~ \arg\min_{\bv\in \reals^k} ~(1+\epsilon_{\bv})\cdot\hat{\E}\left[\frac{[1-y\bx_{|k}^\top \bv]_+^2}{\norm{\bx_{|d-k}}^2}\right]+\frac{\norm{\bv}^2}{m}~~~\text{and}~~~
	\norm{\hat{\bw}_{|d-k}}^2~\leq~  \frac{5m}{\min_{i\in[m]}\norm{\bx_{i|d-k}}^2}~,
	\]
	where $\epsilon_{\bv}$ satisfies $\sup_{\bv\in\reals^k}|\epsilon_{\bv}|\leq \epsilon_0$, and.
\end{theorem}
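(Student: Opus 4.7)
My plan is to split $\bw = (\bv,\bu) \in \reals^k \times \reals^{d-k}$ with $\bv = \bw_{|k}$, $\bu = \bw_{|d-k}$, and rewrite the max-margin program as an outer minimization $\min_{\bv}(\norm{\bv}^2 + V(\bv))$, where
\[
V(\bv) ~:=~ \min_{\bu}\norm{\bu}^2 \quad \text{s.t.} \quad y_i \bx_{i|d-k}^\top \bu \,\geq\, s_i(\bv) := 1 - y_i \bx_{i|k}^\top \bv, \quad \forall i \in [m].
\]
Letting $\tilde{\bx}_i := y_i \bx_{i|d-k}$, $X$ the matrix with rows $\tilde{\bx}_i$, and $G := XX^\top$, we have $G = D + E$ with $D = \mathrm{diag}(\norm{\bx_{i|d-k}}^2)$ and $E$ exactly the perturbation matrix of \eqref{eq:Edef2}. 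In the idealized diagonal case $E = 0$, the KKT conditions give $\alpha_i = [s_i(\bv)]_+/\norm{\bx_{i|d-k}}^2$ and value
\[
V_0(\bv) := \sum_{i=1}^m \frac{[1 - y_i \bx_{i|k}^\top \bv]_+^2}{\norm{\bx_{i|d-k}}^2} ~=~ m\,\hat{\E}\!\left[\frac{[1 - y\bx_{|k}^\top\bv]_+^2}{\norm{\bx_{|d-k}}^2}\right],
\]
which, after dividing the total objective by $m$, matches the expression appearing in the theorem.

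The heart of the proof is a uniform two-sided multiplicative bound $V(\bv) = (1+\epsilon_{\bv})V_0(\bv)$ with $\sup_{\bv}|\epsilon_{\bv}| \leq \epsilon_0$. For the upper bound I would construct an explicit primal-feasible point $\bu^\star := X^\top \balpha^\star$ with $\balpha^\star := (D+E)^{-1}[\bs(\bv)]_+$; by construction $\tilde{\bx}_j^\top \bu^\star = [s_j(\bv)]_+ \geq s_j(\bv)$, and its squared norm equals $[\bs(\bv)]_+^\top (D+E)^{-1}[\bs(\bv)]_+$. Conjugating by $D^{-1/2}$ and bounding $\norm{D^{-1/2} E D^{-1/2}} \leq \norm{E}/\min_i \norm{\bx_{i|d-k}}^2$ via a Neumann-series argument gives $V(\bv) \leq V_0(\bv)/(1-\epsilon_0/2)$. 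For the matching lower bound I would feed the idealized (dual-feasible since non-negative) choice $\alpha_i = [s_i(\bv)]_+/\norm{\bx_{i|d-k}}^2$ into the QP dual $2\balpha^\top \bs(\bv) - \balpha^\top G\balpha$: the $\balpha^\top D \balpha$ term cancels exactly against half of $2\balpha^\top \bs(\bv)$ to leave $V_0(\bv)$, and the residual $\balpha^\top E \balpha$ is absorbed into $\epsilon_0 V_0(\bv)$. The $\max_i \norm{\bx_{i|d-k}}^2$ factor appearing in $\epsilon_0$ gives the slack needed to accommodate routine constants in these estimates uniformly in $\bv$.

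Combining the two bounds, $\hat{\bw}_{|k}$ minimizes $\norm{\bv}^2 + V(\bv) = \norm{\bv}^2 + (1+\epsilon_{\bv})V_0(\bv)$, which after dividing by $m$ is exactly the stated variational characterization. For the norm bound on $\hat{\bw}_{|d-k}$, I would observe $\norm{\hat{\bw}_{|d-k}}^2 \leq \norm{\hat{\bw}}^2 \leq V(\mathbf{0})$ by plugging $\bv = \mathbf{0}$ into the outer problem, then compute $V_0(\mathbf{0}) = \sum_i 1/\norm{\bx_{i|d-k}}^2 \leq m/\min_i \norm{\bx_{i|d-k}}^2$ and apply $V(\mathbf{0}) \leq (1+\epsilon_0)V_0(\mathbf{0})$; the constants (with $\epsilon_0 \leq 1/2$) absorb into the factor $5$. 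The main obstacle is the perturbation analysis of the inner QP -- specifically obtaining a \emph{multiplicative} rather than merely additive relative error, uniformly in $\bv$, since only then can the perturbed value be written as a scalar $(1+\epsilon_{\bv})$ times the target $V_0(\bv)$ and thereby preserve the variational structure that identifies $\hat{\bw}_{|k}$ as the minimizer of the weighted squared-hinge expression.
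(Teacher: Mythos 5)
Your proposal is correct and follows the same overall architecture as the paper's proof: split $\bw=(\bv,\bu)$, reduce the max-margin program to an outer minimization over $\bv$ of $\norm{\bv}^2$ plus the value $V(\bv)$ of an inner quadratic program, and establish a uniform multiplicative bound $V(\bv)=(1+\epsilon_{\bv})V_0(\bv)$ with $\sup_{\bv}|\epsilon_{\bv}|\leq\epsilon_0$. Where you genuinely differ is in how that perturbation bound is certified. The paper parametrizes the inner problem by $\bu=Z_{|d-k}^\top\balpha$, changes variables to $\bbeta=(D+E)\balpha$ to obtain $\min_{\bbeta\succeq\br}\bbeta^\top(D+E)^{-1}\bbeta$, sandwiches $(D+E)^{-1}$ between $D^{-1}\pm\frac{2\norm{E}}{\lambda_{\min}(D)^2}I$ via \lemref{lem:pertinv}, and solves the two resulting separable diagonal problems in closed form (\lemref{lem:almostorth}). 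You instead exhibit an explicit primal-feasible point $X^\top(D+E)^{-1}[\bs(\bv)]_+$ for the upper bound and plug the idealized multipliers $\alpha_i=[s_i(\bv)]_+/\norm{\bx_{i|d-k}}^2$ into the QP dual for the lower bound; both certificates check out (feasibility is immediate, the weak-duality computation gives $V_0(\bv)-\balpha^\top E\balpha$, and the estimates $\norm{D^{-1/2}ED^{-1/2}}\leq\norm{E}/\lambda_{\min}(D)\leq\epsilon_0/2$ and $\norm{\balpha}^2\leq V_0(\bv)/\lambda_{\min}(D)$ deliver the required $(1\pm\epsilon_0)$ factors, with the degenerate case $V_0(\bv)=0$ handled since then $\bu=\mathbf{0}$ is feasible). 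Your route avoids the change of variables and is arguably a cleaner way to get a two-sided multiplicative error. Your norm bound is also more direct: $\norm{\hat{\bw}_{|d-k}}^2\leq\norm{\hat{\bw}}^2\leq V(\mathbf{0})\leq(1+\epsilon_0)\,m/\min_{i}\norm{\bx_{i|d-k}}^2$, whereas the paper routes through $f_m(\hat{\bw}_{|k})$ and a bound on the empirical squared hinge loss at the optimum, arriving at the weaker constant $(1+\epsilon_0)^2/(1-\epsilon_0)\leq 4.5$; both comfortably give the stated factor of $5$.
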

Thus, we see that $\hat{\bw}_{|k}$ is essentially the minimizer of the empirical average of the (weighted) squared hinge loss discussed earlier, plus a certain regularization term which decays with the data size $m$. This is modified by a $(1+\epsilon_{\bv})$ multiplicative parameter, where $|\epsilon_{\bv}|\leq \epsilon_0$ converges uniformly to $0$ as $\norm{E}\rightarrow 0$. As to $\hat{\bw}_{|d-k}$, as in the case of regression, we bound its norm by an expression generally scaling with $m$ but not with $d$, which implies that its contribution to the prediction (assuming a high-dimensional distribution on these coordinates) is negligible as $d\rightarrow \infty$. 

Before continuing, let us informally explain how this squared hinge loss arises in our analysis (with the formal proof deferred as usual to the appendix). To simplify matters, let us suppose that $\{\bx_{i|d-k}\}_{i=1}^{m}$ are precisely orthogonal (so that $E=0$ in the theorem above). In that case, the max-margin predictor $\hat{\bw}$ can be equivalently written as 
\begin{equation}\label{eq:wint}
\arg\min_{\bw\in \reals^d}\norm{\bw_{|k}}^2+\norm{\bw_{|d-k}}^2~~:~~\forall i\in [m],~ y_i\bx_{i|k}^\top\bw_{|k}+y_i\bx_{i|d-k}^\top\bw_{|d-k}\geq 1~.
\end{equation}
For any fixed $\bw_{|k}$, we therefore wish to make $\norm{\bw_{|d-k}}^2$ as small as possible, while satisfying the constraints, which can also be written as $\forall i\in [m],~ y_i\bx_{i|d-k}^\top\bw_{|d-k}\geq 1-y_i\bx_{i|k}^\top\bw_{|k}$. Since $\{y_i\bx_{i|d-k}\}_{i=1}^{m}$ are orthogonal, it is easy to see that we should pick $\hat{\bw}_{|d-k}$ as follows: If $1-y_i\bx_{i|k}^\top\bw_{|k}\leq 0$, we should make $y_i\bx_{i|d-k}^\top\bw_{|d-k}=0$, and if $1-y_i\bx_{i|k}^\top\bw_{|k}> 0$, we should make $y_i\bx_{i|d-k}^\top\bw_{|d-k}= 1-y_i\bx_{i|k}^\top\bw_{|k}$. By orthogonality of the $\{\bx_{i|d-k}\}_{i=1}^{m}$ vectors, it follows that the optimal $\hat{\bw}_{|d-k}$ equals
\[
\sum_{i:1-y_i\bx_{i|k}^\top \bw_{|k}>0}
\left(1-y_i\bx_{i|k}^\top\bw_{|k}\right)\cdot 
\frac{y_i\bx_{i|d-k}}{\norm{\bx_{i|d-k}}^2}~=~
\sum_i \left[1-y_i\bx_{i|k}^\top\bw_{|k}\right]_+\cdot 
\frac{y_i\bx_{i|d-k}}{\norm{\bx_{i|d-k}}^2}~.
\]
Again by orthogonality, it follows that 
$\norm{\bw_{|d-k}}^2=\sum_i \frac{[1-y_i\bx_{i|k}^\top\bw_{|k}]_+^2}{\norm{\bx_{i|d-k}}^2}$.  Plugging this into \eqref{eq:wint}, we get that $\hat{\bw}_{|k}$ equals
\[
\arg\min_{\bw_{|k}\in \reals^k}~ \norm{\bw_{|k}}^2+\sum_{i=1}^{m} \frac{[1-y_i\bx_{i|k}^\top\bw_{|k}]_+^2}{\norm{\bx_{i|d-k}}^2}~~=~~
\arg\min_{\bw_{|k}\in \reals^k}~ \frac{\norm{\bw_{|k}}^2}{m}+\frac{1}{m}\sum_{i=1}^{m} \frac{[1-y_i\bx_{i|k}^\top\bw_{|k}]_+^2}{\norm{\bx_{i|d-k}}^2}~.
\]
Substituting $\bv$ instead of $\bw_{|k}$ results in the expression for $\hat{\bw}_{|k}$ appearing in the theorem (with $\epsilon_{\bv}=0$). The proof of \thmref{thm:classificationdet} essentially generalizes this argument to the case where $\{\bx_{i|d-k}\}_{i=1}^{m}$ are only approximately orthogonal, and also provides a bound for $\norm{\hat{\bw}_{|d-k}}$.

\subsection{Asymptotic Characterization of the Max-margin Predictor}

Having established the perturbation bound in the previous section, let us now show how this can lead to an asymptotic characterization of the max-margin predictor, under suitable distributional assumptions. As in the case of regression, we present a set of sufficient conditions (which are not the most general possible):

\begin{assumption}\label{assump:classification}
	Suppose $\{\Dcal_d\}_{d=k+1}^{\infty}$ is a sequence of distributions on $\reals^d\times \reals$, and $\{m_d\}_{d=k+1}^{\infty}$ a monotonically increasing sequence of positive integers diverging to $\infty$, such that the following hold:
	\begin{enumerate}
	\item \label{assump:lln} Letting $\E_d$ be shorthand for $\E_{(\bx,y)\sim\Dcal_d}$, it holds that $\sup_d \E_d\left[\frac{1+\norm{y\bx_{k}}^4}{\norm{\bx_{|d-k}}^4}\right]<\infty$. 
	\item \label{assump:boundedx} With probability approaching $1$ over sampling $m_d$ samples from $\Dcal_d$, the max-margin predictor $\hat{\bw}_d$ exists, and 
	$
	\max_{i\in [m_d]}\max\left\{\frac{1}{\norm{\bx_{i|d-k}}},\norm{\bx_{i|d-k}}\right\}~\leq~ c
	$ for some constant $c>0$ independent of $d$. 
	\item \label{assump:E} $m_d\cdot\norm{E}\stackrel{P}{\rightarrow} 0$, where $E$ is as defined in \eqref{eq:Edef2} w.r.t. an i.i.d. sample of size $m_d$ from $\Dcal_d$.
	\item \label{assump:Sigma} $m_d\cdot\norm{\E_{d}[\bx_{|d-k}\bx_{|d-k}^\top]}\rightarrow 0$.
	\item \label{assump:boundedmin} There exists some constant $c'>0$ independent of $d$, such that with probability approaching $1$, the function $\hat{g}_d(\bv):= \hat{\E}_d\left[\frac{[1-y\bx_{|k}^\top \bv]_+^2}{\norm{\bx_{|d-k}}^2}\right]=\frac{1}{m_d}\sum_{i=1}^{m_d}\frac{[1-y_i\bx_{i|k}^\top \bv]_+^2}{\norm{\bx_{i|d-k}}^2}$ has a minimizer with norm at most $c'$.
	\item Letting 
	\[
	g_d(\bv):=\E_d\left[\frac{[1-y\bx_{|k}^\top \bv]_+^2}{\norm{\bx_{|d-k}}^2}\right]~,
	\]
	it holds that $\inf_{\bv}\limsup_d \left(g_d(\bv)-\inf_{\bu}g_d(\bu)\right)= 0$.
\end{enumerate}
\end{assumption}

The first four conditions are similar to the condition in Assumption \ref{assump:regression} for regression, with the only differences being that conditions \ref{assump:lln} and \ref{assump:boundedx} require slightly different functions of the data to be bounded. The rest of the assumptions are very mild when we think of the distribution of $y,\bx_{|k},\norm{\bx_{|d-k}}$ as converging to some fixed distribution as $d\rightarrow \infty$ (for instance, in the setting of Example \ref{example:regression}). In that case, we would expect the minimum-norm minimizers of $\hat{\E}_d\left[\frac{[1-y\bx_{|k}^\top \bv]_+^2}{\norm{\bx_{|d-k}}^2}\right]$ to converge to some fixed limit (as $d,m_d\rightarrow \infty$), and therefore condition \ref{assump:boundedmin} should automatically hold. Similarly, condition $6$ should also hold, as it is equivalent to requiring that the set of near-minimizers of the functions $\{g_d(\cdot)\}$ asymptotically overlap. 

With these conditions at hand, we can now state our asymptotic characterization of the max-margin predictor, in terms of the expected (weighted) squared hinge loss function $g_d(\cdot)$ defined above:

\begin{theorem}\label{thm:classification}
	Suppose $\{\Dcal_d\}_{d=k+1}^{\infty}$ and $\{m_d\}_{d=k+1}^{\infty}$ satisfy Assumption \ref{assump:classification}. Then the max-margin predictor $\hat{\bw}_d=\arg\min\norm{\bw}:\forall i\in [m_d],~ y_i\bx_i^\top \bw\geq 1$ satisfies
	\[
	g_d(\hat{\bw}_{d|k})-\inf_{\bv}g_d(\bv)\stackrel{P}{\longrightarrow} 0~~~\text{and}~~~
	\E_{d}\left[(\bx^\top \hat{\bw}_d-\bx_{|k}^\top \hat{\bw}_{d|k})^2\right]\stackrel{P}{\longrightarrow} 0~.
	\]
\end{theorem}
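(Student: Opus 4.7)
The plan is to combine the deterministic characterization of Theorem~\ref{thm:classificationdet} with a probabilistic argument based on a uniform law of large numbers on a bounded parameter set. First I would verify the hypothesis of Theorem~\ref{thm:classificationdet} with probability approaching $1$: conditions 2 and 3 of Assumption~\ref{assump:classification} together imply $\epsilon_0 := 2\norm{E}\max_i\norm{\bx_{i|d-k}}^2/\min_i\norm{\bx_{i|d-k}}^4 \leq 2c^4 \norm{E}$, so in particular $m_d \epsilon_0 \stackrel{P}{\to} 0$. This yields the representation $\hat\bw_{d|k} = \arg\min_\bv L_d(\bv)$ with $L_d(\bv) := (1+\epsilon_\bv)\hat g_d(\bv) + \norm{\bv}^2/m_d$ and $\sup_\bv |\epsilon_\bv| \leq \epsilon_0$, together with $\norm{\hat\bw_{d|d-k}}^2 \leq 5m_d/c^2$. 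The second conclusion of the theorem is then immediate: $\bx^\top\hat\bw_d - \bx_{|k}^\top\hat\bw_{d|k} = \bx_{|d-k}^\top \hat\bw_{d|d-k}$ gives
\[
\E_d[(\bx_{|d-k}^\top\hat\bw_{d|d-k})^2] \leq \norm{\hat\bw_{d|d-k}}^2 \cdot \norm{\E_d[\bx_{|d-k}\bx_{|d-k}^\top]} \leq (5/c^2) \cdot m_d\,\norm{\E_d[\bx_{|d-k}\bx_{|d-k}^\top]} \stackrel{P}{\to} 0
\]
by condition 4.

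The core of the first conclusion is a boundedness claim for $\norm{\hat\bw_{d|k}}$. Let $\bu^*_d$ be a minimizer of $\hat g_d$ with $\norm{\bu^*_d} \leq c'$ guaranteed by condition 5. Writing $L_d(\hat\bw_{d|k}) \leq L_d(\bu^*_d)$, applying the $(1\pm\epsilon_0)$ bounds on both sides, and using $\hat g_d(\hat\bw_{d|k}) \geq \hat g_d(\bu^*_d) = \min \hat g_d$ on the left, a short rearrangement produces
\[
\norm{\hat\bw_{d|k}}^2 \leq 2\epsilon_0 m_d\, \hat g_d(\bu^*_d) + (c')^2 \leq 2c^2 \epsilon_0 m_d + (c')^2,
\]
where the last inequality uses $\hat g_d(\bu^*_d) \leq \hat g_d(\mathbf{0}) = \hat\E_d[1/\norm{\bx_{|d-k}}^2] \leq c^2$ from condition 2. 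Since $\epsilon_0 m_d \stackrel{P}{\to} 0$, this yields $\norm{\hat\bw_{d|k}} \leq c' + o_P(1)$, placing $\hat\bw_{d|k}$ in some fixed ball $B_M \subset \reals^k$ with probability approaching $1$.

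Armed with this boundedness, the remainder is a routine argument combining LLN with the near-optimality of condition 6. For any fixed $\bv$, the second-moment bound of condition 1 and the LLN recalled in Section~\ref{sec:preliminaries} give $\hat g_d(\bv) \stackrel{P}{\to} g_d(\bv)$. Since $\hat g_d$ and $g_d$ are convex and the integrand $[1-y\bx_{|k}^\top\bv]_+^2/\norm{\bx_{|d-k}}^2$ is locally Lipschitz in $\bv$ with $L^2$-bounded Lipschitz constant on $B_M$ (again by condition 1), a standard uniform LLN in the fixed dimension $k$ yields $\sup_{\bv \in B_M}|\hat g_d(\bv) - g_d(\bv)| \stackrel{P}{\to} 0$. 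Finally, given $\delta > 0$, condition 6 provides a bounded $\bv^\star$ with $g_d(\bv^\star) \leq \inf_\bv g_d(\bv) + \delta$ for all large $d$; chaining $(1-\epsilon_0)\hat g_d(\hat\bw_{d|k}) \leq L_d(\hat\bw_{d|k}) \leq L_d(\bv^\star)$ with the pointwise LLN applied at the bounded vector $\bv^\star$ gives $\hat g_d(\hat\bw_{d|k}) \leq \inf g_d + \delta + o_P(1)$, and the uniform convergence transfers this to $g_d(\hat\bw_{d|k}) \leq \inf g_d + \delta + o_P(1)$. Letting $\delta \to 0$ yields the claim.

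The main obstacle is the boundedness step: the trivial bound $\norm{\hat\bw_{d|k}} \leq \sqrt{m_d\, L_d(\hat\bw_{d|k})} = O(\sqrt{m_d})$ would force a uniform LLN over a ball of radius $O(\sqrt{m_d})$ in $\reals^k$, which is not automatic for $k \geq 2$ from only $m_d$ samples. The trick is to compare against the unregularized $\hat g_d$-minimizer $\bu^*_d$ from condition 5 rather than a generic comparator, so that the small perturbation $\epsilon_0$ multiplies the bounded quantity $\hat g_d(\bu^*_d) \leq c^2$ and not something growing with $m_d$, collapsing the would-be $m_d$ factor. Once this is in place, the rest reduces to standard empirical-process reasoning and the near-optimality assumption.
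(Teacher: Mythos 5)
Your proposal is correct and follows essentially the same route as the paper's proof: verify the hypotheses of \thmref{thm:classificationdet} via conditions 2--3, dispose of the second claim with condition 4, establish boundedness of $\hat{\bw}_{d|k}$ by comparing against the bounded minimizer of $\hat{g}_d$ from condition 5 (so that $\epsilon_0$ multiplies only the $O(1)$ quantity $\hat{g}_d(\bu^*_d)$), then combine a uniform law of large numbers on a fixed compact set with the near-optimality condition 6. The observation you flag as the main obstacle — that the naive $O(\sqrt{m_d})$ bound on $\norm{\hat{\bw}_{d|k}}$ would break the uniform LLN, and that comparing to the unregularized empirical minimizer collapses the $m_d$ factor — is exactly the mechanism the paper uses.
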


It is interesting to note that the max-margin predictor is asymptotically characterized in terms of a squared hinge loss, even though this loss does not appear explicitly in its definition (and moreover, the max-margin predictor itself arises from training gradient-based methods on losses which are definitely not the squared hinge loss). Instead, the loss naturally arises from our analysis. We note that this loss achieves the same value as the square loss for examples $(\bx,y)$ where $\bx^\top \hat{\bw}_d=y$, but is otherwise distinct. Thus, there is no contradiction with previous results on benign overfitting in classification that focused on situations where the max-margin and min-norm predictors coincide.

\subsection{Implications for Benign Overfitting in Classification}

\thmref{thm:classification} implies that in our binary classification model, the max-margin predictor $\hat{\bw}$ is such that its last $d-k$ coordinates are asymptotically immaterial, whereas the first $k$ coordinates asymptotically minimize the function 
$g_d(\bv)~=~ \E_d\left[\frac{[1-y\bx_{|k}^\top \bv]_+^2}{\norm{\bx_{|d-k}}^2}\right]$. 
Thus, the next natural step is to understand whether the minimizers of this function on the first $k$ coordinates result in good predictors with respect to the misclassification error $\Pr(y\bx^\top\bw\leq 0)$. We note that since this loss is not identical to misclassification error, we cannot hope this to hold in the worst-case: Indeed, see Appendix \ref{app:hingebad} for an explicit example. . However, it is also not an unreasonable requirement, since predictors that attempt to minimize $\frac{[1-y\bx_{|k}^\top \bv]_+^2}{\norm{\bx_{|d-k}}^2}$ over $\bv$ will also tend to make $y\bx_{|k}^\top \bv$ positive, and hence (since the last $d-k$ coordinates have negligible effect) make the expected misclassification error $\Pr(y\bx^\top \bw\leq 0)$ small. A different way to phrase this question is whether the data distribution is such that the weighted squared hinge loss function $\bv\mapsto \frac{[1-y\bx_{|k}^\top \bv]_+^2}{\norm{\bx_{|d-k}}^2}$ is a good \emph{surrogate loss function} for the misclassification error. We note that this question has already been studied for other surrogate losses (see \citet{frei2021agnostic,ji2022agnostic} for some recent examples). 

Our results imply that asymptotically, the max-margin predictor depends only on the joint distribution of $\bx_{|k},\norm{\bx_{|d-k}},y$. Thus, to simplify the discussion in the remainder of this section, we will assume this distribution is fixed for all $d$, and satisfies some mild conditions:
\begin{assumption}\label{assump:classsimple}
	The distribution sequence $\{\Dcal_d\}_{d=k+1}^{\infty}$ on $(\bx,y)\in \reals^d\times \{-1,+1\}$ is such that the joint distribution of $(\bx_{|k},\norm{\bx_{|d-k}},y)$ is the same under any $d$. Moreover, this distribution $\Dcal$ is such that $\E[\bx_{|k}\bx_{|k}^\top]$ is positive definite, and $\Pr(\norm{\bx_{|d-k}}\in [l,u])=1$ for some $[l,u]\subset (0,\infty)$. 
\end{assumption}
To give a simple example, consider the case where $\Dcal_d$ is defined as some fixed distribution over $(\bx_{|k},y)$, and the marginal distribution of $\bx_{|d-k}$ is uniform over some origin-centered sphere\footnote{One can also consider the case of $\bx_{|d-k}$ being a zero-mean Gaussian with covariance matrix  $\frac{\sigma^2}{d-k}I$ for some $\sigma^2>0$, in which case $\norm{\bx_{|d-k}}$ will concentrate around $\sigma$. In the assumption, we slightly simplify this by assuming $\norm{\bx_{|d-k}}$ already has such a limit distribution.}. Under this assumption, the function $g_d$ can be rewritten as 
\[
g(\bv)~:=~\E_{\bx_{|k},z,y	}\left[z\cdot[1-y\bx_{|k}^\top \bv]_+^2\right]~~~,~~~ \bv\in\reals^k
\]
for some fixed distribution over $\bx_{|k}\in \reals^k,y\in\{-1,+1\}$, and $z\in (0,\infty)$.

Our goal now will be to illustrate how our characterization allows us to prove that benign overfitting does occur in some classification setups, which to the best of our knowledge have not been explicitly studied before. 

\begin{figure}[t]
	\centering
	\includegraphics[scale=0.7]{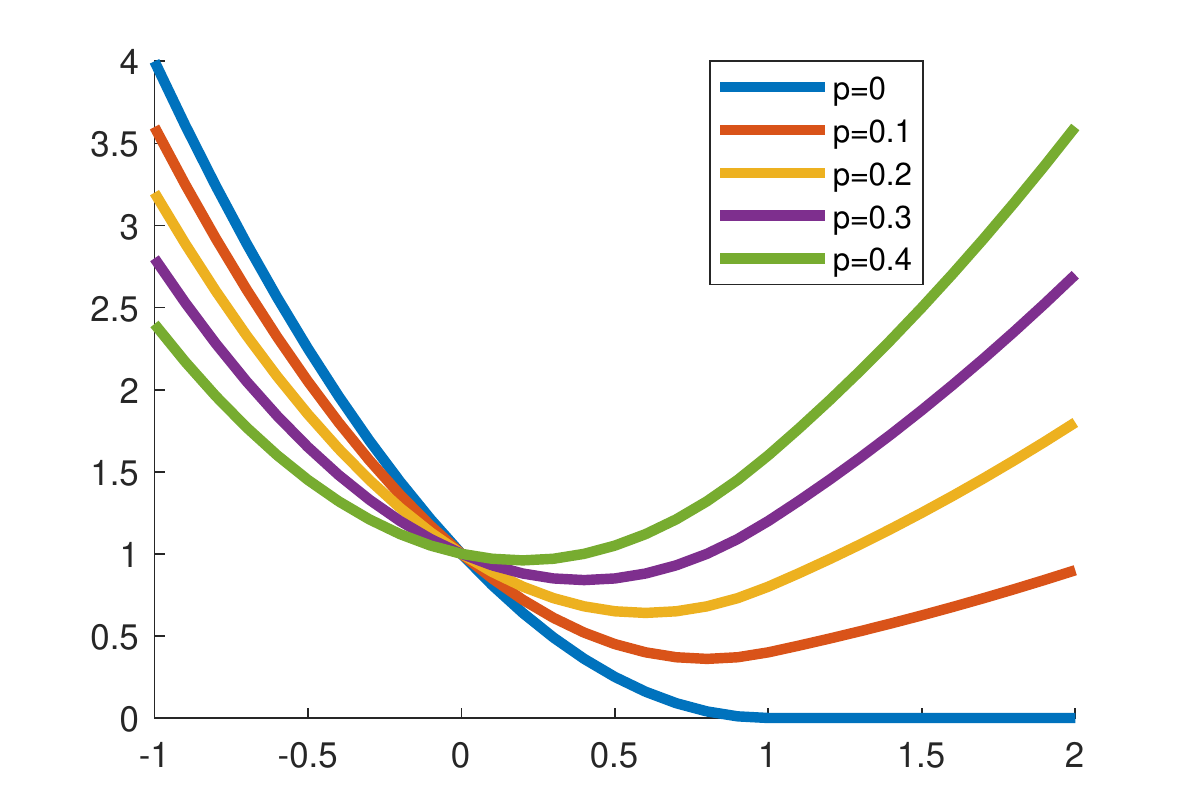}
	\caption{Graphical illustration of the function $\ell_p(\cdot)$ from \eqref{eq:lp} for different values of $p$.}
	\label{fig:lp}
\end{figure}%

For benign overfitting to occur, we need situations where no predictor attains zero error w.r.t. the underlying data distribution. In binary classification setups, the simplest (and most well-studied) case where this occurs is when we have an underlying distribution $\Dcal_{\text{clean}}$ which is \emph{linearly separable} w.r.t. $(\bx_{|k},y)$ (i.e., there is some unit vector $\bw^*\in\reals^k$ such that $\Pr_{\Dcal_{\text{clean}}}(y\bx_{|k}^\top \bw^*< \gamma)=0$ for some margin parameter $\gamma>0$), but where there is random label noise (with each $y$ flipped to $-y$ with some probability $p>0$), resulting in a final distribution $\Dcal$. In such a distribution, $\bw^*$ is still an optimal predictor, but now necessarily its expected misclassification error equals $p$. To model this setting, it will be convenient to assume that $(\bx_{|k},y)$ is still distributed as $\Dcal_{\text{clean}}$, and that we wish to find a predictor $\bw$ satisfying $\Pr_{\Dcal{\text{clean}}}(y\bx_{|k}^\top \bw\leq 0)=0$. However, the predictor $\hat{\bw}$ we learn is with respect to the ``noisy'' labels, and is characterized by the weighted squared hinge loss: Namely, by \thmref{thm:classification}, $\hat{\bw}_{d|k}$ is asymptotically the minimizer of 
\begin{align}
L_p(\bw)~&:=~\E_{(\bx_{|k},z,y)\sim\Dcal_{\text{clean}}}\left[(1-p)\cdot z\cdot[1-y\bx_{|k}^\top \bw]_+^2+p\cdot z\cdot[1+y\bx_{|k}^\top \bw]_+^2\right]\label{eq:lp}\\
&=~\E_{(\bx_{|k},z,y)\sim\Dcal_{\text{clean}}}\E[z\cdot \ell_p(y\bx_{|k}^\top \bw)]~~~\text{where}~~~\ell_p(\beta) := (1-p)\cdot[1-\beta]_+^2+p\cdot[1+\beta]_+^2\notag
\end{align}
(see \figref{fig:lp} for a graphical illustration). It is easily verified that for any $p\in (0,\frac{1}{2})$, $\ell_p$ is a strongly convex function, and therefore $L_p(\cdot)$ is a strongly convex function, as long as $\E[z\cdot \bx_{|k}\bx_{|k}^\top]$ is positive definite  (see \lemref{lem:strongconv} in the appendix for a formal definition of strong convexity and a proof). Therefore, $L_p(\cdot)$  has a unique minimizer $\bw^*_p$, to which $\hat{\bw}_{d|k}$ converges to. Overall, we get that for benign overfitting, it is sufficient that $\bw^*_p$ is an optimal predictor in terms of misclassification error on the ``clean'' labels. This is formalized in the following theorem:  

\begin{theorem}\label{thm:classificationbenign}
	Under the conditions of \thmref{thm:classification} and Assumption \ref{assump:classsimple}, define $R_d(\bw)$ to equal $\Pr_{(\bx,y)\sim\Dcal_d}(y\bx^\top\bw\leq 0)$. Also, suppose that the distribution of $(\bx_{|k},y)$ corresponds to some linearly separable distribution $\Dcal_{\text{clean}}$ with labels flipped with some probability $p\in (0,\frac{1}{2})$.
	
	Then benign overfitting (as defined in \eqref{eq:classificationbenign}) holds
	under the following condition: The (unique) minimizer $\bw_p^*$ of $L_p(\bw)$ satisfies $	\Pr_{(\bx_{|k},y)\sim\Dcal_{\text{clean}}}(y\bx_{|k}^\top\bw_p^*\leq 0)=0$.
\end{theorem}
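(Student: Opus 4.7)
The plan is to establish the two requirements of \eqref{eq:classificationbenign} in turn. Let $y^*$ denote the clean label and $\tilde\Dcal_d$ the joint distribution of $(\bx,y^*)$ (with $\bx$ having the marginal from $\Dcal_d$). Since the label flip is independent of $\bx$, decomposing the error over the flip event yields, for every $\bw\in\reals^d$,
\[
R_d(\bw) ~=~ p+(1-2p)\Pr_{\tilde\Dcal_d}(y^*\bx^\top\bw<0)+(1-p)\Pr_{\tilde\Dcal_d}(y^*\bx^\top\bw=0).
\]
Because $p<\tfrac{1}{2}$, this already gives $R_d(\bw)\geq p$ for every $\bw$, while the choice $\bw=(\bw_p^*,\mathbf{0})$ combined with the hypothesis attains equality, so $\inf_\bw R_d(\bw)=p>0$ (the first requirement). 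It also supplies the bookkeeping bound $R_d(\hat{\bw}_d)-p\leq (1-p)\Pr_{\tilde\Dcal_d}(y^*\bx^\top\hat{\bw}_d\leq 0)$, which I will drive to zero in probability.

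The second step is to promote the $g_d$-suboptimality bound of \thmref{thm:classification} into $\ell_2$-convergence of $\hat{\bw}_{d|k}$. Under Assumption \ref{assump:classsimple} the joint law of $(\bx_{|k},\norm{\bx_{|d-k}},y)$ is $d$-independent, so $g_d$ coincides with the function $L_p$ of \eqref{eq:lp} (with $z=1/\norm{\bx_{|d-k}}^2\in[1/u^2,1/l^2]$). As noted after \eqref{eq:lp}, $L_p$ is $\mu$-strongly convex for some $\mu>0$ with unique minimizer $\bw_p^*$, so the quadratic-growth lower bound combined with \thmref{thm:classification} gives
\[
\norm{\hat{\bw}_{d|k}-\bw_p^*}^2 ~\leq~ \tfrac{2}{\mu}\bigl(L_p(\hat{\bw}_{d|k})-L_p(\bw_p^*)\bigr)~\stackrel{P}{\longrightarrow}~0.
\]

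Finally, decompose $y^*\bx^\top\hat{\bw}_d=y^*\bx_{|k}^\top\bw_p^*+y^*\bx_{|k}^\top(\hat{\bw}_{d|k}-\bw_p^*)+y^*\bx_{|d-k}^\top\hat{\bw}_{d|d-k}$, so that
\[
\{y^*\bx^\top\hat{\bw}_d\leq 0\}~\subseteq~\{y^*\bx_{|k}^\top\bw_p^*\leq \norm{\bx_{|k}}\,\norm{\hat{\bw}_{d|k}-\bw_p^*}+|\bx_{|d-k}^\top\hat{\bw}_{d|d-k}|\}.
\]
The hypothesis $\Pr_{\Dcal_{\text{clean}}}(y^*\bx_{|k}^\top\bw_p^*\leq 0)=0$ combined with monotone continuity of probability gives, for any $\epsilon>0$, some $\gamma=\gamma(\epsilon)>0$ with $\Pr_{\Dcal_{\text{clean}}}(y^*\bx_{|k}^\top\bw_p^*\leq\gamma)\leq\epsilon$. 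Splitting on this threshold and bounding each deviation term by Markov's inequality---using $\E[\norm{\bx_{|k}}^2]<\infty$ from Assumption \ref{assump:classification}, the previous step, and the identity $\E_d[(\bx_{|d-k}^\top\hat{\bw}_{d|d-k})^2]=\E_d[(\bx^\top\hat{\bw}_d-\bx_{|k}^\top\hat{\bw}_{d|k})^2]\stackrel{P}{\to}0$ from \thmref{thm:classification}---one gets $\Pr_{\tilde\Dcal_d}(y^*\bx^\top\hat{\bw}_d\leq 0)\leq \epsilon+r_d$ with $r_d\stackrel{P}{\to}0$; letting $\epsilon\downarrow 0$ closes the argument. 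The main subtlety lies in this last step: because the hypothesis provides only zero probability (rather than a uniform positive margin) for $y^*\bx_{|k}^\top\bw_p^*$, one must introduce the soft-threshold buffer $\gamma(\epsilon)$ and take $\epsilon\to 0$ only after $d\to\infty$.
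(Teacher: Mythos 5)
Your proposal is correct and follows essentially the same route as the paper's proof: the exact decomposition of $R_d$ over the flip event, strong convexity of $L_p$ to upgrade the $g_d$-suboptimality from \thmref{thm:classification} into $\norm{\hat{\bw}_{d|k}-\bw_p^*}\stackrel{P}{\to}0$, and a soft-margin threshold $\gamma(\epsilon)$ combined with Markov's inequality on the tail coordinates to handle the discontinuity of the $0$--$1$ loss (the paper's sets $\Ucal_\gamma$ play exactly the role of your threshold). The only cosmetic differences are that your first step makes the identity $\inf_\bw R_d(\bw)=p$ explicit where the paper merely asserts positivity, and you merge the paper's two separate limiting arguments (first $k$ coordinates, then the full vector) into one event inclusion.
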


Focusing on such linearly-separable-with-label-noise distributions, we now turn to study some cases where the condition on $\bw_p^*$ in \thmref{thm:classificationbenign} indeed holds. For example, the following theorem implies that under mild assumptions, just about \emph{any} choice of distribution $\Dcal_{\text{clean}}$ satisfies benign overfitting, for some non-trivial (distribution-dependent) regime of label noise. As far as we can surmise, this is not at all obvious from the original characterization of the max-margin predictor, where the data points appear as constraints and where introducing label noise changes these constraints in possibly complicated ways. However, using our characterization and properties of the squared hinge loss, the result follows from a rather straightforward continuity argument.

\begin{theorem}\label{thm:universalbo}
	Fix any distribution $\Dcal_{\text{clean}}$ over $(\bx_{|k},z,y)$ satisfying Assumption \ref{assump:classsimple}, which is linearly separable w.r.t. $(\bx_{|k},y)$, and where $\bx_{|k}$ has bounded support. Then there exists some $a\in (0,\frac{1}{2})$ (dependent on $\Dcal_{\text{clean}}$), such that for all $p\in (0,a)$, the minimizer $\bw^*_p$ of $L_p(\cdot)$ satisfies $\Pr_{(\bx_{|k},y)\sim\Dcal_{\text{clean}}}(y\bx_{|k}^\top \bw^*_p\leq 0)=0$. 
\end{theorem}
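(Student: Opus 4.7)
The plan is to show that the minimizer $\bw^*_p$ of $L_p(\cdot)$ converges, as $p \to 0$, to a specific vector $\bw^*_0$ which separates $\Dcal_{\text{clean}}$ with margin $1$, after which a simple continuity/margin argument using the bounded support of $\bx_{|k}$ closes the theorem. First, I would decompose $L_p(\bw) = (1-p)A(\bw) + pB(\bw)$ with $A(\bw) = \E[z[1-y\bx_{|k}^\top\bw]_+^2]$ and $B(\bw) = \E[z[1+y\bx_{|k}^\top\bw]_+^2]$, and introduce the non-empty closed convex set $S = \{\bw : y\bx_{|k}^\top\bw \geq 1~\text{almost surely under}~\Dcal_{\text{clean}}\}$ (non-empty by linear separability of $\Dcal_{\text{clean}}$). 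On $S$, the positive part in $B$ is inactive, so $B$ restricted to $S$ agrees with the strictly convex coercive quadratic $\tilde{B}(\bw) := \E[z(1+y\bx_{|k}^\top\bw)^2]$; its Hessian $2\E[z\bx_{|k}\bx_{|k}^\top]$ is positive definite since $z \geq l > 0$ and $\E[\bx_{|k}\bx_{|k}^\top] \succ 0$ by Assumption \ref{assump:classsimple}. Hence $\bw^*_0 := \arg\min_{\bw \in S} B(\bw)$ exists and is unique.

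Next I would establish that $\{\bw^*_p\}_{p \in (0,1/2)}$ is bounded uniformly in $p$. The key inequality is $L_p(\bw) \geq p\lambda\norm{\bw}^2$ for all $\bw \in \reals^k$, where $\lambda := \lambda_{\min}(\E[z\bx_{|k}\bx_{|k}^\top]) > 0$. To see this, write $\bw = t\bu$ with $\norm{\bu} = 1$ and $\beta := y\bx_{|k}^\top\bu$, and note $[1-t\beta]_+^2 \geq t^2\beta^2\mathbf{1}(\beta < 0)$ and $[1+t\beta]_+^2 \geq t^2\beta^2\mathbf{1}(\beta > 0)$ for $t \geq 0$; combining with $\min(p, 1-p) = p$ gives $L_p(t\bu) \geq pt^2\E[z\beta^2] \geq p\lambda t^2$. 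Paired with $L_p(\bw^*_p) \leq L_p(\bw^*_0) = pB(\bw^*_0)$ (the first summand vanishes on $S$), the factors of $p$ cancel and we obtain $\norm{\bw^*_p}^2 \leq B(\bw^*_0)/\lambda$, a bound independent of $p$.

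Now I would argue that $\bw^*_p \to \bw^*_0$. The same chain yields $(1-p)A(\bw^*_p) \leq L_p(\bw^*_p) \leq pB(\bw^*_0)$, so $A(\bw^*_p) \to 0$. By dominated convergence (using the bounded support of $\bx_{|k}$ and $z \in [l,u]$), both $A$ and $B$ are continuous on $\reals^k$; hence any subsequential limit $\bar{\bw}$ of the bounded sequence $\{\bw^*_p\}$ satisfies $A(\bar{\bw}) = 0$, which together with $z \geq l > 0$ forces $y\bx_{|k}^\top\bar{\bw} \geq 1$ almost surely, i.e., $\bar{\bw} \in S$. Moreover, for every $\bw' \in S$ the inequality $pB(\bw^*_p) \leq L_p(\bw^*_p) \leq L_p(\bw') = pB(\bw')$ gives $B(\bw^*_p) \leq B(\bw')$; passing to the limit yields $B(\bar{\bw}) \leq B(\bw')$, so $\bar{\bw}$ minimizes $B$ over $S$. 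By uniqueness, $\bar{\bw} = \bw^*_0$, and so $\bw^*_p \to \bw^*_0$.

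Finally, letting $M > 0$ be an almost-sure bound on $\norm{\bx_{|k}}$, the inclusion $\bw^*_0 \in S$ gives $y\bx_{|k}^\top\bw^*_p \geq y\bx_{|k}^\top\bw^*_0 - M\norm{\bw^*_p - \bw^*_0} \geq 1 - M\norm{\bw^*_p - \bw^*_0}$ a.s.\ under $\Dcal_{\text{clean}}$. Picking $a \in (0, 1/2)$ small enough that $\norm{\bw^*_p - \bw^*_0} < 1/M$ for all $p \in (0, a)$ (possible by the preceding convergence) then forces $\Pr_{\Dcal_{\text{clean}}}(y\bx_{|k}^\top\bw^*_p \leq 0) = 0$, as required. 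I expect the main obstacle to be the uniform norm bound on $\{\bw^*_p\}$: the strong-convexity modulus of $L_p$ is on the order of $p$ and degenerates as $p \to 0$, so a naive bound on $\norm{\bw^*_p - \bw^*_0}$ obtained from $L_p(\bw^*_0) - \min L_p$ via strong convexity would blow up. The coercivity estimate $L_p(\bw) \geq p\lambda\norm{\bw}^2$ circumvents this precisely because the same factor of $p$ appears on both sides of the resulting inequality and cancels.
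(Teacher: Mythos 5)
Your proof is correct, and it takes a genuinely different route through the middle of the argument than the paper does. The paper rescales $L_p$ to $\tilde{L}_p = \frac{1}{1-2p}L_p$, proves that $p\mapsto \bw^*_p$ is continuous on $(0,\frac{1}{2})$ via the $\frac{2p\lambda_{\min}}{1-2p}$-strong convexity of $\tilde{L}_p$, asserts a limit point $\hat{\bw}$ as $p\rightarrow 0$, and shows $\tilde{L}_p(\hat{\bw})\rightarrow 0$, which forces $\Pr(y\bx_{|k}^\top\hat{\bw}<\frac{1}{2})=0$; the final bounded-support perturbation step is essentially identical to yours. You instead identify the limit explicitly as $\bw^*_0=\arg\min_{\bw\in S}B(\bw)$ over the margin-one separating set $S$, and prove convergence by a compactness/subsequence argument. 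The decisive advantage of your version is the coercivity inequality $L_p(\bw)\geq p\lambda\norm{\bw}^2$, which yields a bound on $\norm{\bw^*_p}$ that is uniform in $p$ because the factor of $p$ cancels against $L_p(\bw^*_0)=pB(\bw^*_0)$. This closes a step that the paper leaves implicit: continuity of $\bw^*_p$ on the open interval $(0,\frac{1}{2})$ does not by itself guarantee a limit (or even a limit point) as $p\rightarrow 0$ without some boundedness, and your estimate supplies exactly that. In exchange, the paper's route gives continuity of $\bw^*_p$ over the whole interval, which is more information than the theorem needs, while yours gives an explicit description of the limiting predictor. Both are valid; yours is the more self-contained of the two.
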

Intuitively, the proof proceeds by arguing that $\bw^*_p$ is continuous in $p\in (0,\frac{1}{2})$, and as $p\rightarrow 0$, necessarily converges to some predictor which separates the data with positive margin. Hence, small perturbations of the predictor will maintain linear separability, and therefore $\bw^*_p$ will remain a linear separator for small positive values of $p$. 

This result holds for generic linearly separable distributions, but does not specify the amount of label noise under which benign overfitting occurs. In the following theorem, we identify one simple class of distributions where benign overfitting occurs with any amount of label noise up to $\frac{1}{2}$:
\begin{theorem}\label{thm:p12}
		Fix any distribution $\Dcal_{\text{clean}}$ on $(\bx_{|k},z,y)$ satisfying Assumption \ref{assump:classsimple}, such that the distribution of $(\bx_{|k},y)$ is linearly separable. Moreover, suppose that for some unit vector $\bu$, and conditioned on any value of $y$, $\bu^\top \bx$ and $(I-\bu\bu^\top)\bx$ are mutually independent, and the distributions of $(I-\bu\bu^\top)\bx$ and $-(I-\bu\bu^\top)\bx$ are identical. Then for all $p\in (0,\frac{1}{2})$, the minimizer $\bw^*_p$ of $L_p(\cdot)$ satisfies $\Pr_{\Dcal_{\text{clean}}}(y\bx_{|k}^\top \bw^*_p\leq 0)=0$.
\end{theorem}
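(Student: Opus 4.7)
My approach has three stages: use the conditional symmetry to reduce the minimizer to the line through $\bu$, use linear separability to argue that $\bu$ itself is a separator (up to sign), and finally determine the sign of the coefficient along $\bu$ via a one-dimensional derivative computation.

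\textbf{Stage 1: reduction to the $\bu$-axis.} Decompose an arbitrary $\bw\in\reals^k$ as $\bw=\alpha\bu+\bq$ with $\bq\perp\bu$, so that $y\bx_{|k}^\top\bw=\alpha\,y(\bu^\top\bx)+y\bq^\top(I-\bu\bu^\top)\bx$. The conditional independence of $\bu^\top\bx$ and $(I-\bu\bu^\top)\bx$ given $y$, combined with the sign-symmetry of $(I-\bu\bu^\top)\bx$ given $y$, implies that the joint conditional law of $(\bu^\top\bx,(I-\bu\bu^\top)\bx)$ given $y$ is invariant under reflecting the second component. Since the scalar $z=\norm{\bx_{|d-k}}$ enters $L_p$ only through the distribution of $(\bx_{|k},z,y)$ and is, under Assumption \ref{assump:classsimple}, governed by a marginal that does not depend on which side of the $\bu$-axis $\bx_{|k}$ lies (one checks that the relevant expectations are unchanged), this reflection gives $L_p(\alpha\bu+\bq)=L_p(\alpha\bu-\bq)$. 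Strict convexity of $L_p$ --- a consequence of $\ell_p$ being strongly convex and $\E[z\,\bx_{|k}\bx_{|k}^\top]\succeq l\cdot\E[\bx_{|k}\bx_{|k}^\top]\succ 0$ (via Assumption \ref{assump:classsimple} and Lemma \ref{lem:strongconv}) --- then forces the unique minimizer to satisfy $\bq^*=-\bq^*$, hence $\bw_p^*=\alpha^*\bu$ for some scalar $\alpha^*$.

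\textbf{Stage 2: $\bu$ is itself a separator.} Let $\bw^\star=\beta\bu+\br$ with $\br\perp\bu$ realize $y\bx_{|k}^\top\bw^\star>0$ almost surely. Writing this as $\beta y(\bu^\top\bx)+y\br^\top(I-\bu\bu^\top)\bx>0$ and applying the same reflection $(I-\bu\bu^\top)\bx\mapsto-(I-\bu\bu^\top)\bx$ yields a second random variable with the same law that is also positive almost surely. Summing the two inequalities gives $2\beta\, y(\bu^\top\bx)>0$ almost surely, so $\beta\neq 0$ and, after replacing $\bu$ by $-\bu$ if necessary, $y\,\bu^\top\bx>0$ almost surely under $\Dcal_{\text{clean}}$.

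\textbf{Stage 3: the coefficient $\alpha^*$ is positive.} Restrict to the line $\alpha\mapsto L_p(\alpha\bu)$. A direct differentiation (both summands of $\ell_p$ are smooth at $\alpha=0$ since the argument inside $[\cdot]_+$ equals $1$) gives
\[
\left.\frac{d}{d\alpha}L_p(\alpha\bu)\right|_{\alpha=0}
~=~ 2(2p-1)\,\E\!\left[z\,y\,\bu^\top\bx\right].
\]
By Stage 2, $y\,\bu^\top\bx>0$ a.s., and $z\geq l>0$ by Assumption \ref{assump:classsimple}, so the expectation is strictly positive; combined with $2p-1<0$, the derivative is strictly negative. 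Strict convexity of $\alpha\mapsto L_p(\alpha\bu)$ then yields $\alpha^*>0$, from which $y\bx_{|k}^\top\bw_p^*=\alpha^*\,y\,\bu^\top\bx>0$ almost surely, as required.

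\textbf{Main obstacle.} The delicate point is Stage 1: the hypothesis only controls the law of $\bx$ given $y$, and the loss contains the weight $z=\norm{\bx_{|d-k}}$. I expect the cleanest justification is to note that Assumption \ref{assump:classsimple} fixes the joint distribution of $(\bx_{|k},\norm{\bx_{|d-k}},y)$ and that, in the data models for which Theorem \ref{thm:classification} is intended (e.g.\ Example \ref{example:regression}), $\norm{\bx_{|d-k}}$ is a function of coordinates independent of $\bx_{|k}$; once this is made precise, the reflection-invariance of the relevant expectation is immediate and the rest of the argument --- strict convexity, the reflection trick on $\bw^\star$, and the one-dimensional derivative sign --- is routine.
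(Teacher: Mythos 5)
Your proposal is correct and follows essentially the same three-step route as the paper's proof: reflection symmetry plus uniqueness of the strongly convex minimizer to force $\bw^*_p\in\text{span}(\bu)$, a symmetrization argument showing $\bu$ itself separates up to sign, and the sign of the one-dimensional derivative of $L_p(\alpha\bu)$ at $\alpha=0$ to pin down the sign of the coefficient. The subtlety you flag about the weight $z=\norm{\bx_{|d-k}}$ is real but is present in the paper's own proof as well, which simply performs the reflection ``conditioned on any $y,z$'' even though the theorem's hypothesis states the symmetry only conditioned on $y$.
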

The conditions in the theorem refer to a situation where there is some distinguished direction $\bu$, such that conditioned on $y$, $\bx_{|k}=\beta\bu+\bs$ for some mutually independent random variables $\beta,\bs$ where $\bs$ is orthogonal to $\bu$ and has a symmetric distribution. Examples where this occurs include any one-dimensional distribution, and a mixture of any two symmetric distributions with means in $\text{span}(\bu)$ (one for $y=1$ and one for $y=-1$, and assuming linear separability). Note that unlike most previous results on benign overfitting in classification, the distributions do not need to be identical nor satisfy any additional structural properties.

\section{Discussion}\label{sec:discussion}

In this paper, we presented several new results on benign overfitting, for both regression and classification. For linear regression with the square loss, we argued that benign overfitting should not be expected to hold in general, once we go beyond well-specified distributions. Moreover, we showed how this can be extended beyond linear regression with the square loss, by an argument proving how the existence of benign overfitting on some regression problems precludes its existence on other regression problems. On the more positive side, for classification problems, we showed that the max-margin is implicitly biased towards minimizing a weighted squared hinge loss w.r.t. the underlying distribution (at least in a model where an arbitrary $k$-dimensional distribution is concatenated with a high-dimensional distribution). We use it to show benign overfitting in various settings, by considering cases where this  squared hinge loss is a good surrogate for the misclassification error. 

Overall, we hope that our observations here will allow us to understand benign overfitting beyond the settings studied so far in the literature. For example, it would be interesting to identify other settings where the structure of the squared hinge loss means that the max-margin predictor will have benign overfitting properties. Moreover, our results focused on input distributions with a clean separation between a few ``important'' coordinates, and a high dimensional distribution on the other coordinates. Although this is a prototypical setting for benign overfitting, our insights can potentially be extended to other input distributions, and identifying them can be an interesting direction for future research. 

Another, more technical issue is that our asymptotic characterization of the min-norm or max-margin predictor require the high-dimensional distribution to be sufficiently ``spread'', which ultimately requires the dimension $d$ to scale sufficiently faster than the sample size $m_d$. For instance, as discussed in Example \ref{example:regression}, if we consider a distribution on $\bx\in\reals^d$ such that $\bx_{|d-k}$ is zero-mean Gaussian with covariance matrix $\frac{1}{d-k}\cdot I$, then we need $m_d^3\log(d)/d\rightarrow 0$ for the perturbation matrix $E$ to decay sufficiently fast, and for \thmref{thm:regression} to hold. A similar requirement also applies to our classification results. For the purposes of our paper, this is not a major issue, since our goal was to understand when benign overfitting might or might not occur assuming the dimension is sufficiently large, and indeed we did not attempt to optimize this condition. Nevertheless, since other papers usually assume a milder scaling of $d$ vs. $m_d$, it would be interesting to understand whether results similar to ours can be obtained under such conditions.

\subsection*{Acknowledgements}
This research is supported in part by European Research Council (ERC) grant 754705. We thank Gilad Yehudai and the anonymous JMLR reviewers for several very helpful comments and suggestions.

\bibliographystyle{plainnat}
\bibliography{bib}

\begin{thebibliography}{39}
\providecommand{\natexlab}[1]{#1}
\providecommand{\url}[1]{\texttt{#1}}
\expandafter\ifx\csname urlstyle\endcsname\relax
  \providecommand{\doi}[1]{doi: #1}\else
  \providecommand{\doi}{doi: \begingroup \urlstyle{rm}\Url}\fi

\bibitem[Bachmann et~al.(2021)Bachmann, Moosavi-Dezfooli, and
  Hofmann]{bachmann2021uniform}
Gregor Bachmann, Seyed-Mohsen Moosavi-Dezfooli, and Thomas Hofmann.
\newblock Uniform convergence, adversarial spheres and a simple remedy.
\newblock \emph{arXiv preprint arXiv:2105.03491}, 2021.

\bibitem[Bartlett and Long(2021)]{bartlett2021failures}
Peter~L Bartlett and Philip~M Long.
\newblock Failures of model-dependent generalization bounds for least-norm
  interpolation.
\newblock \emph{Journal of Machine Learning Research}, 22\penalty0
  (204):\penalty0 1--15, 2021.

\bibitem[Bartlett et~al.(2020)Bartlett, Long, Lugosi, and
  Tsigler]{bartlett2020benign}
Peter~L Bartlett, Philip~M Long, G{\'a}bor Lugosi, and Alexander Tsigler.
\newblock Benign overfitting in linear regression.
\newblock \emph{Proceedings of the National Academy of Sciences}, 117\penalty0
  (48):\penalty0 30063--30070, 2020.

\bibitem[Belkin(2021)]{belkin2021fit}
Mikhail Belkin.
\newblock Fit without fear: remarkable mathematical phenomena of deep learning
  through the prism of interpolation.
\newblock \emph{arXiv preprint arXiv:2105.14368}, 2021.

\bibitem[Belkin et~al.(2018{\natexlab{a}})Belkin, Hsu, and
  Mitra]{belkin2018overfitting}
Mikhail Belkin, Daniel~J Hsu, and Partha Mitra.
\newblock Overfitting or perfect fitting? risk bounds for classification and
  regression rules that interpolate.
\newblock \emph{Advances in Neural Information Processing Systems},
  31:\penalty0 2300--2311, 2018{\natexlab{a}}.

\bibitem[Belkin et~al.(2018{\natexlab{b}})Belkin, Ma, and
  Mandal]{belkin2018understand}
Mikhail Belkin, Siyuan Ma, and Soumik Mandal.
\newblock To understand deep learning we need to understand kernel learning.
\newblock In \emph{International Conference on Machine Learning}, pages
  541--549. PMLR, 2018{\natexlab{b}}.

\bibitem[Belkin et~al.(2019{\natexlab{a}})Belkin, Hsu, Ma, and
  Mandal]{belkin2019reconciling}
Mikhail Belkin, Daniel Hsu, Siyuan Ma, and Soumik Mandal.
\newblock Reconciling modern machine-learning practice and the classical
  bias--variance trade-off.
\newblock \emph{Proceedings of the National Academy of Sciences}, 116\penalty0
  (32):\penalty0 15849--15854, 2019{\natexlab{a}}.

\bibitem[Belkin et~al.(2019{\natexlab{b}})Belkin, Rakhlin, and
  Tsybakov]{belkin2019does}
Mikhail Belkin, Alexander Rakhlin, and Alexandre~B Tsybakov.
\newblock Does data interpolation contradict statistical optimality?
\newblock In \emph{The 22nd International Conference on Artificial Intelligence
  and Statistics}, pages 1611--1619. PMLR, 2019{\natexlab{b}}.

\bibitem[Belkin et~al.(2020)Belkin, Hsu, and Xu]{belkin2020two}
Mikhail Belkin, Daniel Hsu, and Ji~Xu.
\newblock Two models of double descent for weak features.
\newblock \emph{SIAM Journal on Mathematics of Data Science}, 2\penalty0
  (4):\penalty0 1167--1180, 2020.

\bibitem[Cao et~al.(2021)Cao, Gu, and Belkin]{cao2021risk}
Yuan Cao, Quanquan Gu, and Mikhail Belkin.
\newblock Risk bounds for over-parameterized maximum margin classification on
  sub-gaussian mixtures.
\newblock \emph{arXiv preprint arXiv:2104.13628}, 2021.

\bibitem[Chatterji and Long(2021)]{chatterji2021finite}
Niladri~S Chatterji and Philip~M Long.
\newblock Finite-sample analysis of interpolating linear classifiers in the
  overparameterized regime.
\newblock \emph{Journal of Machine Learning Research}, 22\penalty0
  (129):\penalty0 1--30, 2021.

\bibitem[Frei et~al.(2021)Frei, Cao, and Gu]{frei2021agnostic}
Spencer Frei, Yuan Cao, and Quanquan Gu.
\newblock Agnostic learning of halfspaces with gradient descent via soft
  margins.
\newblock In \emph{International Conference on Machine Learning}, pages
  3417--3426. PMLR, 2021.

\bibitem[Frei et~al.(2022)Frei, Chatterji, and Bartlett]{frei2022benign}
Spencer Frei, Niladri~S Chatterji, and Peter Bartlett.
\newblock Benign overfitting without linearity: Neural network classifiers
  trained by gradient descent for noisy linear data.
\newblock In \emph{Conference on Learning Theory}, pages 2668--2703. PMLR,
  2022.

\bibitem[Hastie et~al.(2019)Hastie, Montanari, Rosset, and
  Tibshirani]{hastie2019surprises}
Trevor Hastie, Andrea Montanari, Saharon Rosset, and Ryan~J Tibshirani.
\newblock Surprises in high-dimensional ridgeless least squares interpolation.
\newblock \emph{arXiv preprint arXiv:1903.08560}, 2019.

\bibitem[Hu et~al.(2021)Hu, Wang, Wang, and Li]{hu2021understanding}
Tianyang Hu, Jun Wang, Wenjia Wang, and Zhenguo Li.
\newblock Understanding square loss in training overparametrized neural network
  classifiers.
\newblock \emph{arXiv preprint arXiv:2112.03657}, 2021.

\bibitem[Ji and Telgarsky(2020)]{ji2020directional}
Ziwei Ji and Matus Telgarsky.
\newblock Directional convergence and alignment in deep learning.
\newblock \emph{Advances in Neural Information Processing Systems}, 33, 2020.

\bibitem[Ji et~al.(2022)Ji, Ahn, Awasthi, Kale, and Karp]{ji2022agnostic}
Ziwei Ji, Kwangjun Ahn, Pranjal Awasthi, Satyen Kale, and Stefani Karp.
\newblock Agnostic learnability of halfspaces via logistic loss.
\newblock In \emph{International Conference on Machine Learning}, pages
  10068--10103. PMLR, 2022.

\bibitem[Koehler et~al.(2021)Koehler, Zhou, Sutherland, and
  Srebro]{koehler2021uniform}
Frederic Koehler, Lijia Zhou, Danica~J Sutherland, and Nathan Srebro.
\newblock Uniform convergence of interpolators: Gaussian width, norm bounds,
  and benign overfitting.
\newblock \emph{arXiv preprint arXiv:2106.09276}, 2021.

\bibitem[Liang and Rakhlin(2020)]{liang2020just}
Tengyuan Liang and Alexander Rakhlin.
\newblock Just interpolate: Kernel “ridgeless” regression can generalize.
\newblock \emph{The Annals of Statistics}, 48\penalty0 (3):\penalty0
  1329--1347, 2020.

\bibitem[Liang and Recht(2021)]{liang2021interpolating}
Tengyuan Liang and Benjamin Recht.
\newblock Interpolating classifiers make few mistakes.
\newblock \emph{arXiv preprint arXiv:2101.11815}, 2021.

\bibitem[Long and Servedio(2010)]{long2010random}
Philip~M Long and Rocco~A Servedio.
\newblock Random classification noise defeats all convex potential boosters.
\newblock \emph{Machine learning}, 78\penalty0 (3):\penalty0 287--304, 2010.

\bibitem[McRae et~al.(2021)McRae, Karnik, Davenport, and
  Muthukumar]{mcrae2021harmless}
Andrew~D McRae, Santhosh Karnik, Mark~A Davenport, and Vidya Muthukumar.
\newblock Harmless interpolation in regression and classification with
  structured features.
\newblock \emph{arXiv preprint arXiv:2111.05198}, 2021.

\bibitem[Mei and Montanari(2019)]{mei2019generalization}
Song Mei and Andrea Montanari.
\newblock The generalization error of random features regression: Precise
  asymptotics and the double descent curve.
\newblock \emph{Communications on Pure and Applied Mathematics}, 2019.

\bibitem[Montanari et~al.(2019)Montanari, Ruan, Sohn, and
  Yan]{montanari2019generalization}
Andrea Montanari, Feng Ruan, Youngtak Sohn, and Jun Yan.
\newblock The generalization error of max-margin linear classifiers:
  High-dimensional asymptotics in the overparametrized regime.
\newblock \emph{arXiv preprint arXiv:1911.01544}, 2019.

\bibitem[Muthukumar et~al.(2021)Muthukumar, Narang, Subramanian, Belkin, Hsu,
  and Sahai]{muthukumar2021classification}
Vidya Muthukumar, Adhyyan Narang, Vignesh Subramanian, Mikhail Belkin, Daniel
  Hsu, and Anant Sahai.
\newblock Classification vs regression in overparameterized regimes: Does the
  loss function matter?
\newblock \emph{Journal of Machine Learning Research}, 22\penalty0
  (222):\penalty0 1--69, 2021.

\bibitem[Nagarajan and Kolter(2019)]{nagarajan2019uniform}
Vaishnavh Nagarajan and J~Zico Kolter.
\newblock Uniform convergence may be unable to explain generalization in deep
  learning.
\newblock \emph{Advances in Neural Information Processing Systems}, 32, 2019.

\bibitem[Negrea et~al.(2020)Negrea, Dziugaite, and Roy]{negrea2020defense}
Jeffrey Negrea, Gintare~Karolina Dziugaite, and Daniel Roy.
\newblock In defense of uniform convergence: Generalization via derandomization
  with an application to interpolating predictors.
\newblock In \emph{International Conference on Machine Learning}, pages
  7263--7272. PMLR, 2020.

\bibitem[Poggio and Liao(2019)]{poggio2019generalization}
Tomaso Poggio and Qianli Liao.
\newblock Generalization in deep network classifiers trained with the square
  loss.
\newblock Technical report, CBMM Memo No, 2019.

\bibitem[Soudry et~al.(2018)Soudry, Hoffer, Nacson, Gunasekar, and
  Srebro]{soudry2018implicit}
Daniel Soudry, Elad Hoffer, Mor~Shpigel Nacson, Suriya Gunasekar, and Nathan
  Srebro.
\newblock The implicit bias of gradient descent on separable data.
\newblock \emph{The Journal of Machine Learning Research}, 19\penalty0
  (1):\penalty0 2822--2878, 2018.

\bibitem[Thrampoulidis(2020)]{thrampoulidis2020theoretical}
Christos Thrampoulidis.
\newblock Theoretical insights into multiclass classification: A
  high-dimensional asymptotic view.
\newblock \emph{Neural Information Processing Systems (NeuRIPS 2020)}, 2020.

\bibitem[Wang and Thrampoulidis(2021)]{wang2021benign}
Ke~Wang and Christos Thrampoulidis.
\newblock Benign overfitting in binary classification of gaussian mixtures.
\newblock In \emph{ICASSP 2021-2021 IEEE International Conference on Acoustics,
  Speech and Signal Processing (ICASSP)}, pages 4030--4034. IEEE, 2021.

\bibitem[Wang et~al.(2021)Wang, Muthukumar, and Thrampoulidis]{wang2021benign2}
Ke~Wang, Vidya Muthukumar, and Christos Thrampoulidis.
\newblock Benign overfitting in multiclass classification: All roads lead to
  interpolation.
\newblock \emph{arXiv preprint arXiv:2106.10865}, 2021.

\bibitem[Yang et~al.(2021)Yang, Bai, and Mei]{yang2021exact}
Zitong Yang, Yu~Bai, and Song Mei.
\newblock Exact gap between generalization error and uniform convergence in
  random feature models.
\newblock \emph{arXiv preprint arXiv:2103.04554}, 2021.

\bibitem[Yehudai and Shamir(2020)]{yehudai2020learning}
Gilad Yehudai and Ohad Shamir.
\newblock Learning a single neuron with gradient methods.
\newblock In \emph{Conference on Learning Theory}, pages 3756--3786. PMLR,
  2020.

\bibitem[Zhang et~al.(2017)Zhang, Bengio, Hardt, Recht, and Vinyals]{Zhang2017}
Chiyuan Zhang, Samy Bengio, Moritz Hardt, Benjamin Recht, and Oriol Vinyals.
\newblock Understanding deep learning requires rethinking generalization.
\newblock In \emph{5th International Conference on Learning Representations,
  {ICLR}}, 2017.

\bibitem[Zhang et~al.(2021)Zhang, Bengio, Hardt, Recht, and
  Vinyals]{zhang2021understanding}
Chiyuan Zhang, Samy Bengio, Moritz Hardt, Benjamin Recht, and Oriol Vinyals.
\newblock Understanding deep learning (still) requires rethinking
  generalization.
\newblock \emph{Communications of the ACM}, 64\penalty0 (3):\penalty0 107--115,
  2021.

\bibitem[Zhou et~al.(2020)Zhou, Sutherland, and Srebro]{zhou2020uniform}
Lijia Zhou, DJ~Sutherland, and Nati Srebro.
\newblock On uniform convergence and low-norm interpolation learning.
\newblock \emph{Advances in Neural Information Processing Systems}, 33, 2020.

\bibitem[Zhou et~al.(2021)Zhou, Koehler, Sutherland, and
  Srebro]{zhou2021optimistic}
Lijia Zhou, Frederic Koehler, Danica~J Sutherland, and Nathan Srebro.
\newblock Optimistic rates: A unifying theory for interpolation learning and
  regularization in linear regression.
\newblock \emph{arXiv preprint arXiv:2112.04470}, 2021.

\bibitem[Zhu(2012)]{zhu2012short}
Shenghuo Zhu.
\newblock A short note on the tail bound of wishart distribution.
\newblock \emph{arXiv preprint arXiv:1212.5860}, 2012.

\end{thebibliography}

 \appendix

\section{Proofs}\label{app:proofs}

\subsection{Proof of \thmref{thm:regressiondet}}

We will utilize the following matrix inverse perturbation result:
\begin{lemma}\label{lem:pertinv}
	Let $A$ be some positive definite matrix with minimal eigenvalue $\lambda_{\min}(A)>0$. Then for any symmetric matrix $E$ of the same size such that $\norm{E}\leq \frac{\lambda_{\min}(A)}{2}$, it holds that $A+E$ is invertible and
	\[
	\norm{(A+E)^{-1}-A^{-1}}\leq \frac{2}{\lambda_{\min}(A)^2}\norm{E}.
	\]
\end{lemma}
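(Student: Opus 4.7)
The plan is to prove this by first verifying invertibility and a bound on $\|(A+E)^{-1}\|$, then invoking the standard resolvent identity to get a clean product form for the difference, and finally taking norms.

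First I would show that $A+E$ is invertible. Since $A$ is positive definite and $E$ is symmetric, $A+E$ is symmetric, so Weyl's inequality (or a direct Rayleigh quotient argument) gives $\lambda_{\min}(A+E)\geq \lambda_{\min}(A)-\|E\|\geq \lambda_{\min}(A)/2>0$ under the hypothesis $\|E\|\leq \lambda_{\min}(A)/2$. In particular $A+E$ is invertible, and since its eigenvalues are at least $\lambda_{\min}(A)/2$, we have the operator-norm bound $\|(A+E)^{-1}\|\leq 2/\lambda_{\min}(A)$. Similarly $\|A^{-1}\|=1/\lambda_{\min}(A)$.

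Next I would use the algebraic identity
\[
(A+E)^{-1}-A^{-1}~=~(A+E)^{-1}\bigl(A-(A+E)\bigr)A^{-1}~=~-(A+E)^{-1}\,E\,A^{-1},
\]
which is immediate from pulling $(A+E)^{-1}$ out on the left and $A^{-1}$ out on the right. Taking spectral norms, submultiplicativity gives
\[
\bigl\|(A+E)^{-1}-A^{-1}\bigr\|~\leq~\|(A+E)^{-1}\|\cdot \|E\|\cdot \|A^{-1}\|~\leq~\frac{2}{\lambda_{\min}(A)}\cdot \|E\|\cdot \frac{1}{\lambda_{\min}(A)}~=~\frac{2\,\|E\|}{\lambda_{\min}(A)^2},
\]
which is exactly the claimed inequality.

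There is no real obstacle here; this is a textbook perturbation lemma. The only small subtlety worth stating carefully is that one needs the symmetry of $E$ (together with positive definiteness of $A$) to conclude that $A+E$ is symmetric and to apply Weyl's inequality to control $\lambda_{\min}(A+E)$ from below; without that, one would need a Neumann-series argument instead, giving the same bound up to constants.
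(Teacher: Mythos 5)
Your proof is correct. It shares the paper's first step verbatim: Weyl's inequality gives $\lambda_{\min}(A+E)\geq \lambda_{\min}(A)-\norm{E}\geq \lambda_{\min}(A)/2>0$, establishing invertibility. Where you diverge is in how the difference of inverses is expressed and bounded. The paper applies the Woodbury identity, writing $(A+E)^{-1}=A^{-1}-A^{-1}E(I+A^{-1}E)^{-1}A^{-1}$, and then controls the middle factor via $\norm{(I+A^{-1}E)^{-1}}\leq \frac{1}{1-\norm{A^{-1}}\norm{E}}\leq 2$, assembling the bound as $\frac{\norm{A^{-1}}^2\norm{E}}{1-\norm{A^{-1}}\norm{E}}$. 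You instead use the resolvent identity $(A+E)^{-1}-A^{-1}=-(A+E)^{-1}EA^{-1}$ and bound $\norm{(A+E)^{-1}}\leq 2/\lambda_{\min}(A)$ directly from the Weyl eigenvalue estimate you already proved. Both routes yield exactly the constant $2/\lambda_{\min}(A)^2$. Your version is marginally more economical, since it reuses the spectral lower bound on $A+E$ rather than introducing a separate Neumann-type estimate for $(I+A^{-1}E)^{-1}$; the paper's Woodbury route has the advantage of not requiring symmetry of $A+E$ at the final step (only that $\norm{A^{-1}E}<1$), though in this lemma symmetry is assumed anyway, so nothing is lost either way. One small caveat on your closing remark: the paper's Neumann-style argument in fact recovers the identical constant, not merely the same bound up to constants.
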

\begin{proof}
	By Weyl's inequality, $\lambda_{\min}(A+E)\geq \lambda_{\min}(A)-\norm{E}\geq \frac{\lambda_{\min}(A)}{2}>0$, hence $A+E$ is invertible. Moreover, by the Woodbury matrix identity,
	\[
	(A+E)^{-1}~=~
	A^{-1}-A^{-1}E(I+A^{-1}E)^{-1}A^{-1}~,
	\]
	which implies
	\[
	\norm{(A+E)^{-1}-A^{-1}}~\leq~ \norm{A^{-1}}^2\cdot \norm{E}\cdot \norm{(I+A^{-1}E)^{-1}}~\leq~ \frac{\norm{A^{-1}}^2\cdot \norm{E}}{1-\norm{A^{-1}}\cdot \norm{E}}~.
	\]
	Noting that $\norm{A^{-1}}= \frac{1}{\lambda_{\min}(A)}$ and $\norm{E}\leq \frac{\lambda_{\min}(A)}{2}$, it follows that the above is at most $\frac{\frac{1}{\lambda_{\min}(A)^2}\norm{E}}{1-\frac{1}{\lambda_{\min}(A)}\cdot \frac{\lambda_{\min}(A)}{2}} = \frac{2}{\lambda_{\min}(A)^2}\norm{E}$.
\end{proof}

We now turn to the proof itself. Let $X\in \reals^{m\times d}$ be the matrix whose $i$-th row is $\bx_i$, and let $X_{|k},X_{|d-k}$ be its first $k$ and last $d-k$ columns respectively (so that the $i$-th row of $X_{|k}$ is $\bx_{i|k}$, and that of $X_{|d-k}$ is $\bx_{i|d-k}$). Also, let $\by=(y_1,\ldots,y_m)$. Since $\{\bx_i\}_{i=1}^{m}$ are linearly independent, $X$ has full row rank. Therefore, $XX^\top$ is invertible, and $\hat{\bw}=\arg\min \norm{\bw}:X\bw=\by$ can be written in closed form as
\[
\hat{\bw}~=~ X^\top(XX^\top)^{-1}\by~=~
X^\top\left(X_{|k} X_{|k}^\top + X_{|d-k} X_{|d-k}^\top\right)^{-1}\by~=~
X^\top\left(D+X_{|k} X_{|k}^\top +E\right)^{-1}\by~,
\]
where $E$ is as defined in the theorem statement (namely, the off-diagonal entries of $X_{|d-k}X_{|d-k}^\top$), and 
\[
D~:=~ X_{|d-k}X_{|d-k}^\top-E~=~ \text{diag}(\norm{\bx_{i|d-k}}^2,\ldots,\norm{\bx_{m|d-k}}^2)
\]
is a diagonal matrix. In what follows, it will be useful to note that $\lambda_{\min}(D)=\min_{i\in [m]}\norm{\bx_{m|d-k}}^2$.
	
Continuing, we have by the representation of $\hat{\bw}$ above that
\begin{equation}\label{eq:wdfirst}
	\hat{\bw}~=~X^\top (D+X_{|k}X_{|k}^\top)^{-1}\by + X^\top E' \by~~~\text{where}~~~E' = (D+X_{|k}X_{|k}^\top+E)^{-1}-(D+X_{|k}X_{|k}^\top)^{-1}~.
\end{equation}
Considering the first $k$ and last $d-k$ coordinates separately, it follows that
\begin{equation}\label{eq:wdfirst0}
	\hat{\bw}_{|k}~=~X_{|k}^\top (D+X_{|k}X_{|k}^\top)^{-1}\by + X_{|k}^\top E' \by~~~\text{and}~~~\hat{\bw}_{|d-k}~=~X_{|d-k}^\top (D+X_{|k}X_{|k}^\top)^{-1}\by + X_{|d-k}^\top E' \by~,
\end{equation}
where $E'$ is as defined in \eqref{eq:wdfirst}. 

Applying \lemref{lem:pertinv} with $A=D+X_{|k}X_{|k}^\top$ (noting that $\lambda_{\min}(D+X_{|k}X_{|k}^\top)\geq \lambda_{\min}(D)=\min_{i\in[m]}\norm{\bx_{i|d-k}}^2>0$ and that we assume $\norm{E}\leq \frac{\lambda_{\min}(D)}{2}$), it follows that
$\norm{E'}\leq \frac{2}{\lambda_{\min}(D)^2}\norm{E}$. As a result, and using the fact that the spectral norm is upper bounded by the Frobenius norm, we have
\begin{align*}
	\norm{X_{|k}^\top E' \by}~&\leq~\norm{X_{|k}}\cdot \norm{\by}\cdot \norm{E'}~\leq~ m\cdot \sqrt{\frac{1}{m}\norm{X_{|k}}_F^2}\cdot  \sqrt{\frac{1}{m}\norm{\by}^2}\cdot \frac{2}{\lambda_{\min}(D)^2}\norm{E}\\
	&=~ \frac{2m\norm{E}}{\lambda_{\min}(D)^2}\cdot\sqrt{\hat{\E}[\norm{\bx_{|k}}^2]\cdot \hat{\E}[y^2]}~.
\end{align*}
An identical calculation implies that $\norm{X_{|d-k}^\top E' \by}\leq \frac{2m\norm{E}}{\lambda_{\min}(D)^2}\cdot\sqrt{\hat{\E}[\norm{\bx_{|d-k}}^2]\cdot \hat{\E}[y^2]}$. Plugging these bounds back into \eqref{eq:wdfirst0}, it follows that
\begin{equation}\label{eq:wdsecond11}
	\left\|\hat{\bw}_{|k}-X_{|k}^\top (D+X_{|k}X_{|k}^\top)^{-1}\by\right\|~\leq~ \frac{2m\norm{E}}{\lambda_{\min}(D)^2}\cdot\sqrt{\hat{\E}[\norm{\bx_{|k}}^2]\cdot \hat{\E}[y^2]}
\end{equation}
and
\begin{equation}\label{eq:wdsecond12}
	\left\|\hat{\bw}_{|d-k}-X_{|d-k}^\top (D+X_{|k}X_{|k}^\top)^{-1}\by\right\|~\leq~ \frac{2m\norm{E}}{\lambda_{\min}(D)^2}\cdot\sqrt{\hat{\E}[\norm{\bx_{|d-k}}^2]\cdot \hat{\E}[y^2]}~.
\end{equation}

We now turn to analyze $X_{|k}^\top (D+X_{|k}X_{|k}^\top)^{-1}\by$ and $X_{|d-k}^\top (D+X_{|k}X_{|k}^\top)^{-1}\by$, starting with the first expression. Using the Woodbury matrix identity, we have
\begin{align*}
	(D+X_{|k} X_{|k}^\top)^{-1} ~&=~ D^{-1}- D^{-1}X_{|k}(I+X_{|k}^\top D^{-1} X_{|k})^{-1}X_{|k}^\top D^{-1}\\
	&=~ D^{-1}\left(I-X_{|k}(I+X_{|k}^\top D^{-1} X_{|k})^{-1}X_{|k}^\top D^{-1}\right)
\end{align*}
(note that $I+X_{|k}^\top D^{-1} X_{|k}$ is indeed invertible, since it is the sum of the identity matrix and a positive semidefinite matrix). This implies that
\begin{align}
	X_{|k}^\top\left(D+X_{|k} X_{|k}^\top\right)^{-1}\by~&=~ 
	X_{|k}^\top D^{-1}\left(I-X_{|k}(I+X_{|k}^\top D^{-1} X_{|k})^{-1}X_{|k}^\top D^{-1}\right)\by\notag\\
	&=~\left(I-X_{|k}^\top D^{-1}X_{|k}\left(I+X_{|k}^\top D^{-1} X_{|k}\right)^{-1}\right)X_{|k}^\top D^{-1}\by\notag\\
	&=~\left(\left(I+X_{|k}^\top D^{-1} X_{|k}\right)-X_{|k}^\top D^{-1}X_{|k}\right)\left(I+X_{|k}^\top D^{-1} X_{|k}\right)^{-1}X_{|k}^\top D^{-1}\by\notag\\
	&=~\left(I+X_{|k}^\top D^{-1} X_{|k}\right)^{-1}X_{|k}^\top D^{-1}\by\notag\\
	&=~\left(\frac{1}{m}I+\frac{1}{m}X_{|k}^\top D^{-1} X_{|k}\right)^{-1}\left(\frac{1}{m} X_{|k}^\top D^{-1}\by\right)\notag\\
	&=~\left(\frac{1}{m}I +\hat{\E}\left[\frac{\bx_{|k}\bx_{|k}^\top}{\norm{\bx_{|d-k}}^2}\right]\right)^{-1}\cdot \hat{\E}\left[\frac{y\bx_{|k}}{\norm{\bx_{|d-k}}^2}\right]~.\label{eq:wddd}
\end{align}
Using \lemref{lem:pertinv} and the assumption that $\frac{1}{m}\leq \frac{1}{2}\lambda_{\min}\left(\hat{\E}\left[\frac{\bx_{|k}\bx_{|k}^\top}{\norm{\bx_{|d-k}}^2}\right]\right)$, it follows that
\[
\left\|\left(\frac{1}{m}I +\hat{\E}\left[\frac{\bx_{|k}\bx_{|k}^\top}{\norm{\bx_{|d-k}}^2}\right]\right)^{-1}-\left(\hat{\E}\left[\frac{\bx_{|k}\bx_{|k}^\top}{\norm{\bx_{|d-k}}^2}\right]\right)^{-1}\right\|~\leq~ \frac{2}{ \lambda_{\min}\left(\hat{\E}\left[\frac{\bx_{|k}\bx_{|k}^\top}{\norm{\bx_{|d-k}}^2}\right]\right)^2\cdot m}~.
\]
Combining this with \eqref{eq:wddd} and the Cauchy-Schwarz inequality, it follows that
\begin{align*}
	&\left\|X_{|k}^\top\left(D+X_{|k} X_{|k}^\top\right)^{-1}\by-\left(\hat{\E}\left[\frac{\bx_{|k}\bx_{|k}^\top}{\norm{\bx_{|d-k}}^2}\right]\right)^{-1}\cdot \hat{\E}\left[\frac{y\bx_{|k}}{\norm{\bx_{|d-k}}^2}\right]\right\|\\
	&=~ \left\|\left(\left(\frac{1}{m}I +\hat{\E}\left[\frac{\bx_{|k}\bx_{|k}^\top}{\norm{\bx_{|d-k}}^2}\right]\right)^{-1}-\left(\hat{\E}\left[\frac{\bx_{|k}\bx_{|k}^\top}{\norm{\bx_{|d-k}}^2}\right]\right)^{-1}\right)\cdot\hat{\E}\left[\frac{y\bx_{|k}}{\norm{\bx_{|d-k}}^2}\right]\right\|\\	
	&\leq~ \left\|\left(\frac{1}{m}I +\hat{\E}\left[\frac{\bx_{|k}\bx_{|k}^\top}{\norm{\bx_{|d-k}}^2}\right]\right)^{-1}-\left(\hat{\E}\left[\frac{\bx_{|k}\bx_{|k}^\top}{\norm{\bx_{|d-k}}^2}\right]\right)^{-1}\right\|\cdot\left\|\hat{\E}\left[\frac{y\bx_{|k}}{\norm{\bx_{|d-k}}^2}\right]\right\|\\
	&\leq~ \frac{2}{ \lambda_{\min}\left(\hat{\E}\left[\frac{\bx_{|k}\bx_{|k}^\top}{\norm{\bx_{|d-k}}^2}\right]\right)^2\cdot m}\cdot \left\|\hat{\E}\left[\frac{y\bx_{|k}}{\norm{\bx_{|d-k}}^2}\right]\right\|~.
\end{align*}
Combining the above with \eqref{eq:wdsecond11} using a triangle inequality, it follows that
\begin{align*}
	&\left\|\hat{\bw}_{|k}-\left(\hat{\E}\left[\frac{\bx_{|k}\bx_{|k}^\top}{\norm{\bx_{|d-k}}^2}\right]\right)^{-1}\cdot \hat{\E}\left[\frac{y\bx_{|k}}{\norm{\bx_{|d-k}}^2}\right]\right\|\\
	&~~~~~~~\leq~\frac{2\left\|\hat{\E}\left[\frac{y\bx_{|k}}{\norm{\bx_{|d-k}}^2}\right]\right\|}{\lambda_{\min}\left(\hat{\E}\left[\frac{\bx_{|k}\bx_{|k}^\top}{\norm{\bx_{|d-k}}^2}\right]\right)^2\cdot m}+\frac{2m\norm{E}}{\lambda_{\min}(D)^2}\cdot\sqrt{\hat{\E}[\norm{\bx_{|k}}^2]\cdot \hat{\E}[y^2]}~.
\end{align*}
Recalling that $\lambda_{\min}(D)=\min_{i\in [m]}\norm{\bx_{i|d-k}}^2$, the first bound in the theorem follows.

As to bound on $\norm{\bw_{|d-k}}$ in the theorem, recalling \eqref{eq:wdsecond12}, we need to analyze the expression $X_{|d-k}^\top (D+X_{|k}X_{|k}^\top)^{-1}\by$. By Cauchy-Schwarz, its norm is at most
\begin{align*}
\norm{X_{|d-k}}\cdot \left\|(D+X_{|k}X_{|k}^\top)^{-1}\right\|\cdot \norm{\by}~&=~ \frac{\norm{X_{|d-k}}\cdot \norm{\by}}{\lambda_{\min}(D+X_{|k}X_{|k}^\top)}~\leq~\frac{\norm{X_{|d-k}}_F\cdot \norm{\by}}{\lambda_{\min}(D)}\\
&=~ \frac{m\cdot \sqrt{\hat{\E}[\norm{\bx_{|d-k}}^2]\cdot \hat{\E}[y^2]}}{\lambda_{\min}(D)}~.
\end{align*}
Combining this with \eqref{eq:wdsecond12}, it follows that
\[
\norm{\hat{\bw}_{|d-k}}~\leq~ \frac{m\cdot \sqrt{\hat{\E}[\norm{\bx_{|d-k}}^2]\cdot \hat{\E}[y^2]}}{\lambda_{\min}(D)}\cdot\left(1+\frac{2\norm{E}}{\lambda_{\min}(D)}\right)~.
\]
Recalling that $\lambda_{\min}(D)=\min_{i\in [m]}\norm{\bx_{i|d-k}}^2$, the second bound in the theorem follows.

\subsection{Proof of Theorem \ref{thm:regression}}

Assumption $1$ implies that the law of large numbers holds with respect to the random variables $\norm{\bx_{|k}}^2$, $\norm{\bx_{|d-k}}^2$, $\frac{y\bx_{|k}}{\norm{\bx_{|d-k}}^2}$ and $\frac{\bx_{|k}\bx_{|k}^\top}{\norm{\bx_{|d-k}}^2}$, so their empirical average over $m_d$ i.i.d. instances (where $m_d\rightarrow \infty$) converges in probability to their (finite) expectations. Moreover, by assumption $2$, $\min_{i\in [m_d]}\cdot\norm{\bx_{i|d-k}}$ is at least some positive constant with probability approaching $1$, and $m_d\norm{E}$ becomes arbitrarily small by assumption $3$. All this implies that as $d,m_d\rightarrow \infty$, then with probability approaching $1$, the conditions of \thmref{thm:regressiondet} hold, and the upper bound in its first displayed equation is arbitrarily small. Hence, $\left\|\hat{\bw}_{d|k}-\left(\hat{\E}_d\left[\frac{\bx_{|k}\bx_{|k}^\top}{\norm{\bx_{|d-k}}^2}\right]\right)^{-1} \hat{\E}_d\left[\frac{y\bx_{|k}}{\norm{\bx_{|d-k}}^2}\right]\right\|\stackrel{P}{\rightarrow} 0$, where $\hat{\E}_d[\cdot]$ is the uniform distribution over $\{\bx_i,y_i\}_{i=1}^{m_d}$ sampled i.i.d. from $\Dcal_d$. To prove the first part of the theorem, it remains to show that the expression $\left(\hat{\E}_d\left[\frac{\bx_{|k}\bx_{|k}^\top}{\norm{\bx_{|d-k}}^2}\right]\right)^{-1} \hat{\E}_d\left[\frac{y\bx_{|k}}{\norm{\bx_{|d-k}}^2}\right]$ converges in probability to 
$\left(\E_d\left[\frac{\bx_{|k}\bx_{|k}^\top}{\norm{\bx_{|d-k}}^2}\right]\right)^{-1} \E_d\left[\frac{y\bx_{|k}}{\norm{\bx_{|d-k}}^2}\right]$. This holds, because again by the law of large numbers and our assumptions,
\[
\left\|\hat{\E}_d\left[\frac{y\bx_{|k}}{\norm{\bx_{|d-k}}^2}\right]-\E_d\left[\frac{y\bx_{|k}}{\norm{\bx_{|d-k}}^2}\right]\right\|\stackrel{P}{\longrightarrow} 0
~~\text{and}~~ \left\|\hat{\E}_d\left[\frac{\bx_{|k}\bx_{|k}^\top}{\norm{\bx_{|d-k}}^2}\right]-\E_d\left[\frac{\bx_{|k}\bx_{|k}^\top}{\norm{\bx_{|d-k}}^2}\right]\right\|\stackrel{P}{\rightarrow}~0,
\]
and moreover $\E_d\left[\frac{\bx_{|k}\bx_{|k}^\top}{\norm{\bx_{|d-k}}^2}\right]$  is positive definite by assumption $1$, hence the convergence also holds with respect to the inverse of the matrices.

As to the second part, recalling that $\E_d$ is shorthand for $\E_{(\bx,y)\sim \Dcal_d}$, and fixing some $\hat{\bw}_d$, we have 
\begin{align*}
	\E_d\left[\left(\bx^\top\hat{\bw}_d-\hat{\bx}_{|k}^\top\hat{\bw}_{d|k}\right)^2\right]
	&=~
	\E_d\left[\left(\bx_{|d-k}^\top\hat{\bw}_{d|d-k}\right)^2\right]~=~ \hat{\bw}_{d|d-k}^\top \E_d[\bx_{d-k}\bx_{|d-k}^\top]\hat{\bw}_{d|d-k}\\
	&\leq~ \norm{\hat{\bw}_{d|d-k}}^2\cdot\norm{\E_d[\bx_{d-k}\bx_{|d-k}^\top]}~.
\end{align*}
Assumptions $1,2,3$ and \thmref{thm:regressiondet} imply that with probability approaching $1$, $\norm{\hat{\bw}_{d|d-k}}\leq c'\cdot m_d$, where $c'>0$ is a constant independent of $d$. Plugging into the displayed equation above, and using assumption $4$, implies that the bound in the displayed equation converges to $0$. This holds with probability approaching $1$ over the choice of $\hat{\bw}_d$, hence overall $\E_d\left[\left(\bx^\top\hat{\bw}_d-\hat{\bx}_{|k}^\top\hat{\bw}_{d|k}\right)^2\right]\stackrel{P}{\rightarrow} 0$.

\subsection{Proof of \lemref{lem:linear}}
	Considering $\xi$ which equals $0$ almost surely, we clearly have $\sigma^{-1}(0)=0$. More generally, fix some $c\in \reals$ and $z>0$, and consider the random variable
	\[
	\xi~=~\begin{cases} -c& \text{w.p.}~\frac{z}{z+1}\\
		cz& \text{w.p.}~ \frac{1}{z+1}~,
	\end{cases}
	\]
	which is easily verified to be zero mean. The assumption $\E[\sigma^{-1}(\xi)]=0$ translates to
	\[
	\frac{z}{z+1}\cdot \sigma^{-1}(-c)+\frac{1}{z+1}\cdot \sigma^{-1}(cz)=0~~~\Longrightarrow~~~
	\sigma^{-1}(cz) = -\sigma^{-1}(-c)\cdot z~.
	\]
	Fixing $c=1$ and studying this equation as a function of $z>0$, we see that $\sigma^{-1}(\cdot)$ is necessarily linear over $[0,\infty)$ (with slope $-\sigma^{-1}(-1)$). Similarly, fixing $c=-1$, we get that $\sigma^{-1}$ is necessarily linear over $(-\infty,0]$ (with slope $\sigma^{-1}(1)$). Thus, it only remains to show that $-\sigma^{-1}(-1)=\sigma^{-1}(1)$, which follows by considering the random variable $\xi$ uniformly distributed on $\{-1,1\}$, and noting that $\E[\sigma^{-1}(\xi)]=0$ implies $\sigma^{-1}(-1)+\sigma^{-1}(1)=0$ in this case.

\subsection{Proof of \thmref{thm:classificationdet}}

Consider the optimization problem
\[
\min_{\balpha\in \reals^m} \balpha^\top D\balpha~:~ D\balpha\succeq \br~~~
\Longleftrightarrow~~~ \min_{\balpha\in \reals^{m}}\sum_{i=1}^{m}D_{i,i}\alpha_i^2~:~\forall i\in [m],~ D_{i,i}\alpha_i\geq r_i~
\]
for some diagonal matrix $D\in\reals^{m\times m}$ with positive entries $D_{i,i}$ on the diagonal, and a vector $\br\in\reals^m$. Note that for this problem, it is easily verified that the optimum satisfies $\alpha_i = \frac{[r_i]_+}{D_{i,i}}$ for all $i\in [m]$, hence the optimal value is $\norm{[D^{-1/2}\br]_+}^2=\sum_{i=1}^{d}\frac{[r_i]_+^2}{D_{i,i}}$. The following key technical lemma quantifies by how much the optimal value changes, if we perturb $D$ be some symmetric matrix $E$:
\begin{lemma}\label{lem:almostorth}
	Fix some integer $m\geq 1$, some $\br\in \reals^m$, a diagonal matrix $D\in \reals^{m\times m}$ with positive diagonal entries, and an $m\times m$ symmetric matrix $E$ such that $\norm{E}<\frac{\lambda_{\min}(D)}{2}$. Then the set $\{\balpha\in \reals^m:(D+E)\balpha\succeq \br\}$ is not empty. Moreover, letting
	\begin{equation}\label{eq:lema}
		a^*~=~\min_{\balpha\in \reals^m} \balpha^\top (D+E)\balpha~~:~~ (D+E)\balpha \succeq \br~,
	\end{equation}
	we have
	\[
	\left|a^*-\norm{[D^{-1/2}\br]_+}^2\right|~\leq~ \frac{2\norm{E}\cdot\norm{D}}{\lambda_{\min}(D)^2}\cdot \norm{[D^{-1/2}\br]_+}^2~.
	\]
\end{lemma}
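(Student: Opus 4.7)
The plan is to reformulate the problem via the change of variables $\bz := (D+E)\balpha$, setting $M := D+E$. Nonemptiness of the feasible set follows immediately: Weyl's inequality gives $\lambda_{\min}(M) \geq \lambda_{\min}(D) - \norm{E} > \lambda_{\min}(D)/2 > 0$, so $M$ is invertible and $\balpha = M^{-1}\br$ satisfies $M\balpha = \br \succeq \br$. Under the bijection $\bz = M\balpha$, the objective $\balpha^\top M \balpha$ becomes $\bz^\top M^{-1}\bz$ while the constraint $M\balpha \succeq \br$ becomes the simpler polyhedral constraint $\bz \succeq \br$. Thus
\[
a^* = \min_{\bz \succeq \br} \bz^\top M^{-1} \bz, \qquad a_0 := \min_{\bz \succeq \br} \bz^\top D^{-1}\bz.
\]
Since $D$ is diagonal, the unperturbed problem decouples coordinatewise into $\min_{\zeta_i \geq r_i} \zeta_i^2/D_{ii}$, so its minimizer is $\bz_0 = [\br]_+$ with value $a_0 = \sum_i [r_i]_+^2/D_{ii} = \norm{[D^{-1/2}\br]_+}^2$, which is exactly the target quantity in the lemma. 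The goal is thus reduced to comparing the optima of two quadratic forms with the same polyhedral feasible set.

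For the upper bound on $a^*$, I would plug the feasible point $\bz_0 = [\br]_+$ into the perturbed objective and expand:
\[
a^* \leq \bz_0^\top M^{-1}\bz_0 = \bz_0^\top D^{-1}\bz_0 + \bz_0^\top(M^{-1}-D^{-1})\bz_0 \leq a_0 + \norm{[\br]_+}^2 \cdot \norm{M^{-1}-D^{-1}}.
\]
Lemma~\ref{lem:pertinv} (applied with $A = D$) gives $\norm{M^{-1}-D^{-1}} \leq 2\norm{E}/\lambda_{\min}(D)^2$, while the pointwise inequality $[r_i]_+^2 \leq \norm{D}\cdot [r_i]_+^2/D_{ii}$ summed over $i$ yields $\norm{[\br]_+}^2 \leq \norm{D} \cdot a_0$. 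Combining, $a^* - a_0 \leq \epsilon\cdot a_0$ where $\epsilon := 2\norm{E}\norm{D}/\lambda_{\min}(D)^2$, matching one side of the claim.

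The reverse (lower bound on $a^*$) is where I expect the main work. Letting $\bz^*$ be the minimizer of the perturbed problem, $\bz^*$ is also feasible for the unperturbed problem, so $a_0 \leq \bz^{*\top} D^{-1}\bz^*$. Using the identity $D^{-1}-M^{-1} = D^{-1}EM^{-1}$, this gives
\[
a_0 - a^* \leq \bz^{*\top}(D^{-1}-M^{-1})\bz^* \leq \norm{\bz^*}^2 \cdot \norm{M^{-1}-D^{-1}}.
\]
The main obstacle is bounding $\norm{\bz^*}^2$, since the constraint $\bz^* \succeq \br$ alone gives no such control. My plan is to exploit that $\bz^*$ is itself the minimizer: $\bz^{*\top}M^{-1}\bz^* \geq \norm{\bz^*}^2/\norm{M}$, hence $\norm{\bz^*}^2 \leq \norm{M} \cdot a^*$, and then feed in the upper bound $a^* \leq (1+\epsilon) a_0$ obtained above together with $\norm{M} \leq \norm{D}+\norm{E} \leq \tfrac{3}{2}\norm{D}$. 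This produces $|a_0 - a^*| \leq C\epsilon\cdot a_0$ for an absolute constant $C$; the sharper factor of $2$ stated in the lemma should come out by tracking inequalities more carefully, e.g.\ by bounding $\norm{M^{-1}\bz^*}$ and $\norm{D^{-1}\bz^*}$ separately via $\bz^{*\top}M^{-2}\bz^* \leq \norm{M^{-1}}\cdot a^*$ rather than crudely through $\norm{\bz^*}^2\cdot\norm{M^{-1}-D^{-1}}$, and exploiting $\norm{E}/\lambda_{\min}(D) \leq 1/2$.
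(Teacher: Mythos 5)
Your setup coincides with the paper's: the change of variables $\bz=(D+E)\balpha$ with $M:=D+E$, the reduction to comparing the quadratic forms $\bz^\top M^{-1}\bz$ and $\bz^\top D^{-1}\bz$ over the orthant $\{\bz\succeq\br\}$, and the use of Lemma~\ref{lem:pertinv} to control $\norm{M^{-1}-D^{-1}}$. The nonemptiness argument and the upper bound $a^*-a_0\leq\epsilon\, a_0$ (with $a_0=\norm{[D^{-1/2}\br]_+}^2$ and $\epsilon=2\norm{E}\norm{D}/\lambda_{\min}(D)^2$) are complete and give exactly the stated constant.

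The gap is in the lower bound. Evaluating $D^{-1}$ at the perturbed minimizer $\bz^*$ forces you to control $\norm{\bz^*}^2$, and the route you propose ($\norm{\bz^*}^2\leq\norm{M}a^*\leq\tfrac{3}{2}\norm{D}(1+\epsilon)a_0$) only yields $a_0-a^*\leq\tfrac{3}{2}(1+\epsilon)\epsilon\, a_0$, i.e.\ the claimed inequality up to a constant factor, not the claimed inequality. The suggested sharpening via $\bz^{*\top}M^{-2}\bz^*$ is not worked out: it leads to expressions of the form $\norm{E}\sqrt{a^*\cdot \bz^{*\top}D^{-1}\bz^*}/\lambda_{\min}(D)$, where the second factor under the root is essentially the quantity being bounded, and it is not clear this recovers the stated constant. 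The clean fix --- and what the paper does --- is to make the lower bound the mirror image of your upper bound rather than an argument about $\bz^*$: the bound $\norm{M^{-1}-D^{-1}}\leq\delta:=2\norm{E}/\lambda_{\min}(D)^2$ gives the matrix inequality $M^{-1}\succeq D^{-1}-\delta I$, hence every feasible $\bz$ satisfies $\bz^\top M^{-1}\bz\geq\sum_i (D_{i,i}^{-1}-\delta)z_i^2$. When all coefficients $D_{i,i}^{-1}-\delta$ are nonnegative, this diagonal problem decouples coordinatewise and its minimum over $\bz\succeq\br$ is $\sum_i (D_{i,i}^{-1}-\delta)[r_i]_+^2$, whence $a_0-a^*\leq\delta\sum_i[r_i]_+^2\leq\delta\norm{D}\,a_0=\epsilon\, a_0$ exactly; if some coefficient is negative then $\delta>1/\norm{D}$, so $\epsilon>1$ and the lower bound is vacuous since $a^*\geq 0$. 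No control of $\norm{\bz^*}$ is needed.
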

\begin{proof}
	The fact that $\{\balpha\in \reals^m:(D+E)\balpha\succeq \br\}$ is not empty follows from the observation that $D+E$ is positive definite and hence invertible (since we assume $\norm{E}\leq \frac{\lambda_{\min}(D)}{2}$). Thus, the set contains for instance the vector $(D+E)^{-1}\br$. Also, note that the minimum $a^*$ is indeed attained, as we are minimizing a strongly convex function with (feasible) linear constraints. 
	
	To continue, let us perform the variable change $\bbeta=(D+E)\balpha$ (which is valid since $D+E$ is invertible), so $\balpha=(D+E)^{-1}\bbeta$, and 
	\begin{equation}\label{eq:astar}
		a^*~=~ \min_{\bbeta\in\reals^m} \bbeta^\top(D+E)^{-1}\bbeta~:~\bbeta\succeq \br~.
	\end{equation}
	By \lemref{lem:pertinv} and the assumption $\norm{E}\leq \frac{\lambda_{\min}(D)}{2}$, it follows that
	\[
	\norm{(D+E)^{-1}-D^{-1}}~\leq~ \frac{2\norm{E}}{\lambda_{\min}(D)^2}~.
	\]
	This implies that 
	\[
	D^{-1}+\frac{2\norm{E}}{\lambda_{\min}(D)^2}I~\succeq~ (D+E)^{-1}~\succeq~
	D^{-1}-\frac{2\norm{E}}{\lambda_{\min}(D)^2}I,
	\]
	where $A\succeq B$ for symmetric matrices $A,B$ implies that $A-B$ is positive semidefinite. Plugging this back into \eqref{eq:astar}, it follows that
	\[
	\min_{\bbeta\in\reals^m:\bbeta\succeq \br}\bbeta^\top\left(D^{-1}+\frac{2\norm{E}}{\lambda_{\min}(D)^2}I\right)\bbeta~\geq~ a^*~\geq~\min_{\bbeta\in\reals^m:\bbeta\succeq \br}\bbeta^\top\left(D^{-1}-\frac{2\norm{E}}{\lambda_{\min}(D)^2}I\right)\bbeta~.
	\]
	Now, it is easily verified that for a diagonal matrix $A\in\reals^{m\times m}$ with non-negative diagonal entries,
	\[
	\min_{\bbeta:\bbeta\succeq \br}\bbeta^\top A \bbeta = \min_{\bbeta:\forall i,\beta_i\geq r_i}\sum_{i=1}^{m} A_{i,i} \beta_i^2~=~\sum_{i=1}^{m} A_{i,i} [r_i]_+^2.
	\]
	Plugging this into the previous displayed equation, it follows that
	\[
	\sum_{i=1}^{m} \left(\frac{1}{D_{i,i}}+\frac{2\norm{E}}{\lambda_{\min}(D)^2}\right)[r_i]_+^2~\geq~
	a^*~\geq~
	\sum_{i=1}^{m} \left(\frac{1}{D_{i,i}}-\frac{2\norm{E}}{\lambda_{\min}(D)^2}\right)[r_i]_+^2~.
	\]
	Therefore,
	\[
	\left|a^*-\sum_{i=1}^{m}\frac{[r_i]_+^2}{D_{i,i}}\right|~\leq~
	\sum_{i=1}^{m}[r_i]_+^2\cdot \frac{2\norm{E}}{\lambda_{\min}(D)^2}
	~=~
	\sum_{i=1}^{m}\frac{[r_i]_+^2}{D_{i,i}}\cdot \frac{2\norm{E}D_{i,i}}{\lambda_{\min}(D)^2}
	~\leq~
	\sum_{i=1}^{m}\frac{[r_i]_+^2}{D_{i,i}}\cdot \frac{2\norm{E}\cdot \norm{D}}{\lambda_{\min}(D)^2}~.
	\]
	Noting that $\sum_{i=1}^{m}\frac{[r_i]_+^2}{D_{i,i}} = \norm{[D^{-1/2}\br]_+}^2$ and plugging in the displayed equation above, the bound in the lemma follows. 
\end{proof}

With this lemma in hand, we can now turn to prove the theorem. $\hat{\bw}$ can be equivalently written as 
\[
\hat{\bw}~=~ \arg\min_{\bw\in \reals^d}~\frac{\norm{\bw}^2}{m}~~:~~\forall i\in [m],~ y_i \bx_i^\top \bw\geq 1~.
\]
writing $\bw=(\bv,\bu)$ where $\bv\in\reals^k,\bu\in\reals^{d-k}$, the above is equivalent to
\begin{equation}\label{eq:optprob00}
\arg\min_{\bv\in \reals^k,\bu\in \reals^{d-k}} \frac{\norm{\bv}^2}{m}+\frac{\norm{\bu}^2}{m}~~:~~\forall i\in [m],~ y_i \bx_{i|d-k}^\top\bu \geq 1-y_i\bx_{i|k}^\top \bv~.
\end{equation}
This in turn is equivalent to
\begin{equation}\label{eq:optprob0}
	\arg\min_{\bv\in \reals^k} \frac{\norm{\bv}^2}{m}+f_{m}(\bv)
\end{equation}
where
\[
f_{m}(\bv)~=~ \min_{\bu\in \reals^{d-k}} \frac{\norm{\bu^2}}{m}~~:~~\forall i\in [m],~y_i\bx_{i|d-k}^\top\bu \geq 1-y_i\bx_{i|k}^\top \bv~.
\]
Let $Z_{|k},Z_{|d-k}$ be $m\times (d-k)$ matrices whose $i$-th rows are (respectively) $y_i\bx_{i}$ and $y_i\bx_{i|d-k}$. Also, let $\mathbf{1}$ be the all-ones vector in $\reals^{m}$. Thus, we can write
\[
f_{m}(\bv)~=~\min_{\bu\in\reals^{d-k}}\frac{\norm{\bu}^2}{m}~~:~~ Z_{|d-k}\bu\succeq \mathbf{1}-Z_{|k}\bv~.
\]
Clearly, the optimal $\bu$ must lie in the row span of $Z_{|d-k}$ (otherwise, we can further reduce $\norm{\bu}^2$ by projecting to that subspace, without violating the constraints). Thus, any optimal $\bu$ can be written as $Z_{|d-k}^\top\balpha$ for some $\balpha\in \reals^{m}$, so we can rewrite the displayed equation above as
\[
f_{m}(\bv)~=~\min_{\balpha\in \reals^{m}}\frac{1}{m}\balpha^\top\left(Z_{|d-k} Z_{|d-k}^\top\right)\balpha~~:~~Z_{|d-k}Z_{|d-k}^\top\balpha\succeq\mathbf{1}-Z_{|k}\bv~.
\]
Letting
\[
Z_{|d-k}Z_{|d-k}^\top~=~D+E~,
\]
where $E$ is as defined in the theorem statement, and $D$ being a diagonal matrix consisting of the diagonal of $Z_{|d-k}Z_{|d-k}^\top$ (namely $(\norm{\bx_{1|d-k}}^2,\ldots,\norm{\bx_{m|d-k}}^2)$), we can write the above as
\[
	f_{m}(\bv)~=~\min_{\balpha\in \reals^{m}}\frac{1}{m}\balpha^\top(D+E)\balpha~~:~~(D+E)\balpha \succeq\mathbf{1}-Z_{|k}\bv~.
\]
Applying \lemref{lem:almostorth} on the equation above (and noting that $\norm{E}< \frac{\lambda_{\min}(D)}{2}$, which follows from the assumption $\frac{2\norm{E}\cdot\norm{D}}{\lambda_{\min}(D)^2}=\frac{2\norm{E}\cdot\max_{i\in [m]}\norm{\bx_{i|d-k}}^2}{\min{i\in [m]}\norm{\bx_{i|d-k}}^4}\leq \frac{1}{2}<1$ and the fact that $1\leq\frac{\norm{D}}{\lambda_{\min}(D)}$), we get that
\begin{equation}\label{eq:bwdk25}
\left|f_{m}(\bv)-\frac{1}{m}\norm{[D^{-1/2}(\mathbf{1}-Z_{|k}\bv)]_+}^2\right|~\leq~ \frac{2\norm{E}\cdot\norm{D}}{\lambda_{\min}(D)^2}\cdot \frac{1}{m}\norm{[D^{-1/2}(\mathbf{1}-Z_{|k}\bv)]_+}^2~.
\end{equation}
Plugging this back into \eqref{eq:optprob0}, and plugging in $\frac{1}{m}\norm{[D^{-1/2}(\mathbf{1}-Z_{|k}\bv)]_+}^2=\frac{1}{m}\sum_{i=1}^{m}\frac{[1-y_i\bx_{i|k}^\top \bv]_+^2}{\norm{\bx_{i|d-k}}^2}$ (which holds by definition of $D,Z_{|k}$), we get that
\begin{equation}\label{eq:bwdk1}
\hat{\bw}_{|k}~=~ \arg\min_{\bv} \frac{\norm{\bv}^2}{m}+\left(1+\epsilon_{\bv}\right)\cdot\frac{1}{m}\sum_{i=1}^{m}\frac{[1-y_i\bx_{i|k}^\top \bv]_+^2}{\norm{\bx_{i|d-k}}^2}~,
\end{equation}
where $\epsilon_{\bv}\in \reals$ satisfies
\[
\sup_{\bv\in\reals^k}|\epsilon_{\bv}|~\leq~ \frac{2\norm{E}\cdot\norm{D}}{\lambda_{\min}(D)^2}=\epsilon_0~,
\]
proving the first bound in the theorem. 

To get the second bound, note that since $\hat{\bw}_{|k}$ minimizes the expression in \eqref{eq:bwdk1}, which for $\bv=0$ equals $\frac{1+\epsilon_{\bv}}{m}\sum_{i=1}^{m}\frac{1}{\norm{\bx_{i|d-k}}^2}\leq \frac{1+\epsilon_0}{\min_{i}\norm{\bx_{i|d-k}}^2}=\frac{1+\epsilon_0}{\lambda_{\min}(D)}$, it must hold that
\[
\frac{\norm{\hat{\bw}_{|k}}^2}{m}+(1+\epsilon_{\hat{\bw}_{|k}})\cdot\frac{1}{m}\sum_{i=1}^{m}\frac{[1-y_i\bx_{i|k}^\top \hat{\bw}_{|k}]_+^2}{\norm{\bx_{i|d-k}}^2}~\leq~\frac{1+\epsilon_0}{\lambda_{\min}(D)}~,
\]
and since $1+\epsilon_{\hat{\bw}_{|k}}\geq 1-\epsilon_0>0$, it follows that 
\begin{equation}\label{eq:bwdk3}
\frac{1}{m}\sum_{i=1}^{m}\frac{[1-y_i\bx_{i|k}^\top \hat{\bw}_{|k}]_+^2}{\norm{\bx_{i|d-k}}^2}~\leq~ \frac{1+\epsilon_0}{(1-\epsilon_0)\lambda_{\min}(D)}~.
\end{equation}
Next, recall from \eqref{eq:optprob0} and the fact that $(\hat{\bw}_{|k},\hat{\bw}_{|d-k})$ jointly optimize \eqref{eq:optprob00} that
\[
f_{m}(\hat{\bw}_{|k})~=~\frac{\norm{\hat{\bw}_{|d-k}}^2}{m}~.
\]
Combining with \eqref{eq:bwdk25} (with $\bv=\hat{\bw}_{|k}$), it follows that 
\[
\frac{\norm{\hat{\bw}_{|d-k}}^2}{m}\leq
\left(1+\frac{2\norm{E}\cdot\norm{D}}{\lambda_{\min}(D)^2}\right)\cdot\frac{1}{m}\norm{[D^{-1/2}\left(\mathbf{1}-Z_{|k}\hat{\bw}_{|k}\right)]_+}^2=
\left(1+\epsilon_0\right)\cdot\frac{1}{m}\sum_{i=1}^{m}\frac{[1-y_i\bx_{i|k}^\top \hat{\bw}_{|k}]_+^2}{\norm{\bx_{i|d-k}}^2}~,
\]
where we recall that $\epsilon_0=\frac{2\norm{E}\cdot\norm{D}}{\lambda_{\min}(D)^2}$.
Combining this with \eqref{eq:bwdk3}, we get
\[
\frac{\norm{\hat{\bw}_{|d-k}}^2}{m}~\leq~ \frac{(1+\epsilon_0)^2}{(1-\epsilon_0)\lambda_{\min}(D)}~,
\]
which is less than $5/\lambda_{\min}(D)$ (since $\epsilon_0\in [0,\frac{1}{2}]$). Multiplying both sides by $m$, and plugging in $\lambda_{\min}(D)=\min_{i\in[m]}\norm{\bx_{i|d-k}}^2$, results in the second bound in the theorem.

\subsection{Proof of \thmref{thm:classification}}

Assumptions \ref{assump:boundedx} and \ref{assump:E} imply that with probability approaching $1$ as $d$ increases, $\hat{\bw}_d$ exists and the parameter $\epsilon_0$ from \thmref{thm:classificationdet} is arbitrarily small. Under that event,  \thmref{thm:classificationdet} applies, and $\norm{\hat{\bw}_{d|d-k}}^2\leq c_0\cdot m_d$ for some $c_0>0$ independent of $d$. Therefore, with probability approaching $1$,
\begin{align*}
	\E_d\left[\left(\bx^\top\hat{\bw}_d-\bx_{|k}^\top\hat{\bw}_{d|k}\right)^2\right]
	&=~
	\E_d\left[\left(\bx_{|d-k}^\top\hat{\bw}_{d|d-k}\right)^2\right]~=~ \hat{\bw}_{d|d-k}^\top \E_d[\bx_{d-k}\bx_{|d-k}^\top]\hat{\bw}_{d|d-k}\\
	&\leq ~ \norm{\hat{\bw}_{d|d-k}}^2\cdot\norm{\E_d[\bx_{d-k}\bx_{|d-k}^\top]}~\leq~ c_0\cdot m_d\cdot \norm{\E_d[\bx_{d-k}\bx_{|d-k}^\top]}~,
\end{align*}
which by assumption \ref{assump:Sigma} converges to $0$. All this implies that $\E_d\left[\left(\bx^\top\hat{\bw}_d-\bx_{|k}^\top\hat{\bw}_{d|k}\right)^2\right]\stackrel{P}{\longrightarrow} 0$.

We now turn to analyze $\hat{\bw}_{d|k}$, which by \thmref{thm:classificationdet} and assumptions \ref{assump:boundedx},\ref{assump:E} satisfies 
\begin{equation}\label{eq:wdkdef}
\hat{\bw}_{d|k}~=~\arg\min_{\bv\in\reals^k}~(1+\epsilon_{d,\bv})\cdot \hat{g}_d(\bv)+\frac{\norm{\bv}^2}{m_d}
\end{equation}
with probability approaching $1$, where $\hat{g}_d(\bv):= \hat{\E}_d\left[\frac{[1-y\bx_{|k}^\top \bv]_+^2}{\norm{\bx_{|d-k}}^2}\right]=\frac{1}{m_d}\sum_{i=1}^{m_d}\frac{[1-y_i\bx_{i|k}^\top \bv]_+^2}{\norm{\bx_{i|d-k}}^2}$ and  $\sup_{\bv}|\epsilon_{d,\bv}|\leq \epsilon_d$ for some random variable $\epsilon_d$ satisfying $m_d\epsilon_d \stackrel{P}{\longrightarrow} 0$ as $d\rightarrow \infty$.

First, recalling that  $g_d(\bv)=\E_d\left[\frac{[1-y\bx_{|k}^\top \bv]_+^2}{\norm{\bx_{|d-k}}^2}\right]$, we have by assumption \ref{assump:lln} and the law of large numbers that $\hat{g}_d(\bv)-g_d(\bv)\stackrel{P}{\rightarrow} 0$ for any fixed $\bv$. Moreover, by assumption \ref{assump:boundedx}, both $g_d(\cdot)$ and $\hat{g}_d(\cdot)$ are $c_{\Vcal}$-Lipschitz on any fixed compact set $\Vcal$ in $\reals^k$ (where $c_{\Vcal}$ depends on $\Vcal$ but not on $d$, and with probability approaching $1$ for $\hat{g}_{d}$). Therefore, by a standard covering number argument, it follows that 
\[
	\sup_{\bv\in\Vcal}|\hat{g}_d(\bv)-g_d(\bv)|\stackrel{P}{\rightarrow} 0
\]
for any compact set $\Vcal\subset\reals^k$.

We now wish to argue that with probability approaching $1$, $\hat{\bw}_{d|k}$ lies in some compact set $\{\bv\in\reals^k:\norm{\bv}\leq c''\}$ where $c''$ is a constant independent of $d$. Applying the displayed equation above, this would imply that 
\begin{equation}\label{eq:wdkconverge}
\hat{g}_d(\hat{\bw}_{d|k})-g_d(\hat{\bw}_{d|k})\stackrel{P}{\rightarrow} 0~.
\end{equation}
To justify this, let us compare $\hat{\bw}_{d|k}$ to $\hat{\bv}_d$, which we define as the minimum-norm minimizer  of\footnote{A minimizer of $\hat{g}_d(\cdot)$ always exists, since it is convex piecewise-quadratic with finitely many pieces. The minimum-norm minimizer is unique, since if there were two minimizers of equal minimal norm, their average would also be a minimizer by convexity of $\hat{g}_d(\cdot)$, and with a smaller norm which is a contradiction.} $\hat{g}_d(\cdot)$. By \eqref{eq:wdkdef}, we have with probability approaching $1$ for any large enough $d$ that
\begin{align*}
	(1-\epsilon_d)\hat{g}_d(\hat{\bv}_d)+\frac{\norm{\hat{\bw}_{d|k}}^2}{m_d}~&\leq~ (1+\epsilon_{d,\hat{\bw}_{d|k}})\hat{g}_d(\hat{\bw}_{d|k})+\frac{\norm{\hat{\bw}_{d|k}}^2}{m_d}\\
	&\leq~
	(1+\epsilon_{d,\hat{\bv}_d})\hat{g}_d(\hat{\bv}_d)+\frac{\norm{\hat{\bv}_{d}}^2}{m_d}
	~\leq~ (1+\epsilon_d)\hat{g}_d(\hat{\bv}_d)+\frac{\norm{\hat{\bv}_{d}}^2}{m_d}~.
\end{align*}
Multiplying both sides by $m_d$ and switching sides, it follows that
\[
\norm{\hat{\bw}_{d|k}}^2~\leq~ 2m_d \epsilon_d\cdot g_d(\hat{\bv}_d)+\norm{\hat{\bv}_{d}}^2~\leq~ 2c^2m_d\epsilon_d + \norm{\hat{\bv}_d}^2~,
\]
where the last transition follows from $\hat{\bv}_d$ being a minimizer of $\hat{g}_d(\cdot)$, hence $\hat{g}_d(\hat{\bv}_d)\leq \hat{g}_d(\mathbf{0})=\hat{\E}_d[\frac{1}{\norm{\bx_{|d-k}}^2}]\leq c^2$ by assumption  \ref{assump:boundedx}. Recalling that $m_d\epsilon_d  \stackrel{P}{\longrightarrow} 0$, it follows in particular $\norm{\hat{\bw}_{d|k}}^2\leq 1+\norm{\hat{\bv}_d}^2$ with probability approaching $1$. By assumption \ref{assump:boundedmin} in the theorem, $\norm{\hat{\bv}_d}\leq c'$ for some constant $c'$ with probability approaching $1$. Overall, we get that with probability approaching $1$, $\hat{\bw}_{d|k}$ lies in some compact set $\{\bv\in\reals^k:\norm{\bv}\leq c''\}$ with $c''$ independent of $d$, hence justifying \eqref{eq:wdkconverge} as discussed earlier. 

Next, let us fix some reference vector $\bv_0\in\reals^k$. By assumption \ref{assump:classsimple} and the law of large numbers, $\hat{g}_d(\bv_0)-g_d(\bv_0)\stackrel{P}{\rightarrow} 0$. Combined with \eqref{eq:wdkconverge}, it follows that
\[
\delta_d := \left(g_d(\hat{\bw}_{d|k})-g_d(\bv_0)\right)-\left(\hat{g}_d(\hat{\bw}_{d|k})-\hat{g}_d(\bv_0)\right)~\stackrel{P}{\longrightarrow}~0
\]
Thus, we have the following:
\begin{align*}
	g_d(\hat{\bw}_{d|k})-g_d(\bv_0)~&=~ \hat{g}_d(\hat{\bw}_{d|k})-\hat{g}_d(\bv_0)+\delta_d\\
	&=~
	(1+\epsilon_{d,\hat{\bw}_{d|k}})\cdot \hat{g}_d(\hat{\bw}_{d|k})-\hat{g}_d(\bv_0)+\delta_d-\epsilon_{d,\hat{\bw}_{d|k}}\cdot \hat{g}_d(\hat{\bw}_{d|k})\\
	&\leq~
	(1+\epsilon_{d,\hat{\bw}_{d|k}})\cdot \hat{g}_d(\hat{\bw}_{d|k})+\frac{\norm{\hat{\bw}_{d|k}}^2}{m_d}-\hat{g}_d(\bv_0)+\delta_d-\epsilon_{d,\hat{\bw}_{d|k}}\cdot \hat{g}_d(\hat{\bw}_{d|k})\\
	&\stackrel{\eqref{eq:wdkdef}}{\leq}~	(1+\epsilon_{d,\bv_0})\cdot\hat{g}_d(\bv_0)+\frac{\norm{\bv_0}^2}{m_d}-\hat{g}_d(\bv_0)+\delta_d-\epsilon_{d,\hat{\bw}_{d|k}}\cdot \hat{g}_d(\hat{\bw}_{d|k})\\
	&=~\epsilon_{d,\bv_0}\cdot\hat{g}_d(\bv_0)+\frac{\norm{\bv_0}^2}{m_d}+\delta_d-\epsilon_{d,\hat{\bw}_{d|k}}\cdot \hat{g}_d(\hat{\bw}_{d|k})~.
\end{align*}
We now argue that all the terms in the bound above converge (deterministically or in probability) to $0$: As to the first term, we know that $\sup_{\bv}|\epsilon_{d,\bv}|\stackrel{P}{\rightarrow} 0$, and $\hat{g}_d(\bv_0)$ is at most some fixed value independent of $d$ with probability approaching $1$, by assumption \ref{assump:boundedx} and definition of $\hat{g}_d(\cdot)$. As to the second term, it converges to $0$ since $m_d\rightarrow \infty$. The third term converges in probability to $0$ as discussed above. As to the last term, we know that $\sup_{\bv}|\epsilon_{d,\bv}|\stackrel{P}{\rightarrow} 0$, and $\hat{g}_d(\hat{\bw}_{d|k})$ is bounded by some constant independent of $d$ with probability approaching $1$ (since $\hat{\bw}_{d_k}$ lies in some fixed compact set with probability approaching $1$ as discussed earlier, and the values of $\hat{g}_d(\cdot)$ are bounded independent of $d$ on any fixed compact set with probability approaching $1$, by assumption \ref{assump:boundedx}).

The displayed equation above and the following discussion implies that for any $\beta>0$, $\Pr(g_d(\hat{\bw}_{d|k})-g_d(\bv_0)\geq \beta)\rightarrow 0$. This holds for any fixed $\bv_0$. Combined with the assumption $\inf_{\bv}\limsup_d (g_d(\bv)-\inf_{\bu}g_d(\bu))\leq 0$ (which implies that $\limsup_d (g_d(\bv_0)-\inf_{\bv}g_d(\bv))$ can be made arbitrarily small by choosing $\bv_0$ appropriately), it follows that $\Pr(g_d(\hat{\bw}_{d|k})-\inf_{\bv}g_d(\bv)\geq \beta)\rightarrow 0$ for all $\beta>0$. But since $g_d(\hat{\bw}_{d|k})-\inf_{\bv}g_d(\bv)$ is necessarily non-negative, we get that  $g_d(\hat{\bw}_{d|k})-\inf_{\bv}g_d(\bv)\stackrel{P}{\rightarrow} 0$.

\subsection{Proof of \thmref{thm:classificationbenign}}

Since the labels are flipped with some probability $p>0$, we have $\inf_d \inf_{\bw\in\reals^d}R_d(\bw)>0$. Thus, it remains to prove that under the condition stated in the theorem, $\Pr_{(\bx,y)\sim \Dcal_d}(y\bx^\top \hat{\bw}_d)$ converges in probability to $p$. 

As discussed before the theorem, $L_p(\cdot)$ has a unique minimizer $\bw^*_p$. Therefore, by \thmref{thm:classification}, $\hat{\bw}_{d|k}\stackrel{P}{\rightarrow}\bw^*_p$. Since we assume $\Pr_{(\bx_{|k},y)\sim\Dcal_{\text{clean}}}(y\bx_{|k}^\top\bw_p^*\leq 0)=0$, we argue that
\begin{equation}\label{eq:pras0}
	\Pr_{(\bx_{|k},y)\sim\Dcal_{\text{clean}}}(y\bx_{|k}^\top\hat{\bw}_{d|k}\leq 0)~\stackrel{P}{\longrightarrow}~0~.
\end{equation}
We note that formally proving this requires some care, as $\Pr(y\bx_{|k}^\top \bw\leq 0)$ is not necessarily continuous in $\bw$ (otherwise \eqref{eq:pras0} would follow immediately by continuity). 
To show \eqref{eq:pras0} formally, define for all $\gamma>0$ the set $\Ucal_{\gamma}:=\{\bu\in\reals^k:\bu^\top\bw_p^*>\gamma,\norm{\bu}\leq \frac{1}{\gamma}\}$. Clearly, since $\Pr_{(\bx_{|k},y)\sim\Dcal_{\text{clean}}}(y\bx_{|k}^\top\bw_p^*\leq 0)=0$, we have
\begin{equation}\label{eq:zgamma}
\Pr_{(\bx_{|k},y)\sim\Dcal_{\text{clean}}}(y\bx_{|k}\in \Ucal_{\gamma})~\stackrel{\gamma\rightarrow 0}{\longrightarrow} 1~.
\end{equation}
Moreover, since $\hat{\bw}_{d|k}\stackrel{P}{\rightarrow}\bw^*_p$, it holds that $y\bx_{|k}^\top\hat{\bw}_{d|k}\stackrel{P}{\rightarrow}y\bx_{|k}^\top\hat\bw^*_p$ simultaneously for all vectors $y\bx_{|k}$ of some bounded norm. Therefore, for any fixed $\gamma$, 
\begin{align}
	\Pr_{(\bx_{|k},y)\sim\Dcal_{\text{clean}}}&(y\bx_{|k}^\top\hat{\bw}_{d|k}\leq 0~|~y\bx_{|k}\in \Ucal_{\gamma})\notag\\
	&=~
	\Pr_{(\bx_{|k},y)\sim\Dcal_{\text{clean}}}\left(y\bx_{|k}^\top\hat{\bw}_{d|k}\leq 0~|~y\bx_{|k}^\top\bw^*_p>\gamma,\norm{y\bx_{|k}}\leq \frac{1}{\gamma}\right)~\stackrel{P}{\longrightarrow}~0~.\label{eq:pras}
\end{align}
Recalling that for any two events $E,A$ over some probability space, 
\[
\Pr(A)~=~\Pr(A|E)\cdot\Pr(E)+\Pr(A|\neg E)\cdot\Pr(\neg E)~\leq~ \Pr(A|E)+\Pr(\neg E)~,
\]
it follows that for any fixed $\gamma$,
\[
\Pr_{(\bx_{|k},y)\sim\Dcal_{\text{clean}}}\left(y\bx_{|k}^\top\hat{\bw}_{d|k}\leq 0\right)~\leq~\Pr_{(\bx_{|k},y)\sim\Dcal_{\text{clean}}}(y\bx_{|k}^\top\hat{\bw}_{d|k}\leq 0~|~y\bx_{|k}\in \Ucal_{\gamma})+
\Pr(y\bx_{|k}\notin\Ucal_{\gamma}).
\]
Combined with \eqref{eq:zgamma} and \eqref{eq:pras}, it follows that by picking $\gamma$ sufficiently small, we can make\\  $\Pr_{(\bx_{|k},y)\sim\Dcal_{\text{clean}}}\left(y\bx_{|k}^\top\hat{\bw}_{d|k}\leq 0\right)$ asymptotically smaller than any positive number with arbitrarily high probability, from which \eqref{eq:pras0} follows.

From \eqref{eq:pras0} and the definition of $\Dcal_{\text{clean}}$, it follows that 
\[
\Pr_{(\bx,y)\sim\Dcal_d}(y\bx_{|k}^\top\hat{\bw}_{d|k}\leq 0)\stackrel{P}{\rightarrow}p~.
\]
By \thmref{thm:classification}, we also have that
\begin{equation}\label{eq:emark}
	\E_{(\bx,y)\sim \Dcal_d}\left[(\bx^\top \hat{\bw}_d-\bx_{|k}^\top \hat{\bw}_{d|k})^2\right]~\stackrel{P}{\longrightarrow}~0~.
\end{equation} 
Combining the last two observations, we can use similar arguments as above to prove that
\begin{equation}\label{eq:emarkend}
\Pr_{(\bx,y)\sim\Dcal_d}(y\bx^\top\hat{\bw}_{d}\leq 0)~\stackrel{P}{\longrightarrow}~p~,
\end{equation}
which as discussed earlier implies the theorem statement. Formally, let $\tilde{\Dcal}_{d}$ refer to $\Dcal_{d}$, where $y$ is distributed according to the ``clean'' distribution $\Dcal_{\text{clean}}$. Also, let $\Ucal^d_{\gamma}=\{\bu\in\reals^d:\bu_{|k}^\top \bw^*_p>\gamma,\norm{\bu}\leq \frac{1}{\gamma}\}$. Similar to before, we have
\[
\Pr_{(\bx,y)\sim\tilde{\Dcal}_{d}}(y\bx\in \Ucal^d_{\gamma})~\stackrel{\gamma\rightarrow 0}{\longrightarrow} 1~.
\]
By applying Markov's inequality on \eqref{eq:emark}, it follows that for any $\gamma>0$, the measure of points $y\bx\in \Ucal^d_{\gamma}$ such that $|y\bx^\top\hat{\bw}_d-y\bx^\top_{|k}\hat{\bw}_{d|k}|>\frac{\gamma}{2}$ goes to $0$ in probability. For all other points in $\Ucal^d_{\gamma}$, we have $y\bx_{|k}^\top\hat{\bw}_{d|k}> \gamma~\Rightarrow~y\bx^\top \hat{\bw}_d> \frac{\gamma}{2}$. Recalling that $y\bx_{|k}^\top\hat{\bw}_{d|k}\stackrel{P}{\rightarrow} y\bx_{|k}^\top\bw^*_{p}$ (which is $> \gamma$) uniformly for all $y\bx\in \Ucal^d_{\gamma}$, we get that
\[
\Pr_{(\bx,y)\sim\tilde{\Dcal}_d}(y\bx^\top\hat{\bw}_{d}\leq 0~|~y\bx\in \Ucal^d_{\gamma})~\stackrel{P}{\longrightarrow} 0~.
\]
Combining the two displayed equation above, and using the same arguments made in the context of \eqref{eq:zgamma} and \eqref{eq:pras}, it follows that by choosing $\gamma$ small enough, the probability $\Pr_{(\bx,y)\sim\tilde{\Dcal}_d}(y\bx^\top\hat{\bw}_{d}\leq 0)$ can be made asymptotically smaller than any positive numbers with arbitrarily high probability, from which it follows that
\[
\Pr_{(\bx,y)\sim\tilde{\Dcal}_d}(y\bx^\top\hat{\bw}_{d}\leq 0)~\stackrel{P}{\longrightarrow}~0~.
\]
Switching from $\tilde{\Dcal}_d$ to $\Dcal_d$ (which involves flipping $y$ randomly with probability $p$), \eqref{eq:emarkend} follows.

\subsection{Proof of \thmref{thm:universalbo}}

Recall that a function $f:\reals^k\rightarrow \reals$ is $\lambda$-strongly convex, if for any $\bu,\bv\in \reals^k$ and $\alpha\in [0,1]$,
\begin{equation}\label{eq:strongconv}
	f(\alpha \bu+(1-\alpha)\bv)~\leq~ \alpha\cdot f(\bu)+(1-\alpha)f(\bv)-\alpha(1-\alpha)\cdot\frac{\lambda}{2}\norm{\bu-\bv}^2~.
\end{equation}
Note that any $\lambda$-strongly convex function is also $\lambda'$ strongly convex for any $\lambda'\in [0,\lambda]$. Also, it is well-known that any convex function is $0$-strongly convex, that if $f$ is $\lambda$-strongly convex, then $c\cdot f$ is $c\cdot \lambda$-strongly convex, and that a sum of a $\lambda$-strongly convex function and a $\lambda'$-strongly convex function is $(\lambda+\lambda')$-strongly convex. Moreover, if $f$ is $\lambda$-strongly convex, it always has a finite unique minimizer $\bw^*$, and $f(\bw)-f(\bw^*)\geq \frac{\lambda}{2}\norm{\bw-\bw^*}^2$ for any $\bw$.

We start with the following auxiliary lemma, which implies that $L_p(\cdot)$ is strongly convex under Assumption \ref{assump:classsimple} (since positive definiteness of $\E[\bx_{|k}\bx_{|k}^\top]$ implies positive definiteness of $\E[z\cdot \bx_{|k}\bx_{|k}^\top]$ for $z$ bounded in a positive interval):

\begin{lemma}\label{lem:strongconv}
	If $\E[z\cdot\bx_{|k}\bx_{|k}^\top]$ is positive definite (with minimal eigenvalue $\lambda_{\min}>0$), then for any $p
	\in (0,\frac{1}{2}]$, $L_p(\cdot)$ (as defined in \eqref{eq:lp} is $2p\lambda_{\min}$-strongly convex.
\end{lemma}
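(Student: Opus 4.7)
The plan is to first prove that the univariate function $\ell_p(\beta)=(1-p)[1-\beta]_+^2+p[1+\beta]_+^2$ is $2p$-strongly convex on $\reals$, and then lift this to $L_p$ by composing with a linear map and taking expectations. For the univariate step I would show the equivalent statement that $r(\beta):=\ell_p(\beta)-p\beta^2$ is convex. Splitting $\reals$ at the breakpoints $\beta=\pm1$: on $[1,\infty)$ one computes $r(\beta)=p+2p\beta$ (affine); on $[-1,1]$ one gets $r(\beta)=1+(4p-2)\beta+(1-p)\beta^2$; and on $(-\infty,-1]$ one gets $r(\beta)=(1-p)-2(1-p)\beta+(1-2p)\beta^2$. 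Each piece is convex -- the leading coefficients $0$, $1-p$ and $1-2p$ are all nonnegative in the regime $p\in(0,\tfrac{1}{2}]$, and this is the only place where the hypothesis $p\leq\tfrac{1}{2}$ enters. A direct check of the one-sided derivatives of $\ell_p$ at $\beta=\pm1$ shows that $\ell_p'$ (and hence $r'$) is continuous at the breakpoints, so $r$ is a $C^1$ function that is convex on each piece and hence convex on all of $\reals$. This establishes $\ell_p(\beta)=p\beta^2+r(\beta)$ with $r$ convex, i.e.\ that $\ell_p$ is $2p$-strongly convex.

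Next I would lift this to the integrand of $L_p$. Fix any realization $(\bx_{|k},z,y)$ with $z\geq 0$ and any $\bu,\bv\in\reals^k$, $\alpha\in[0,1]$. Since $y\bx_{|k}^\top(\alpha\bu+(1-\alpha)\bv)=\alpha(y\bx_{|k}^\top\bu)+(1-\alpha)(y\bx_{|k}^\top\bv)$ and $y^2=1$, substituting into the strong convexity inequality \eqref{eq:strongconv} for $\ell_p$ and multiplying through by $z\geq 0$ yields
\[
z\,\ell_p\!\bigl(y\bx_{|k}^\top(\alpha\bu+(1-\alpha)\bv)\bigr)~\leq~\alpha\,z\ell_p(y\bx_{|k}^\top\bu)+(1-\alpha)\,z\ell_p(y\bx_{|k}^\top\bv)-\alpha(1-\alpha)\,p\,z\bigl(\bx_{|k}^\top(\bu-\bv)\bigr)^2.
\]

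Finally I would take expectations under $\Dcal_{\text{clean}}$. The first three terms on the right become $L_p(\alpha\bu+(1-\alpha)\bv)$, $\alpha L_p(\bu)$ and $(1-\alpha)L_p(\bv)$ respectively, while the quadratic correction becomes $\alpha(1-\alpha)\,p\,(\bu-\bv)^\top\E[z\,\bx_{|k}\bx_{|k}^\top](\bu-\bv)$, which is at least $\alpha(1-\alpha)\,p\,\lambda_{\min}\norm{\bu-\bv}^2$ by the assumed eigenvalue lower bound on $\E[z\,\bx_{|k}\bx_{|k}^\top]$. Rearranging yields exactly the defining inequality \eqref{eq:strongconv} for $L_p$ with parameter $\lambda=2p\lambda_{\min}$, which is the claim. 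The only real subtlety is the first step, where the non-smoothness of $\ell_p$ at $\beta=\pm1$ must be handled by checking $C^1$ continuity at the breakpoints, and where the hypothesis $p\leq\tfrac{1}{2}$ is essential to keep the leading coefficient $1-2p$ on the leftmost piece nonnegative; everything after that is just composition with an affine map and Fubini.
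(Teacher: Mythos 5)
Your proposal is correct and follows essentially the same route as the paper's proof: establish that the scalar loss $\ell_p$ is $2p$-strongly convex, then lift to $L_p$ by composing with $\bw\mapsto y\bx_{|k}^\top\bw$, multiplying by $z\geq 0$, taking expectations, and invoking the eigenvalue bound on $\E[z\,\bx_{|k}\bx_{|k}^\top]$. Your explicit piecewise verification that $\ell_p(\beta)-p\beta^2$ is convex (with the $C^1$ gluing at $\beta=\pm 1$) is if anything a bit more careful than the paper's stated decomposition $\ell_p(\beta)=p\beta^2+(1-2p)[1-\beta]_+^2+h(\beta)$, which as written is off by a factor of $p$ on $h$, though the underlying idea is identical.
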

\begin{proof}
	We have $L_p(\bw)=\E[z\cdot\ell_p(y\bx_{|k}^\top \bw)]$, where $\ell_p(\beta)=(1-p)[1-\beta]_+^2+p[1+\beta]_+^2$. We first argue that $\ell_p$ is $2p$-strongly convex: Indeed, it can be easily verified that $\ell_{p}(\beta)=p\beta^2+(1-2p)[1-\beta]_+^2+h(\beta)$, where $h(\beta)$ is a convex function that equals $-2\beta+1$ on $(-\infty,-1]$, $\beta^2+2$ on $[-1,+1]$, and $2\beta+1$ on $[1,\infty)$. Therefore, $\ell_p$ is the sum of the $2p$-strongly convex function $p\beta^2$ and convex functions, hence is $2p$-strongly convex itself. 
	
	Continuing, note that by the theorem's assumptions, 
	\[
	\E[z\cdot(\bx_{|k}^\top\bu-\bx_{|k}^\top \bv)^2]~=~ (\bu-\bv)^\top \E[z\cdot\bx_{|k}\bx_{|k}^\top]\cdot (\bu-\bv)\geq \lambda_{\min} \norm{\bu-\bv}^2~.
	\] 
	Combining this with the strong convexity of $\ell_p$, we get that for any $\bu\bv\in\reals^k$ and $\alpha\in [0,1]$,
	\begin{align*}
		L_p\left(\alpha \bu+(1-\alpha)\bv\right)~&=~ \E\left[z\cdot \ell_p\left(\alpha \bx_{|k}^\top \bu+(1-\alpha)\bx_{|k}^\top \bv\right)\right]\\
		&\leq~
		\E\left[z\alpha \ell_p(\bx_{|k}^\top \bu)+z(1-\alpha)\ell_p(\bx_{|k}^\top \bv)-p\alpha(1-\alpha)z(\bx_{|k}^\top \bu-\bx_{|k}^\top \bv)^2\right]\\
		&\leq~ \alpha L_p(\bu)+(1-\alpha)L_p(\bv)-p\alpha(1-\alpha)\lambda_{\min}\norm{\bu-\bv}^2
	\end{align*}
	Which by \eqref{eq:strongconv}, implies that $L_p$ is $2p\lambda_{\min}$-strongly convex. 
\end{proof}

We now continue with the proof of the theorem. Recall that
\[
L_p(\bw)=\E\left[(1-p)\cdot z\cdot[1-y\bx_{|k}^\top \bw]_+^2+p\cdot z\cdot[1+y\bx_{|k}^\top\bw]_+^2\right]~,
\]
and in particular, $L_0(\bw)=\E[z\cdot[1-y\bx_{|k}^\top \bw]_+^2]$, which by the linear separability assumption achieves a minimal value of $0$ at some $\bw^*$. For any $p\in [0,\frac{1}{2})$, let 
\[
\tilde{L}_p(\bw)~=~\E[z\cdot[1-y\bx_{|k}^\top\bw]_+^2]+\frac{p}{1-2p}\cdot g(\bw)~~~\text{where}~~~g(\bw):=\E[z\cdot[1-y\bx_{|k}^\top\bw]_+^2+z\cdot[1+y\bx_{|k}^\top\bw]_+^2]~.
\]
It is easy to check that $(1-2p)\cdot \tilde{L}_p(\bw) = L_p(\bw)$ for all $\bw$, hence a minimizer $\bw^*_p$ of $L_p(\cdot)$ is also a minimizer of $\tilde{L}_p(\cdot)$, and $\bw^*$ is a minimizer of $\tilde{L}_0(\cdot)$. Moreover, by \lemref{lem:strongconv}, $\tilde{L}_p(\bw)$ is $\frac{2p\lambda_{\min}}{1-2p}$-strongly convex (which would also imply that its minimizer $\bw^*_p$ always exists and is unique). 

Next, we argue that $\bw^*_p$ is continuous as a function of $p$ in $(0,\frac{1}{2})$: Otherwise, there is some $p_0\in (0,\frac{1}{2})$ and a sequence of values $p_1,p_2,\ldots$ converging to $p_0$, such that $\bw^*_{p_j}$ remains bounded away from $\bw^*_{p_0}$, say by some minimal distance $\delta>0$. Let us see why that is not possible: By $\frac{2p\lambda_{\min}}{1-2p}$-strong convexity of $\tilde{L}_p(\cdot)$ and the fact that $\bw^*_{p}$ is a minimizer, it would imply
\[
\tilde{L}_p(\bw^*_{p_0})-\tilde{L}_p(\bw^*_p)\geq \frac{p\lambda_{\min}}{(1-2p)}\norm{\bw^*_{p_0}-\bw^*_p}^2\geq \frac{p\lambda_{\min} \delta^2}{1-2p}
\]
if $p=p_j$ for some $j$. Similarly, by $\frac{2p_0\lambda_{\min}}{1-2p_0}$-strong convexity of $\tilde{L}_{p_0}(\cdot)$, and the fact that $\bw^*_{p_0}$ is a minimizer, we would have
\[
\tilde{L}_{p_0}(\bw^*_{p})-\tilde{L}_{p_0}(\bw^*_{p_0})\geq \frac{p_0\lambda_{\min}}{1-2p_0}\norm{\bw^*_{p}-\bw^*_{p_0}}^2\geq \frac{p_0\lambda_{\min} \delta^2}{1-2p_0}
\]
for any $p=p_j$. 
Summing the last two displayed equations for any $p=p_j$, it follows that $(\tilde{L}_{p_j}(\bw^*_{p_0})-\tilde{L}_{p_0}(\bw^*_{p_0}))+(\tilde{L}_{p_0}(\bw^*_{p_j})-\tilde{L}_{p_j}(\bw^*_{p_j}))$ is bounded away from $0$ as $j\rightarrow \infty$, but this contradicts the fact that $\tilde{L}_{p_j}(\bw^*_{p_0})- \tilde{L}_{p_0}(\bw^*_{p_0})\stackrel{j\rightarrow \infty}{\longrightarrow}0$ and $\tilde{L}_{p_0}(\bw^*_{p_j})- \tilde{L}_{p_j}(\bw^*_{p_j})\stackrel{j\rightarrow \infty}{\longrightarrow} 0$. 

Now, since $\bw^*_p$ is a continuous function of $p$ in $(0,\frac{1}{2})$, it must have a limit point $\hat{\bw}$ as $p\rightarrow 0$.
Since $\lim_{p\rightarrow 0}\tilde{L}_{p}(\bw^*) = \tilde{L}_0(\bw^*)$ and $\tilde{L}_0(\bw^*)\leq \tilde{L}_0(\bw^*_{p})\leq \tilde{L}_{p}(\bw^*_{p})\leq \tilde{L}_{p}(\bw^*)$, we must have $\lim_{p\rightarrow 0}\tilde{L}_{p}(\bw^*_{p})=\tilde{L}_0(\bw^*)$. But since $\bw^*_{p}\stackrel{p\rightarrow 0}{\longrightarrow} \hat{\bw}$ and $\tilde{L}_{p}$ is Lipschitz in any fixed neighborhood of $\hat{\bw}$ (with a uniform upper bound on the Lipschitz constant), it follows that $\tilde{L}_{p}(\hat{\bw})\stackrel{p\rightarrow 0}{\longrightarrow}\tilde{L}_0(\bw^*)$. Recalling that $\tilde{L}_0(\bw^*)=L_0(\bw^*)=0$, it follows that
\[
\lim_{p\rightarrow 0}\tilde{L}_{p}(\hat{\bw})~=~0~.
\]
Combined with the fact that $\tilde{L}_{p}(\hat{\bw})~\geq~ \E[z\cdot[1-y\bx_{|k}^\top \hat{\bw}]_+^2]~\geq~\E[\frac{z}{4}\mathbf{1}_{y\bx_{|k}^\top \hat{\bw}<\frac{1}{2}}]~\geq~\frac{l}{4}\Pr(y\bx_{|k}^\top \hat{\bw}<\frac{1}{2})$ regardless of $p$, it follows that
\[
\Pr\left(y\bx_{|k}^\top \hat{\bw}<\frac{1}{2}\right)~=~0~.
\]
Now, let $B$ be such that $\Pr(\norm{y\bx_{|k}}\leq B)=1$ (such a $B$ exists by assumption). Note that since $\bw^*_{p}\stackrel{p\rightarrow 0}{\longrightarrow} \hat{\bw}$, then for any $p>0$ sufficiently small, we must have $\norm{\bw^*_{p}-\hat{\bw}}\leq \frac{1}{4B}$. For any $y,\bx_{|k}$ such that $\norm{y\bx_{|k}}\leq B$, the event $y\bx_{|k}^\top \bw^*_{p}< \frac{1}{4}$ implies
\[
y\bx_{|k}^\top \hat{\bw}~=~ y\bx_{|k}^\top \bw^*_{p}+y\bx_{|k}^\top (\hat{\bw}-\bw^*_{p})~<~
\frac{1}{4}+\norm{y\bx_{|k}}\norm{\hat{\bw}-\bw^*_{p}}~\leq~ \frac{1}{4}+\frac{B}{4B}~=~\frac{1}{2}.
\]
But since we showed that $y\bx_{|k}^\top \hat{\bw}<\frac{1}{2}$ occurs with probability $0$, it follows that the event $y\bx_{|k}^\top \bw^*_{p}< \frac{1}{4}$ also occurs with probability $0$, namely
\[
\Pr\left(y\bx_{|k}^\top \bw^*_{p}<\frac{1}{4}\right)~=~0~.
\]
In particular, we get that for all sufficiently small $p$, the misclassification error probability of $\bw^*_{p}$ is $0$.

\subsection{Proof of \thmref{thm:p12}}
	
	We first argue that for all $p\in (0,\frac{1}{2}]$, $\bw^*_p$ must be in $\text{span}(\bu)$. Otherwise, suppose that $\bw^*_p=\alpha\bu+\br$ for some $\alpha\in \reals$ and non-zero vector $\br$ orthogonal to $\bu$. Then we argue that $\alpha\bu-\br$ (which is distinct from $\bw^*_p$) must also be a minimizer of $L_p$, because conditioned on any $y,z$, the distribution of
	\[
	\bx_{|k}^\top \bw^*_p = \bx_{|k}^\top (\alpha\bu+\br) = \alpha \bx_{|k}^\top \bu+\br^\top (I-\bu\bu^\top)\bx_{|k}
	\]
	is the same as
	\[
	\bx_{|k}^\top (\alpha\bu-\br) = \alpha \bx_{|k}^\top \bu-\br^\top (I-\bu\bu^\top)\bx_{|k}~,
	\]
	and the the value of $L_p(\cdot)$ conditioned on any $y,z$ depends just on these quantities. But since $L_p(\cdot)$ is strongly convex, its minimizer must be unique, which is a contradiction. 
	
	Next, let $\bw^*\in \reals^k$ such that $L_0(\bw^*)=\E[z\cdot [1-y\bx_{|k}^\top\bw^*]_+^2]=0$ (such a vector exists by the linear separability assumption). We argue that we can assume $\bw^*\in \text{span}(\bu)$ without loss of generality: If not, and it equals $\alpha\bu+\br$ with $\br\neq 0$ orthogonal to $\bu$, then by the same arguments as above, $\alpha\bu-\br$ also minimizes $L_0(\cdot)$. But $L_0(\cdot)$ is convex, so the average of the two points (which is $\alpha\bu$) is a minimizer of $L_0(\cdot)$, and we can take $\bw^*$ to be that minimizer.
	
	Finally, we argue that if we write $\bw^*_p$ as $\alpha_p\bu$, and $\bw^*$ as $\alpha\bu$, then the sign of $\alpha_p$ and $\alpha$ must be the same. Indeed, suppose without loss of generality that $\alpha>0$ (otherwise, flip $\bu$ to $-\bu$, and note that $\alpha$ cannot be zero, since then $\bw^*=0$ and it cannot possibly satisfy the theorem assumptions). Since $L_0(\bw^*)=\E[z\cdot [1-y\bx_{|k}^\top\alpha \bu]_+^2]=0$ and $z\geq l>0$ with probability $1$, it follows that $zy\bx_{|k}^\top \bu>0$ with probability $1$. Therefore,
	\[
	\frac{d}{d\beta} L_p(\beta \bu)~|_{\beta=0} ~=~ 
	-2(1-2p)\E[zy\bx_{|k}^\top\bu]<0
	\]
	for any $p\in (0,\frac{1}{2})$, 
	which by convexity of $\beta\mapsto L_p(\beta\bu)$ implies that the (unique) minimizer $\bw^*_p=\alpha_p \bu$ of $L_p(\cdot)$ must satisfy $\alpha_p>0$. 
	Overall, we have
	\[
	\Pr(y\bx_{|k}^\top \bw^*_p\leq 0)~=~\Pr(\alpha_p y\bx_{|k}^\top \bu\leq 0)~=~\Pr(\alpha y\bx_{|k}^\top \bu\leq 0)~=~ \Pr(y\bx_{|k}^\top \bw^*\leq 0)~=~0~.
	\]
	as required.

\section{Minimizers of the Squared Hinge Loss Can Lead to Large Misclassification Error}\label{app:hingebad}

Fix some distribution $\Dcal$ over examples $(\bx,y)\in \reals^d\times \{-1,+1\}$. If the distribution is linearly separable on the first $k$ coordinates, it is easy to see that a minimizer of the expectation of the weighted squared hinge loss, $\E\left[\frac{[1-y\bx_{|k}^\top \bw]_+^2}{\norm{\bx_{|d-k}}}\right]$ will also minimize the expected misclassification error (probability that $y\bx_{|k}^\top\bw\leq 0$), since it will return a point $\bw$ such that $1-y\bx_{|k}^\top\bw\leq 0$ with probability $1$ (and such a point exists by the linear separability assumption). However, this can badly break down when there isn't linear separability. Concretely, suppose that $\norm{\bx_{|d-k}}=1$ with probability $1$, and that we introduce label noise, so that the sign of $y$ is flipped with some probability $p$. In this case, the expected loss can be written as
\[
L_p(\bw)~=~\E_{(\bx_{|k},y)}\left[(1-p)\cdot[1-y\bx_{|k}^\top \bw]_+^2+p\cdot[1+y\bx_{|k}^\top \bw]_+^2\right]~,
\]
where the expectation is with respect to the ``clean'' labels. In this case, the minimizer of the above might have an expected misclassification error of $1/2$ (even if $p$ is arbitrarily small). To see this, it is enough to produce some finite linearly-separable dataset, such that $50\%$ of the points will be misclassified by the minimizer of $L_p(\cdot)$ (and then random label flipping will keep the error rate at $50\%$). The existence of such a dataset was essentially shown for a more general setting in \citet{long2010random}, and below we instantiate their analysis for our setting with more explicit guarantees:

\begin{proposition}\label{prop:classbad}
	For any $p\in (0,\frac{1}{12})$, there exists a dataset $\{\bx_i,y_i\}_{i=1}^{4}\subseteq \reals^2\times \{-1,+1\}$, where $\max_i \norm{\bx}_i\leq 1$, such that:
	\begin{itemize}
		\item There exists a unit vector $\bw^*$ for which $\min_i y_i \bx_i^\top \bw^*\geq p$
		\item If $\hat{\bw}$ is a minimizer of
		$L_p(\bw)~=~\frac{1}{4}\sum_{i=1}^{4}\left((1-p)\cdot[1-y_i\bx_i^\top \bw]_+^2+p\cdot[1+y_i\bx_i^\top\bw]_+^2\right)
		$,
		then $\hat{\bw}$ misclassifies two of the four points. 
	\end{itemize} 
\end{proposition}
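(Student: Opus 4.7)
The plan is to exhibit an explicit 4-point dataset in $\reals^2$ (parameterized by $p$) and verify both claims by direct computation, following the template of \citet{long2010random}. The underlying idea is to pit two ``penalizer'' points placed at large geometric margin from some candidate separator $\bw^*$ against two ``puller'' points placed at small margin whose $y_i \bx_i$ point in a substantially different direction; the contribution of the pullers in the noisy first-order optimality condition, combined with the bounded gradient of the squared hinge loss at the penalizers, rotates the minimizer $\hat{\bw}$ just far enough from $\bw^*$ that the signs of $y_i \bx_i^\top \hat{\bw}$ on one of the two pairs flip, producing exactly two misclassified points among the four. Condition~(1), linear separability with margin $\geq p$, is immediate from the construction via an explicit $\bw^*$.

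For condition~(2), note that $L_p$ is strongly convex (by \lemref{lem:strongconv} applied to the empirical distribution, since the four points span $\reals^2$), so the minimizer $\hat{\bw}$ is unique and characterized by the Euler equation
\[
\sum_{i=1}^{4} \ell_p'(\beta_i)\, y_i \bx_i ~=~ 0,\qquad \beta_i ~:=~ y_i\bx_i^\top \hat{\bw},
\]
where $\ell_p'(\beta) = -2(1-p)[1-\beta]_+ + 2p[1+\beta]_+$ is piecewise linear with kinks at $\pm 1$. I would exploit any symmetry in the construction to reduce the two-dimensional system to essentially a one-dimensional equation, guess which linear piece of $\ell_p'$ each $\beta_i$ lies in, solve the resulting linear system in closed form, and verify consistency with the guess. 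The sign pattern of the $\beta_i$ can then be read off directly from this closed-form expression, and the claim that exactly two are non-positive becomes an explicit scalar inequality.

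The main obstacle is engineering the construction so that the algebra actually produces the desired two-sign-flip for the stated range $p \in (0,\tfrac{1}{12})$. Because the squared hinge loss is strongly convex (unlike plain hinge, used in Long--Servedio's argument), the deviation of $\hat{\bw}$ from $\bw^*$ is bounded, and one must calibrate the norms and positions of the pullers and penalizers (both as functions of $p$) so that two signs flip but not four, and so that the dataset remains separable with margin at least $p$. The constant $\tfrac{1}{12}$ should emerge as the threshold where the scalar inequality from the previous step becomes tight: below this value the pull is strong enough to flip two signs while the penalizers are still on the right side of the hyperplane $\bw^{\perp}$; above it, either the separability margin degrades below $p$ or a third sign flips, and the delicate balance that the construction exploits breaks down.
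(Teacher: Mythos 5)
Your plan coincides with the paper's own proof: the paper instantiates exactly this Long--Servedio-style construction (all labels $+1$, with the doubled small-margin ``pullers'' $\bx_1=\bx_2=(p,-p)$, the large-margin point $\bx_3=(1,0)$, and $\bx_4=(p,5p)$, separated by the unit vector $(1,0)$ with margin $p$), then uses strong convexity and the piecewise-linear stationarity condition $\sum_i \ell_p'(y_i\bx_i^\top\hat{\bw})y_i\bx_i=0$ to solve for $\hat{\bw}$ in closed form after guessing which piece each margin lies in, finding $\bx_1^\top\hat{\bw}=\bx_2^\top\hat{\bw}=-\frac{1-16p+48p^2}{9+24p}<0$ exactly when $p<\frac{1}{12}$, as you predicted the threshold would arise. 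The only thing missing from your version is the explicit dataset and the routine consistency check of the guessed linear pieces, which is precisely the computation the paper carries out.
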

\begin{proof}
	
	Let $y_1=y_2=y_3=y_4=1$, and
	\[
	\bx_1=\bx_2=\left(\begin{matrix}p\\-p\end{matrix}\right)~,~\bx_3=\left(\begin{matrix}1\\0\end{matrix}\right)~,~\bx_4=\left(\begin{matrix}p\\5p\end{matrix}\right)~.
	\]
	It is easily verified that $\bw^*=(p,0)$ satisfies $\min_i y_i \bx_i^\top \bw^*\geq p$. Also, by \lemref{lem:strongconv}, it is easily verified that $L_p(\cdot)$ is strongly convex. Therefore, the minimizer is unique, and we claim that for any small enough $p>0$, it equals
	\[
	\hat{\bw}=\left(\frac{5-16p}{3+8p}~,~\frac{1-p}{3p(3+8p)}\right)~.
	\]
	In that case, for the two points $\bx_1=\bx_2=(p,-p)$, 
	\[
	\bx_1^\top\hat{\bw}~=~\bx_2^\top \hat{\bw}~=~-\frac{1-16p+48p^2}{9+24p}~,
	\]
	which is negative for any small enough $p\in (0,\frac{1}{12})$, hence two of the four points are misclassified. 
	To verify that $\hat{\bw}$ above is indeed the minimizer, let $\ell_p(z):=(1-p)[1-z]_+^2+p[1+z]_+^2$ (so that $L_p(\bw)=\E_{(\bx,y)}[\ell_p(y\bx^\top\bw)]$), and note that 
	\begin{equation}\label{eq:objgrad}
		4\cdot\nabla L_p(\hat{\bw})=4\cdot \nabla L_p(\hat{w}_1,\hat{w}_2)~=~2p\ell'_p(p(\hat{w}_1-\hat{w}_2))\left(\begin{matrix}1\\-1\end{matrix}\right)+\ell_p'(w_1)\left(\begin{matrix}1\\0\end{matrix}\right)+p\ell'_p(p(\hat{w}_1+5\hat{w}_2))\left(\begin{matrix}1\\5\end{matrix}\right),
	\end{equation}
	where 
	\[
	\ell'_p(z)~=~-2(1-p)[1-z]_++2p[1+z]_+~.
	\]
	A tedious but routine calculation shows that for any $p\in (0,\frac{1}{12})$, it holds that $p(\hat{w}_1-\hat{w}_2)\in [-1,0)$, $p(\hat{w}_1+5 \hat{w}_2)\in [0,1]$, and $\hat{w}_1>1$. Plugging in the corresponding expressions for $\ell_p'(z)$ into \eqref{eq:objgrad}, we get the $\mathbf{0}$ vector. Hence, $\nabla L_p(\hat{\bw})$, and since $L_p(\cdot)$ is convex, it follows that $\hat{\bw}$ is indeed its minimizer. 
\end{proof}

\end{document}